\newcolumntype{?}{!{\vrule width 2pt}}
\newcolumntype{P}[1]{>{\centering\arraybackslash}p{#1}}
\newtheorem{proposition}{Proposition}
\newtheorem*{definition}{Definition}
\renewcommand{\Re}{\mathbb{R}}
\newcommand{\deltaX}[3]{\bm{p}_{#2}}
\newcommand{\deltaT}[3]{{\theta_{#2}}}
\newcommand{\Rot}[1]{R_{#1}}
\newcommand{\Trans}[1]{\bm{p}^{#1}}
\newcommand{\TMat}[2]{
\begin{bmatrix}
    #1 & #2\\
    \bm{0} & 1
    \end{bmatrix}
}
\newcommand{\Fig}{Fig. }
\newcommand{\Sec}{Section }
\newcommand{\Algo}{Algorithm }
\newcommand{\disp}[1]{\bm{p}_{#1}}
\newcommand{\Matlab}{MATLAB\textsuperscript{\textregistered}}
\newcommand{\Tab}{Table }
\newcommand{\TriSoRo}{TerreSoRo-III}
\newcommand{\TetraSoRo}{TerreSoRo-IV}
\newcommand{\Framework}{Model-Free Control Framework}
\newcommand\copyrighttext{%
\centering
  This paper has been accepted for publication in the \textit{IEEE Transactions on Robotics.}
  Please cite as: \\ C. Freeman, A.N. Mahendran, and V. Vikas, ``Environment-centric learning approach for gait synthesis in terrestrial soft robots," in \textit{IEEE Transactions on Robotics (T-RO)}, 2025.}
\newcommand\copyrightnotice{%
\begin{tikzpicture}[remember picture,overlay]
\node[anchor=north,yshift=-5pt] at (current page.north) {\fbox{\parbox{\dimexpr\textwidth-15\fboxsep-\fboxrule\relax}{\copyrighttext}}};
\end{tikzpicture}%
}
\newcommand\multimediatext{%
\centering
 The multimedia attachment can be accessed at \url{https://www.youtube.com/watch?v=ujxK8EWahjg}}
\newcommand\multimediaatt{%
\begin{tikzpicture}[remember picture,overlay]
\node[anchor=south,yshift=10pt] at (current page.south) {\fbox{\parbox{\dimexpr\textwidth-15\fboxsep-\fboxrule\relax}{\multimediatext}}};
\end{tikzpicture}%
}
\begin{document}

\title{Environment-Centric Learning Approach for Gait Synthesis in Terrestrial Soft Robots}

\author{Caitlin Freeman, Arun Niddish Mahendran, and Vishesh Vikas$^{1}$
\thanks{*This work is supported by the National Science Foundation under Grant No. 1830432}
\thanks{$^{1}$Caitlin Freeman, Arun Niddish Mahendran, and Vishesh Vikas are with the Agile Robotics Lab (ARL), University of Alabama, Tuscaloosa AL 36487, USA
        {\tt\small clfreeman@crimson.ua.edu, anmahendran@crimson.ua.edu, vvikas@ua.edu}}%
}

\maketitle
\copyrightnotice
\multimediaatt
\begin{abstract}
Locomotion gaits are fundamental for control of soft terrestrial robots. However, synthesis of these gaits is challenging due to modeling of robot-environment interaction and lack of a mathematical framework. This work presents an environment-centric, data-driven, and fault-tolerant probabilistic Model-Free Control (pMFC) framework that allows for soft multi-limb robots to learn from their environment and synthesize diverse sets of locomotion gaits for realizing open-loop control. Here, discretization of factors dominating robot-environment interactions enables an environment-specific graphical representation where the edges encode experimental locomotion data corresponding to the robot motion primitives. In this graph, locomotion gaits are defined as simple cycles that are transformation invariant, i.e., the locomotion is independent of the starting vertex of these periodic cycles. Gait synthesis, the problem of finding optimal locomotion gaits for a given substrate, is formulated as Binary Integer Linear Programming (BILP) problems with a linearized cost function, linear constraints, and iterative simple cycle detection. 
Experimentally, gaits are synthesized for varying robot-environment interactions. Variables include robot morphology - three-limb and four-limb robots, \TriSoRo~and \TetraSoRo; substrate - rubber mat, whiteboard and carpet; and actuator functionality - simulated loss of robot limb actuation.
%
On an average, gait synthesis improves the translation and rotation speeds by 82\% and 97\% respectively. The results highlight that data-driven methods are vital to soft robot locomotion control due to complex robot-environment interactions and simulation-to-reality gaps, particularly when biological analogues are unavailable.

\end{abstract}

\begin{IEEEkeywords}
soft robotics, gait synthesis, gait exploration, locomotion, terrestrial robots, learning, environment-centric, unknown environment, fault-tolerant learning
\end{IEEEkeywords}

\section{Introduction}
\IEEEPARstart{S}{oft} robots are promising for terrestrial locomotion applications given their inherent mechanical intelligence. This intelligence permits complex robot behavior and surprising adaptability in unstructured environments, even with relatively simple robot designs and controllers \cite{kim_soft_2013,tolley_resilient_2014, vikas_design_2016}. However, the locomotion of soft robots is significantly more challenging to model and control due to the complexity of their nonlinear kinematics and dynamics; this is further complicated by their greater dependence on environmental interactions \cite{environment}. Therefore, modeling of soft robot locomotion becomes difficult and computationally intensive, and significant simulation-to-reality gaps remain \cite{pinskier_bioinspiration_2022, faure_sofa:_2012}. This has motivated reduced-order modeling techniques, such as rigid link-based approximations with stick-slip contact transitions\cite{gamus_understanding_2020} and shape-centric geometric modeling with variable friction models \cite{chang_shape-centric_2021}. Alternatively, soft robots can be controlled without a kinematic model with methods ranging from simple trial-and-error approaches informed by intuition or biology \cite{doi:10.1073/pnas.1116564108} to more robust learning-based model-free algorithms \cite{umedachi_softworms:_2016}. 

Whether model-based or empirically derived, gaits remain the predominant basis of locomotion control for soft robots. Gaits are effective in reducing the control space to a smaller and tractable set of cyclic patterns that can be modulated and combined to realize complex tasks like path planning. In fact, multi-gait closed-loop path planning of one of the robots presented in this paper has recently been shown to be an effective locomotion control strategy \cite{Mahendran_IROS_2023}. However, the selection of soft robot locomotion gaits remains an open problem. 

This paper presents a generic data-driven approach for discovering a diverse set of open-loop gaits that can be used for the locomotion control of soft robots.
This method is particularly well-suited to soft robots for which there is a lack of biological analogue and/or kinematic model. The resulting open-loop gaits are useful not only as fundamental components of real-time soft robot locomotion path planning, but also as a vehicle to explore the degree to which the environment (e.g., substrate) and manufacturing inconsistencies affect the locomotion behavior of soft robots. This morphology-agnostic model-free framework overcomes challenges of current model-based methods by directly encoding robot-environment interactions, capturing difficult-to-model factors like stick-slip transitions and manufacturing inaccuracies. Furthermore, it addresses the fundamental problems relating to \textit{gait definition} (how to mathematically define a gait, e.g., crawl vs. inch), \textit{gait synthesis and exploration} (how to determine and evaluate the underlying useful gaits), and \textit{gait characterization} (what type of gait motion is produced, e.g., rotation or translation).
\subsection{Relation to Previous Work}
This research builds upon prior work on the \underline{M}odel-\underline{F}ree \underline{C}ontrol (MFC) framework\cite{vikas_model-free_2015, umedachi_softworms:_2016,  vikas_design_2016}. The MFC framework discretizes robot-environment frictional interactions and uses a data-driven approach to experimentally explore all possible motion primitives of a soft robot on a planar surface. Then, optimal gaits for locomotion along a line are identified by optimizing the translation of a sequence of motion primitives. In these previous papers, the MFC framework has been shown to be a robust gait synthesis method for soft robots with varying morphology (e.g., multi-limb vs. rectangular and symmetrical vs. asymmetrical), actuator type (motor \cite{vikas_model-free_2015} vs. shape memory alloy\cite{vikas_design_2016}), number of actuators (two vs. three and loss-of-limb scenarios), tendon routing and placement, and environment (carpet vs. hard floor). In these experiments, the optimal gait cycles varied depending on changing parameters such as different surface conditions and variations in the robot's morphology. Furthermore, voltage-varying experiments suggested that gait speed for a given synthesized gait pattern can be varied and optimized. However, the method in the current state does not scale to systems with more than three actuators, does not account for uncertainty/repeatability, and is not applicable to locomotion on a plane, i.e., $SE(2)$.

\subsection{Contributions} 
This research contains the following contributions.
\begin{enumerate}
    \item A new \textit{probabilistic MFC framework} (pMFC) is presented, which augments the previous work by constructing a more robust data collection process using the proposed stochastic Hierholzer's algorithm and probabilistic model of motion primitives. These changes account for transient effects and inherent variability, thereby enabling the selection of more predictable and precise gaits. 
    \item The graphical nature of the framework is used to define \textit{locomotion gaits}  in $SE(2)$ as periodic simple cycles that are transformation invariant, i.e., the rotation and translation of the simple cycle are the same irrespective of the starting vertex. It is proved that, under this definition, gaits can be fundamentally characterized as either translation gaits (where the cumulative rotation of the simple cycle vanishes) or rotation gaits (where all translations of the simple cycle edges are zero).
    \item \textit{Gait synthesis} is performed for both three-limb and four-limb motor-tendon actuated soft robots by optimizing a  weighted linear cost function formulated as a \underline{B}inary \underline{I}nteger \underline{L}inear \underline{P}rogramming (BILP) problem with linear constraints and iterative cycle detection. Despite being NP-hard, the problem is tractable and can be iteratively solved using linear programming relaxation sub-problems. Comparison of this approach with an exhaustive search of the nonlinear cost function demonstrates its \textit{scalabilty and near-optimality}. 
    \item The synthesized gaits are \textit{experimentally validated} for different robot-environment interactions, i.e., three-limb vs. four-limb morphologies and three different substrates. Additionally, the \textit{fault-tolerant} nature of the framework is validated by testing re-synthesized gaits in the simulated scenario of single-actuator failure without need for re-learning. 
\end{enumerate} 



\subsection{Paper Outline}
The paper is structured as follows: Section \ref{Sec:Related}~provides a background overview of related approaches to soft robot gait control; Section \ref{Sec:Basics}~introduces the pMFC with an example;  Section \ref{Sec:GaitDefinition} discusses gait definition for soft robots with unknown body kinematics and dynamics; Section \ref{Sec:Synthesis} presents the gait synthesis algorithm using optimization and learning; Section \ref{Sec:ExpSetup} details the experimental setup and procedure, including robot fabrication and visual tracking; Section \ref{Sec:Results} shows the results of the learning experiments, including detailed gait sequence tables and trajectory plots, comparison of BILP algorithm to exhaustive nonlinear optimization, and gait characterization; and Section \ref{Sec:Conclusion} discusses the findings and future work.






 \section{Related Works: Learning Locomotion Gaits for Soft Robots}
 \label{Sec:Related}
Due to the strong influence of bioinspiration, the most common method for soft robot gait selection involves recreating the gait behavior exhibited in nature by the robot's biological analogue (i.e., biomimicry). Examples include snake-inspired locomotion with pneumatic actuation \cite{8593404}, earthworm-inspired peristalsis with shape-memory alloy actuation 
\cite{seok_meshworm:_2013} and inchworm-inspired locomotion with pneumatic actuation \cite{doi:10.1089/soro.2017.0042}. By using a combination of known biological gait behavior and intuitive manual control or tuning, researchers can simplify gait control. The gait parameters in these examples are tuned using sparse experimental sampling, iterative learning control, and neural networks, respectively. However, biomimetic and hand-picked intuitive gaits can naturally suffer from suboptimality and discrepancies between expected biomimetic behavior and actual behavior. Additionally, relying on these methods can artificially restrict the design space of soft robots to biomimicry or overly simple designs to facilitate intuitive or ad hoc gait selection. 
 
 On the other hand, finite element-based and model-based approaches are limited by their computational complexity and issues that arise from the high sensitivity of soft robots to manufacturing inaccuracies and environmental conditions, as well as the difficulty in capturing dynamic friction effects (e.g., sliding and stick-slip transitions) \cite{coevoet_soft_2019, bern_trajectory_2019, schaff_soft_2022, saunders_experimental_2011, https://doi-org.libdata.lib.ua.edu/10.1002/aisy.201900186}. Similar concerns, particularly the intractability of complex designs, also affect the more efficient but less accurate evolutionary simulators like VoxCAD \cite{1241678}. These issues of model fidelity and accuracy affect the resulting experimental performance when transferring the control policies to real soft robots, often cited as the simulation-to-reality gap \cite{pinskier_bioinspiration_2022}. In essence, model-based methods are currently insufficient to accurately capture the robots' interactions with the environment, despite the richness of such interactions being a key component of soft robot locomotion \cite{Mazzolai_2022}.

 Recently, many soft roboticists have urged the incorporation of data-driven methods, including machine learning approaches, to augment model-based control and overcome the challenges of the high dependence on the environment, manufacturing inconsistencies, and variation in material properties \cite{Della_2023_IEEECSM}. Modeling limitations have also inspired a large field of research into learning-based methods and data-informed model-based methods, such as reinforcement learning (RL) and central pattern generators (CPG). However, these learning-based approaches are largely focused on manipulators and can often suffer from high computational costs and large dataset requirements \cite{ https://doi.org/10.1002/aisy.202100165}. Additionally, many soft robots exhibit material fatigue, low repeatability, and instances of mechanical failure requiring intervention and maintenance, hindering experimental learning \cite{10136428}.

 \subsection{Reinforcement Learning}
Reinforcement learning (RL) typically involves formulating the robot-environment interaction as a Markov decision process, composed of: 1) robot states; 2) actions to transition between states; 3) the transition probabilities of each action; and 4) the corresponding rewards following each transition. A common goal is then to find an optimal policy via a balance of exploration and exploitation that maximizes the cumulative reward (i.e., the return). Although RL can be used as a model-free method to find optimal gaits, the large dataset and training times required often prevent experimental training of the RL parameters \cite{robotics8010004}.  RL with simulation training (i.e., requiring a robot model) has been implemented to optimize gaits for a quadruped with soft legs and a rigid frame \cite{JI2022102382}, a rolling tensegrity ball robot \cite{10.1109/ICRA.2017.7989079}, and a swimming snake-like soft robot \cite{9561145}. In these selected examples, RL convergence occurred in 500 episodes (i.e., simulated executions of a robot gait until the robot falls), a couple hundred episodes (equivalent to a few hours of continuous experimental training), and 200 episodes (each with 500 steps of 400 ms), respectively. Such extensive exploration results in the infeasibility of experimental training due to a combination of the time requirement and the inevitable effects of the wear and tear imposed on the robot. Experimental training remains impractical even when using expedited training (e.g., the soft actor-critic method). RL and other machine learning approaches have the potential to capture non-linearities and robot-environment interactions in a way that model-based methods currently cannot \cite{Mazzolai_2022}. Nevertheless, ``model-free" RL for gait optimization in practice largely requires a model of the soft robot (with simplifying assumptions for tractability) to perform training in simulation, resulting in simulation-to-reality gaps.

The presented research overcomes these challenges by discretizing the set of all robot states and assuming quasi-static transitions between states to facilitate exhaustive experimental learning of rewards. This simplified problem closely resembles an idealized reinforcement learning problem, where, unlike the previous examples, the experimental rewards for all states and actions can be exhaustively tabulated without any prior knowledge of the robot dynamics, such as simulation models for training or predefined stable gaits for initialization. From this, optimal periodic gaits (i.e., policies) can then be derived via offline optimization. The connections between this type of comprehensive offline approach and modern reinforcement learning algorithms are explored in \cite{sutton2018} and \cite{szepesvari2010}.

\subsection{Central Pattern Generators}
 A central pattern generator (CPG) is a neural circuit in which coupled oscillators produce a group periodic output in response to simple non-rhythmic inputs. CPG models applied to robot locomotion control are generally formulated as coupled differential equations that are specific to the robot type (e.g., fish, snake, lamprey, quadruped, hexapod, etc.) \cite{IJSPEERT2008642}. Although this method has been shown to exhibit many advantages (including stable limit cycles, resilience to disturbances, smooth gait modulations and transitions, and reduction of control parameters), they typically rely on either predefined knowledge of the model (e.g., known gait behavior and kinematic parameters of the animal from which the robot is inspired) or extensive evolution or learning in simulation to derive the model details \cite{IJSPEERT2008642}. For example, a CPG model for a 12-motor spine-inspired tensegrity robot was tuned in simulation (requiring a model of the robot) to achieve various gaits over irregular terrain after sweeping over 100 parameters in 24,000 Monte Carlo trials; experimental implementation then required hand tuning or online optimization \cite{doi:10.1089/soro.2015.0012, 7354134}. Ishige et al. proposed an RL-optimized CPG framework for caterpillar-like soft robots that can generate both periodic and aperiodic locomotion in a variety of robot-environment interactions and is scalable to a large number of sensors and actuators;  however, this method is only tested in simulation and requires a simplified model of the robot dynamics \cite{doi:10.1089/soro.2018.0126}. CPG therefore encounters limitations of extensive training requirements and/or prior knowledge of the robot dynamics.



\subsection{Model-free Gait Optimization}
In the absence of more accurate and efficient soft robot simulation tools, there is a current need for truly model-free control methods with experimental training (i.e., black-box learning). One possible solution is to use evolutionary algorithms to tune sinusoidal control inputs. This was successfully implemented for the gait optimization of a six-actuator knife-fish inspired soft robot, greatly improving performance over the hand-designed controller; while having the advantage of capturing dynamic effects of the robot locomotion without a dynamic model, this method is limited by predefined gait functions and extensive training time (on the order of hours) leading to many instances of hardware failure (including the death of 17 servomotors) \cite{10.1145/3205455.3205583}. Hamill et al. discretized the motion primitives of a six-limb modular robot with binary fluidic actuators (assuming no-slip contacts) for straight-line motion gait synthesis; the resulting motions were estimated with a simple model and formulated as edges of a graph to find gaits as the shortest path cycles \cite{soft_modular}. However, this approach is limited by the lack of experimental training (leading to unexpected performance), suboptimal searches over the graph, and hardware failure during testing. Nevertheless, the use of graph-based approaches that discretize the motion primitives has the potential to simplify the model-free gait synthesis problem, especially for robots with a small number of actuators. Additionally, the states and actions presented in RL can be analogously represented in a graph to facilitate the search for optimal gaits.

\section{Probabilistic \Framework} \label{Sec:Basics}

As a requisite background for this paper, an informal and elementary introduction of required concepts for directed graphs is provided. The reader may refer to \cite{bollobas_modern_2013} for additional details. %
A directed graph (or a digraph) $G$ is defined as an ordered pair $(V(G), E(G))$ with a nonempty set $V(G)$ of $n$ vertices and a set $E(G)$ of $m$ directed edges. An incidence matrix $B=B(G)=(b_{ij})$ of $G$ is a ${n \times m}$ matrix that associates each edge to an ordered pair of vertices:
\begin{align}
    b_{ij} &=\left\{\begin{array}{rl}
    1 & v_i\mathrm{~is~initial~vertex~of~}e_i\\
    -1 &v_i\mathrm{~is~terminal~vertex~of~}e_i\\
    0 & \mathrm{otherwise.}
    \end{array}\right.
\end{align}
\indent A complete digraph is a digraph in which every pair of vertices is connected by bidirectional edge with no loops. In addition, we also define initial and terminal matrices $B^i, B^t$ as

\begin{align}
\scalebox{0.9}{
$B = B^i-B^t,~$}
{\footnotesize 
\begin{gathered}
   b^i_{ij} =\left\{\begin{array}{rl}
    1 & v_i\mathrm{~is~initial~vertex~of~}e_i\\
    0 & \mathrm{otherwise}
    \end{array}\right.\\
    \quad \, b^t_{ij} =\left\{\begin{array}{rl}
    1 & v_j\mathrm{~is~terminal~vertex~of~}e_i\\
    0 & \mathrm{otherwise.}
    \end{array}\right.
\end{gathered}
}
\end{align}

A closed walk consists of a sequence of vertices starting and ending at the same vertex, where consecutive vertices in the sequence are connected by a directed edge. A simple cycle is a closed walk where no vertices or directed edges are repeated, other than the start and the end vertex. Let $L$ be a simple cycle in $G$ with a given sequence of vertices $V(L)=v_1v_2\cdots v_lv_{(l+1)}$ s.t. $v_1=v_{l+1}$
This can also be represented as a sequence of edges $E(L)=e_1e_2\cdots e_l$, where $e_i=v_iv_{i+1}$. While both of these representations contain the notion of a starting vertex $v_1$ or edge $e_1$ (i.e., a starting point to the simple cycle), a cycle can also be identified without this by a $(m\times 1)$ vector $\mathbf{z}=(z_i)$ where
\begin{equation} \label{Eqn:SimpleCyleVector}
z_i = \left\{\begin{array}{cc}
1 & \mathrm{if\ }z_i \in E(L)\\
0 & \mathrm{otherwise.}
\end{array}
\right.
\end{equation}
The vector $\mathbf{z}$ is a simple cycle only if it satisfies the constraints that (1) at each vertex, the number of incoming edges is equal to the outgoing edges,  $B\mathbf{z}=0$, (2) the out-degree of each vertex (i.e., the number of edges for which each vertex is an initial vertex) is at most one, $B^i\mathbf{z}\leq 1$, and (3) it does not contain a sum of unconnected simple cycles. Hence, the simple cycle subspace $C_1(G)$ with linear constraints is defined as
\begin{equation}
{\small C_1(G)=\left\{
    \begin{gathered}
    \mathbf{z}\in \mathbb{Z}_2^{(m\times 1)} :\quad 
    B\mathbf{z}=0, \quad B^i\mathbf{z}\leq 1\\
    \nexists \mathbf{z}_1, \mathbf{z}_2 \mathrm{~s.t.~} \mathbf{z} = \mathbf{z}_1+\mathbf{z}_2,~B\mathbf{z}_1=B\mathbf{z}_2=0
    \end{gathered} 
    \right\}}.
    \label{Eqn:SimpleCycleCons}
\end{equation}

\noindent The weights associated with each of its edges $\bm{w}_i=\bm{w}(e_i)$ form $W(E)$, the set of edge weights.

\subsection{Soft Robot and Environment-specific Weighted Digraph} \label{subsec:digraph}
For a terrestrial soft robot, locomotion occurs when an imbalance of forces acting on the robot results in a net planar motion of the center of mass. This locomotion can be generally divided into three categories: static, quasi-static, and dynamic. In static locomotion, the robot is statically stable during all locomotion stages and will remain in its position if the actuation is suspended or locked. Dynamic locomotion generally exploits different strategies, including static instability, to achieve greater changes in inertia and higher magnitudes of speed. Accordingly, quasi-static locomotion can be described as locomotion that includes some dynamic behavior but is statically stable at a certain stage of motion. In the context of this research, both static and quasi-static locomotion can be modeled and explored using weighted digraphs; fully dynamic motion (e.g., rolling) is not currently explored in this work. 

\textit{Robot states} are defined as discrete statically stable robot configurations. Every soft robot thus possesses a multitude of robot states, each with varying shapes and postures that can change the distribution and magnitude of forces on the system. \textit{Motion primitives} refer to the possible transitions between these robot states. As these transitions can lead to force imbalances that result in locomotion, motion primitives effectively discretize the factors dominating the robot-environment interaction.  For this discussion, we focus on formulating a graphical framework that methodically associates knowledge of motion primitives with environment feedback. The identification and exploration of these robot states is outside the scope of this research.

To construct the digraph, robot states are modeled as vertices $V(G)$, and motion primitives are modeled as directional edges $E(G)$ that connect the vertices. The resulting translation $\disp{i}\in\Re^{2\times 1}$ and rotation $\theta_i$ associated with each motion primitive is then encoded in the corresponding edge weight $\bm{w}(e_i)$, recorded in the coordinate system of the initial vertex. These motions and edge weights are environment-specific and depend on the type of surface of interaction. Hence, \textit{these digraph weights are fundamental to robot adaptation and learning in different environments.} To account for variances in actuation and robot-environment interactions in addition to measurement noise, each edge weight $\bm{w}_i$ is modeled as a normal distribution with mean $\bm{\mu}_{i} \in \Re^{3\times 1}$ and covariance matrix ${\Sigma}_{i} \in \Re^{3 \times 3}$:
\begin{align} 
\bm{w}_i = \mathcal{N}\left(\bm{\mu}_i,\Sigma_i\right)
, \quad %
\bm{\mu}_i = \begin{bmatrix}\disp{i}\\ \theta_i
\end{bmatrix}, \quad 
{\Sigma}_i = \begin{bmatrix}
{\Sigma}_{pp} &\Sigma_{p\theta}\\
\Sigma_{\theta p} & \Sigma_{\theta\theta}
\end{bmatrix}
\label{Eqn:ArcWeight}.
\end{align}

The pMFC framework is summarized as a weighted robot-environment specific digraph $(G,W)$ with the following elements:
 \begin{enumerate}
    \item The structure of digraph $G$ is determined by the robot:
    \begin{itemize}
        \item $n$, number of robot states or graph vertices $V(G)$
        \item $m$, number of motion primitives or edges $E(G)$
        \item $B(E)$, incidence matrix of dimension $n \times m$
    \end{itemize}
    \item The probabilistic digraph edge weights, $W(E)=\{P(E),\Theta(E),S(E)\}$, correspond to the resulting locomotion of motion primitives and are environment-specific:
    \begin{itemize}
        \item $P(E)$, mean displacement matrix of dimension $2 \times m$
        \item $\Theta(E)$, mean rotation matrix of dimension $1 \times m$
        \item $S(E)$, noise covariance matrix of dimension $ 3\times 3\times m$ where $S(e_i)=\Sigma_i(e_i)$
        \item $S_p(E)$, translation covariance trace matrix of dimension $1\times m$ where $\displaystyle S_p(e_i) = \mathrm{tr}\left(\Sigma_{pp}(e_i)\right)$
        \item $S_\theta(E)$, rotation covariance matrix of dimension $1\times m$ where $S_\theta(e_i) = \Sigma_{\theta \theta}(e_i)$.
    \end{itemize}
\end{enumerate}
While discretizing the robot's states does not affect its physical deformability and adaptability, it is essential to select states that accurately capture the key factors governing its interaction with the environment to prevent oversimplification.
\subsection{Example Robot}
A deformable two-limb soft robot, shown in \Fig \ref{Fig:ExampleGraph}, is presented to gain a better understanding of the framework. %
The robot comprises a soft deformable body and a flexible hub that houses the motors of the motor-tendon actuators. The embedded tendons connected to the motors allow for controlled curling or uncurling of each limb. The limb states correspond to the discrete configurations when the limb is curled or uncurled, given as
\begin{equation}
\mathrm{limb~state} = \left\{ \begin{array}{ll}
0 & \mathrm{for\ limb\ uncurled}\\
1 & \mathrm{for\ limb\ curled.}
\end{array}\right.
\label{Eqn:RobotState}
\end{equation}

\begin{figure}[ht]
\centering
\subfloat[][]{\includegraphics[page=1,width=.53\columnwidth,trim= 3.3cm 2.6cm 5cm .5cm, clip=true]{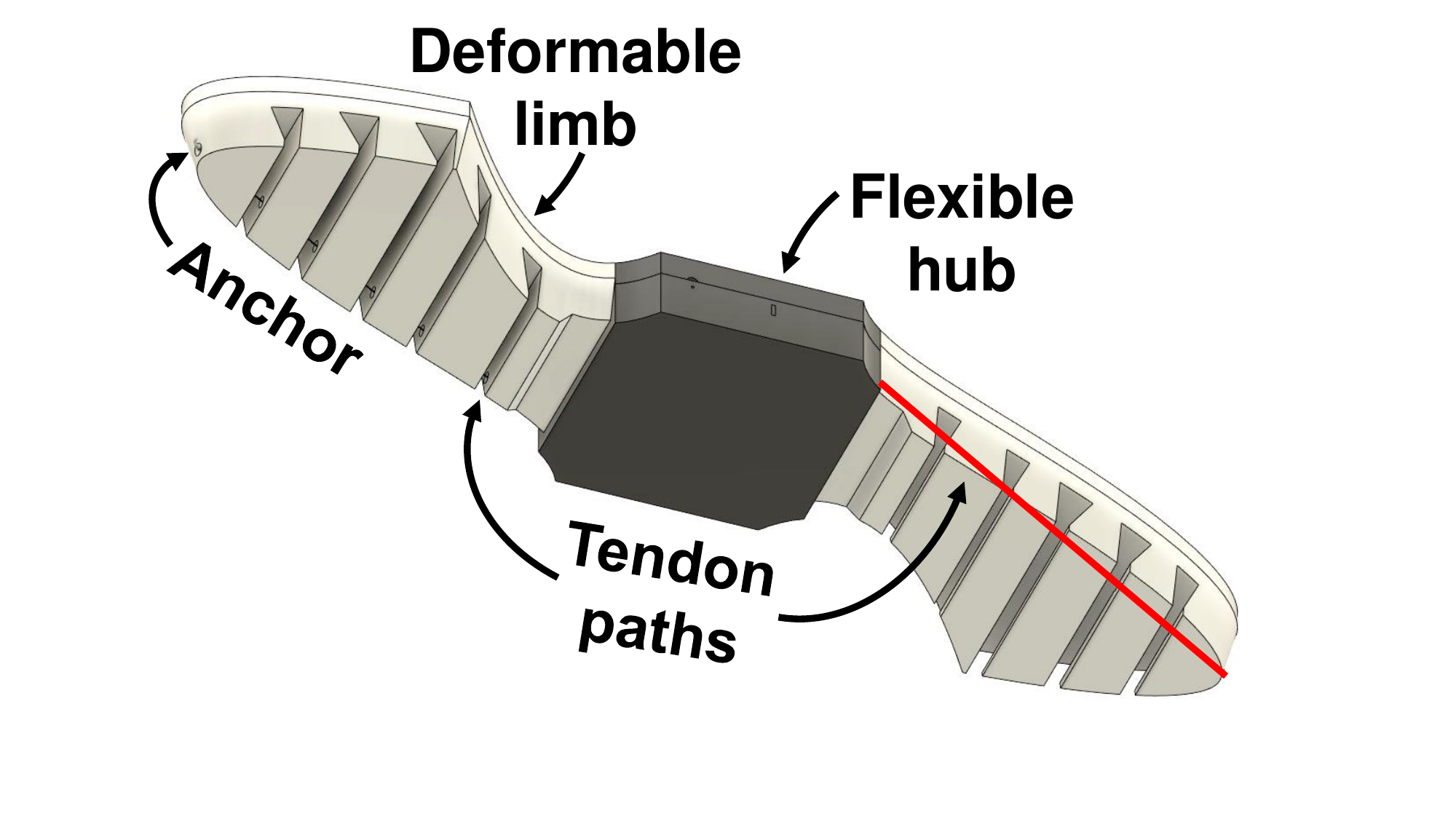}}\hfill
\subfloat[][]{\includegraphics[page=4,width=.38\columnwidth,trim= 7cm .7cm 7.5cm .7cm, clip=true]{figures/Example_Robot_Figures.pdf}}\par
\subfloat[][]{\includegraphics[page=2,width=.6\columnwidth,trim= .2cm 3.2cm .2cm .2cm, clip=true]{figures/Example_Robot_Figures.pdf}}\par
\label{Fig:ExampleRobot}
\caption{(a) Example two-limb soft robot CAD drawing with corresponding (b) digraph (four discrete robot states and 12 motion primitives) and (c) CAD-rendered robot configurations. }
\label{Fig:ExampleGraph}
\end{figure}
For a $n_l=2$-limb robot with $n_s=2$ states per limb, the robot can exist in $n = \left(n_s\right)^{n_l}=4$ discrete states which are different permutations of the limb states - $\{00\}, \{01\}, \{10\}, \{11\}$, or $V=\{v_1,v_2,v_3,v_4\}$. These states correspond to the different robot shapes and represent vertices of the digraph in \Fig \ref{Fig:ExampleGraph}. For this complete digraph, where transitions between all vertices are allowed, the number of motion primitives is $\displaystyle m=n(n-1)=12$. The edges $E=\{e_1,e_2,\cdots,e_{12}\}$ are defined using the incidence matrices $B,B^i$ as
\newcommand{\po}{\phantom{--}1}
\newcommand{\zz}{\phantom{--}0}
\newcommand{\mo}{\phantom{l}-1}
\begin{equation*}
{\scriptsize B = \left[\begin{array}{l}
    {\phantom{-}1 \po \po} {\mo \zz \zz} {\mo \zz \zz} {\mo \zz \zz}\\
    {-1 \zz \zz} {\po \po \po} {\zz \mo \zz} {\zz \mo \zz}\\
    {\phantom{-}0\mo \zz} {\zz \mo \zz} {\po \po \po} {\zz \zz \mo}\\
    {\phantom{-}0\zz \mo} {\zz \zz \mo} {\zz \zz \mo} {\po \po \po}
    \end{array}\right]},
\end{equation*}
\renewcommand{\mo}{\zz}
\begin{equation*}
{\scriptsize B^i = \left[\begin{array}{l}
    {\phantom{-}1 \po \po} {\mo \zz \zz} {\mo \zz \zz} {\mo \zz \zz}\\
    {\phantom{-}0 \zz \zz} {\po \po \po} {\zz \mo \zz} {\zz \mo \zz}\\
    {\phantom{-}0\mo \zz} {\zz \mo \zz} {\po \po \po} {\zz \zz \mo}\\
    {\phantom{-}0\zz \mo} {\zz \zz \mo} {\zz \zz \mo} {\po \po \po}
    \end{array}\right]}.
\end{equation*}

 For the example robot that exhibits translation and no rotation, a simple cycle, defined in \eqref{Eqn:SimpleCycleCons},  is the mathematical representation of a periodic gait. We shall later see, in Section \ref{Sec:GaitDefinition}, that a locomotion gait is defined as a simple cycle that is transformation invariant. It is important to note that gaits are commonly defined by not only a sequence of motions or footfalls, but also by parameters such as the duty cycle and relative phase, leading to complex mathematical formulations. However, in the context of the pMFC, a simple cycle vector is sufficient for representing a gait. In the current implementation for this paper, the components of the simple cycle vector do not have a temporal component; the motion primitives are of equal time duration and are the result of binary actuation. This simplification is considered acceptable due to the widespread use of binary actuation in soft robotics control \cite{https://doi.org/10.1002/aisy.202100165}.  
 
 The robot is inspired by a caterpillar which dominantly utilizes two locomotion gaits: inching and crawling. These are illustrated\footnote{There are a number of established methods to illustrate both biological and robotic gaits including but not limited to footfall diagrams, contact-force plots, joint angle plots, and relative phase diagrams. We choose in this work to use robot drawings with shaded limbs to de-emphasize footfalls and highlight the constant-length binary actuation in a compact and simple manner. This is because the sliding/inching motion of the limbs of this particular soft robot results in all limbs maintaining contact with the ground at all times. }
 below where the maroon shaded limbs represent the curled limb.
\begin{enumerate}[leftmargin=*]
\item {Crawling gait}: $V(L_{crawl})=v_1v_2v_3v_1$ \vspace{5pt}

\begin{center}
\fbox{
\parbox[c]{.68\columnwidth}{
\centering
\raisebox{-.3\totalheight}{\includegraphics[width=.95cm,trim=0cm 6cm 0cm 6cm, clip=true]{figures/MTA2_State_1}} \kern-.1em  $\xrightarrow[]{e_1}$ \kern-.1em \raisebox{-.3\totalheight}{\includegraphics[width=.95cm,trim=0cm 6cm 0cm 6cm, clip=true]{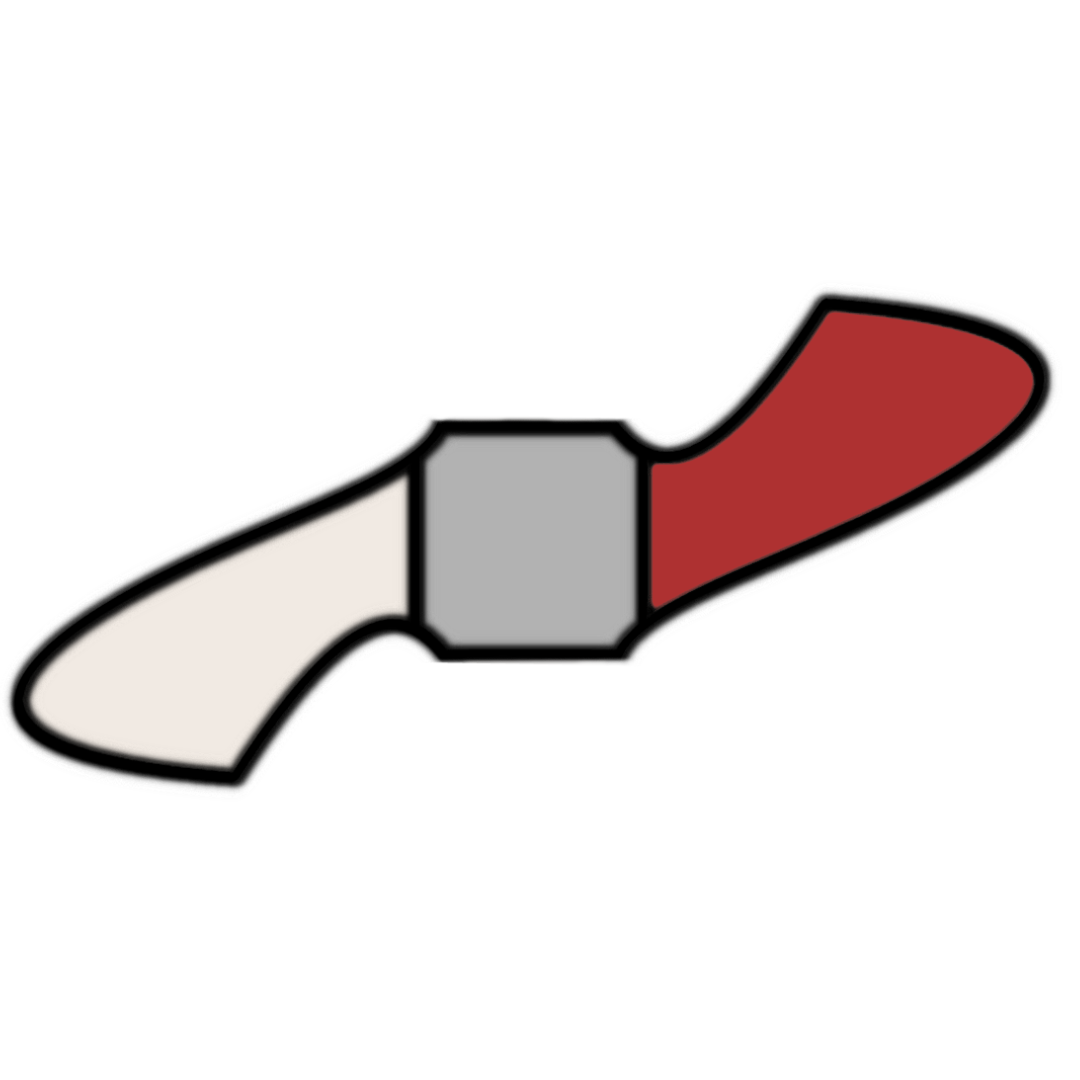}} \kern-.1em $\xrightarrow[]{e_5}$ \kern-.1em  \raisebox{-.3\totalheight}{\includegraphics[width=.95cm,trim=0cm 6cm 0cm 6cm, clip=true]{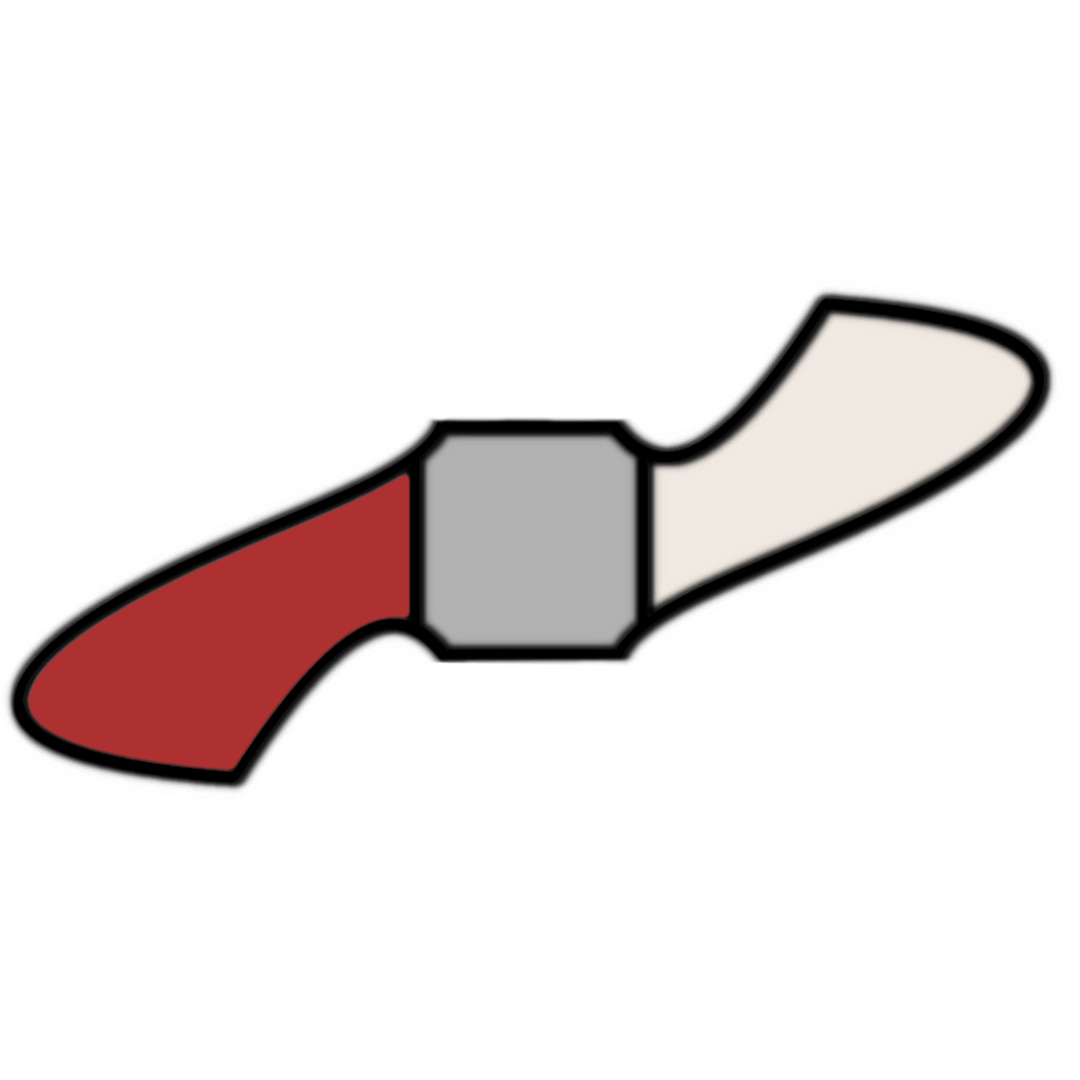}} \kern-.1em $\xrightarrow[]{e_7}$ \kern-.1em  \raisebox{-.3\totalheight}{\includegraphics[width=.95cm,trim=0cm 6cm 0cm 6cm, clip=true]{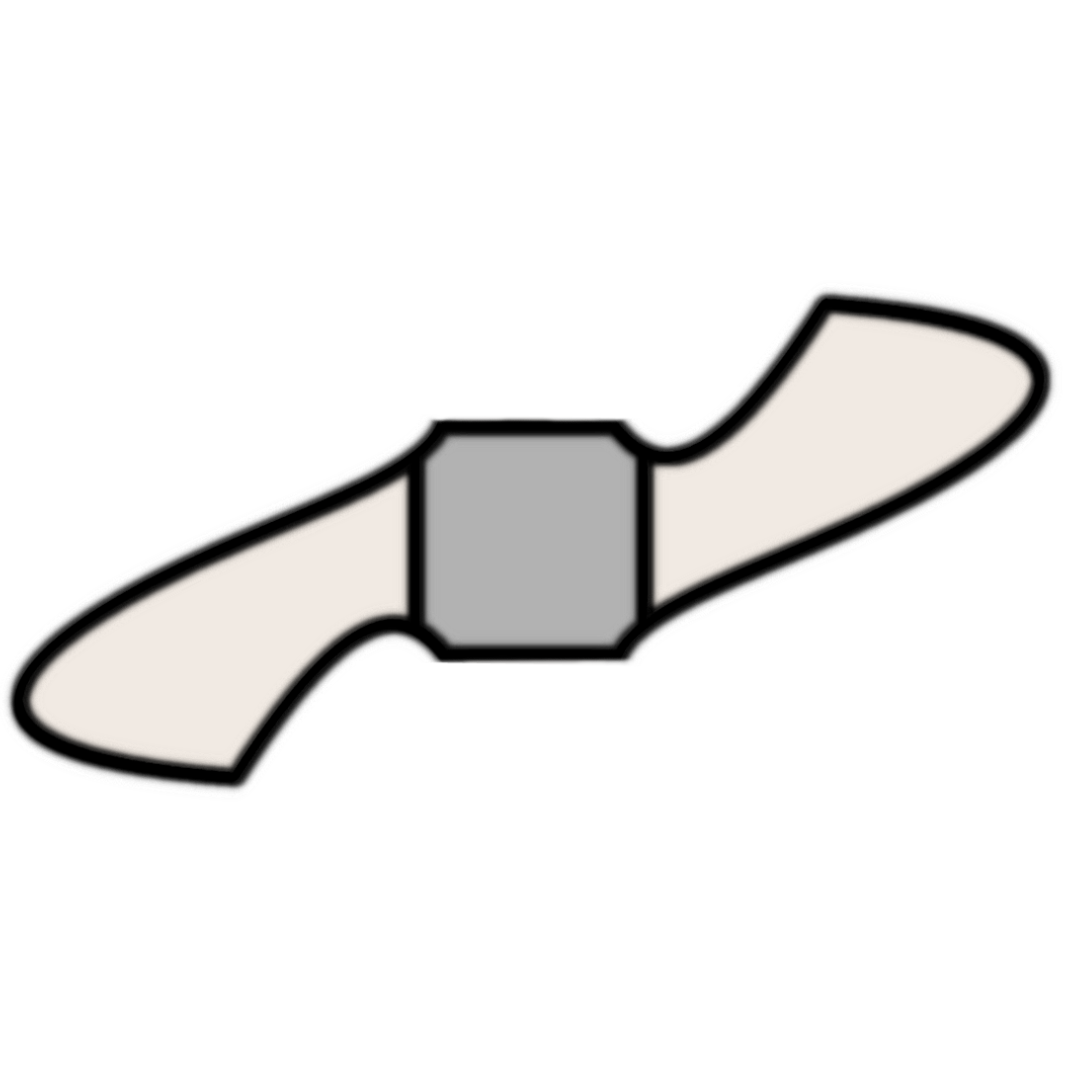}}
}
}
\end{center}

\begin{equation}
\mathbf{z}_{crawl} = [1,0,0,0,1,0,1,0,0,0,0,0]^\mathsf{T}
\end{equation}
\item {Inching gait}: $V(L_{inch})=v_1v_2v_4v_2v_1$ \vspace{5pt} 
\begin{center}
\fbox{
\parbox[c]{.85\columnwidth}{
\centering
\raisebox{-.3\totalheight}{\includegraphics[width=1cm,trim=0cm 6cm 0cm 6cm, clip=true]{figures/MTA2_State_1}} \kern-.1em  $\xrightarrow[]{e_1}$ \kern-.1em \raisebox{-.3\totalheight}{\includegraphics[width=.95cm,trim=0cm 6.5cm 0cm 6.5cm, clip=true]{figures/MTA2_State_2.png}} \kern-.1em $\xrightarrow[]{e_6}$ \kern-.1em  \raisebox{-.3\totalheight}{\includegraphics[width=.95cm,trim=0cm 6.5cm 0cm 6.5cm, clip=true]{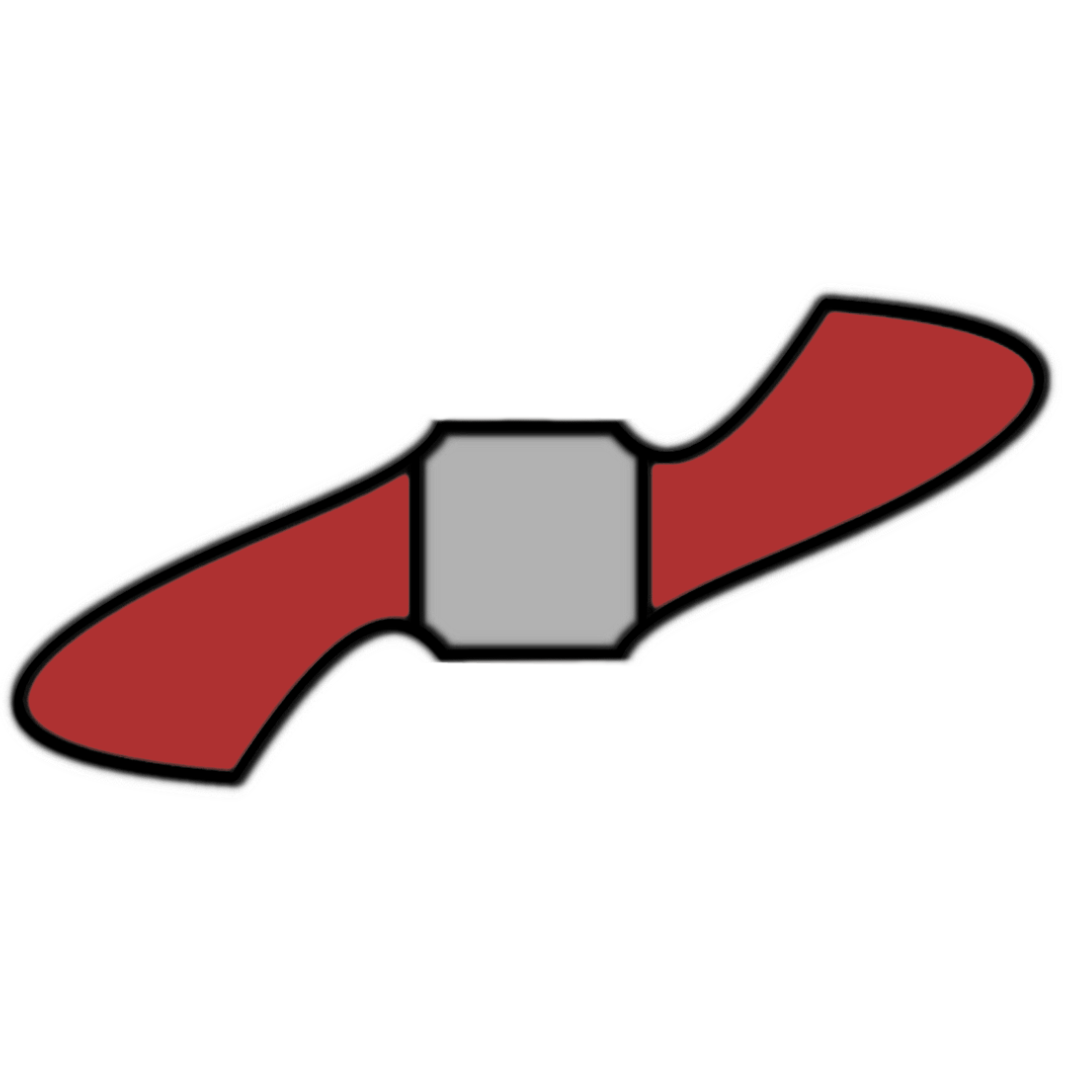}} \kern-.1em $\xrightarrow[]{e_{12}}$ \kern-.1em  \raisebox{-.3\totalheight}{\includegraphics[width=.95cm,trim=0cm 6.5cm 0cm 6.5cm, clip=true]{figures/MTA2_State_2.png}} \kern-.1em $\xrightarrow[]{e_7}$ \kern-.1em  \raisebox{-.3\totalheight}{\includegraphics[width=.95cm,trim=0cm 6.5cm 0cm 6cm, clip=true]{figures/MTA2_State_1.png}}
}
}
\end{center}
\begin{equation}
\mathbf{z}_{inch} = [1,0,0,0,0,1,1,0,0,0,0,1]^\mathsf{T}
\end{equation}\end{enumerate}

\subsection{Randomized Learning of the Environment-specific Digraph} \label{Subsec:Hierholzer}
The robot-environment interaction is encoded in the digraph edge weights $W(E)$ resulting from the motion primitives. Experimentally, the most efficient way to exhaustively learn the graph weights is to execute an Eulerian cycle, a closed walk that traverses every edge $E(G)$ exactly once. The  Hierholzer's algorithm is an efficient method to obtain the Eulerian cycle \cite{hierholzer_ueber_1873} and can be used to learn digraph weights for a specific environment. For the example robot shown in \Fig \ref{Fig:ExampleGraph}, one possible Eulerian cycle with starting and terminating vertex ${v_1}$ is $E(\mathrm{Eulerian~cycle})=e_1e_5e_9e_{10}e_3e_{11}e_{6}e_{12}e_7e_2e_8e_4$. We propose a modified {stochastic} Hierholzer's algorithm for complete digraphs, Algorithm  \ref{Algo:ModifiedHeriholzers}, that runs in $\mathcal{O}(m)$ time for $m = n(n-1)$ edges. The existence of such Eulerian cycles for digraphs is possible only when the digraph is connected and the in-degree of every vertex is equal to its out-degree. This condition is satisfied for the complete digraphs discussed in this work.
\begin{algorithm}
\SetAlgoLined
\KwResult{Eulerian cycle (list of ordered vertices)}
states = $[0, 1, \ldots, n]$ \;
random.shuffle(states) \;
 cycle = empty list  \;
 \For{i = $[n-1, \ldots, 1, 0]$}{
    j = 0 \;
    \While{$i\neq j$}{
        cycle.append(states[j])\;
        cycle.append(states[i])\;
        j = j+1 \;
    }
 }
 Return cycle
 \caption{{Stochastic} Hierholzer's for complete digraphs.}
 \label{Algo:ModifiedHeriholzers}
\end{algorithm}

The proposed algorithm leverages the structure of the complete digraph to reduce Hierholzer's algorithm to essentially a permutation enumeration problem. This exhaustive edge traversal and data collection process can be repeated for multiple trials in an experiment. The stochastic nature is introduced by a \texttt{random.shuffle} operation, which randomizes the order of the vertices for each trial; the returned cycle is one of $n!$ unique Eulerian cycles. The motion primitives in the pMFC framework are assumed to be quasi-static (i.e., statically stable for all discrete states) and independent of each other. Thus, stochastically selecting an Eulerian cycle for each data collection trial minimizes the biases that may be caused by transient effects and edge order.
\subsection{Framework Implementation}

The implementation of the pMFC can be divided into the following steps. 

\subsubsection{Digraph Construction} The soft robot's $n$ states and $m$ motion primitives are identified and used to build a robot-specific digraph with empty edge weights, as detailed in Section \ref{subsec:digraph}. 
\subsubsection{Edge Weight Learning Experiments} 
\Algo \ref{Algo:ModifiedHeriholzers} is used to generate randomized sequences of robot states to fully traverse the complete digraph. This determines the control sequence for the soft robot to experimentally execute sequentially on a given substrate. The resulting translation and rotation of each motion primitive is recorded and used to calculate the corresponding edge weight according to \eqref{Eqn:ArcWeight} in Section \ref{subsec:digraph}.
\subsubsection{Gait Synthesis}
Optimal translation and rotation gaits, represented by simple cycles in the graph, are then synthesized via the optimization algorithm presented in Section \ref{subseq:linearcost}.
\subsubsection{Gait Validation Experiments} The synthesized gaits are then tested experimentally to validate the performance.
\section{Gait Definition and Transformation Invariance} \label{Sec:GaitDefinition}
Locomotion gaits are generally classified into two categories: fixed and free \cite{wettergreen_gait_1992}. Fixed gaits follow the traditional definition of gaits, where an animal or robot employs a fixed periodic sequence of body movements to realize locomotion. For more versatile movement, roboticists later abstracted the idea of gaits to encompass free gaits, which comprise any sequence of motions, whether periodic or aperiodic, that move the robot between two specified locations, often considering obstacles or dynamically challenging environments. Free gaits therefore require much more computation, modeling, and sensing. We restrict our definition of gaits to the biological interpretation (i.e., fixed gaits), where gait synthesis is defined as the generation of periodic control sequences that correspond to a desired movement.

Terrestrial locomotion gaits in $SE(2)$ are expected to be repeated sequentially to realize planar motion. However, the definition of locomotion gaits becomes complicated when both translation and rotation are involved. While the simple vector cycle notation defined in \eqref{Eqn:SimpleCyleVector} affords simplicity in the gait definition and gait synthesis optimization problem, it does not encode a specific starting vertex $v_1$. Here, we show that such ambiguity can lead to inconsistent resultant locomotion. 

Let $L$ be a simple cycle with $l$ edges where each edge $e_i$ has an associated initial vertex 
$v_i$, terminal vertex $v_{i+1}$ and edge weight $\bm{w}_i$ represented in the coordinate system $i$ of the initial vertex\footnote{The subscript $i$ in this section is meant to represent the relationship between vertices, edges, and coordinate systems to elucidate the subsequent proofs in \Sec~\ref{Sec:GaitDefinition} and does not correlate to the vertex/edge numbering convention used in other sections of the paper.}. Let the edge representation of $L$ be 
$E(L) = e_1e_2\cdots e_l$. %
For kinematic analysis, the body coordinate system is fixed on the robot which translates and rotates by $\bm{w}_i=\left[\bm{p}_i,\theta_i\right]^\mathsf{T}$ when executing a motion primitive, i.e., traversal along the edge $e_i$ as visualized in \Fig \ref{Fig:ExampleLocomotion} for a three-limb soft robot.  The transformation matrix  $g_{i} \in SE(2)$ for this motion is
\begin{align*}
    g_{i} &= \TMat{R(\theta_i)}{\disp{i}}, \quad R(\theta) = \begin{bmatrix}
    \cos\theta & -\sin\theta\\
    \sin\theta & \cos \theta
    \end{bmatrix}\in SO(2)
\end{align*}
The traversal of consecutive edges $e_ie_{i+1}\cdots e_j, s.t., i<j$ is captured in the transformation matrix $g_{i,j}$
\begin{align}
    \begin{gathered}
    g_{i,j}(e_ie_{i+1}\cdots e_{j}) = 
    \TMat{R_{i,j}}{\bm{p}_{i,j}}\\ \mathrm{s.t.}\quad 
    R_{i,j} = R\left(\sum_{k=i}^{j}\theta_k\right) \quad \mathrm{and}\\
    \bm{p}_{i,j} = \disp{i} + \Rot{i,i}\disp{i+1}\cdots+\Rot{i,j-1}\disp{j} = \sum_{k=i}^{j} \Rot{i,(k-1)}\disp{k} \\
    \mathrm{where}\quad \Rot{i,i}= R(\theta_i) \quad \mathrm{and} \quad \Rot{i,(i-1)}= I.
    \end{gathered}
\end{align}
Here, the displacement $\disp{{i,j}}$ and rotation $\Rot{i,j}$ are expressed in the coordinate system of the initial vertex of the edge $e_i$.
\begin{figure}[ht]
\begin{center}
\newcommand{\coordSys}[3]{
\draw[fill=black] #1 node[below,left]{$O_{#2}$} circle (0.05);
\draw[thick,rotate=#3,->] #1 -- +(0,1) node[left]{$y_{#2}$};%
\draw[thick,rotate=#3,->] #1 -- ++(1,0) node[below]{$x_{#2}$};
}
\definecolor{lococolor}{RGB}{174,49,49}

\begin{tikzpicture}[transform shape, scale=0.8]
\pgfmathsetmacro{\rotationA}{0}
\pgfmathsetmacro{\rotationB}{-20}
\pgfmathsetmacro{\rotationC}{-65}
\pgfmathsetmacro{\rotationD}{20}
\pgfmathsetmacro{\scaleR}{0.28}
\draw (0,0) coordinate (originA) node{\includegraphics[scale=\scaleR,angle=\rotationA]{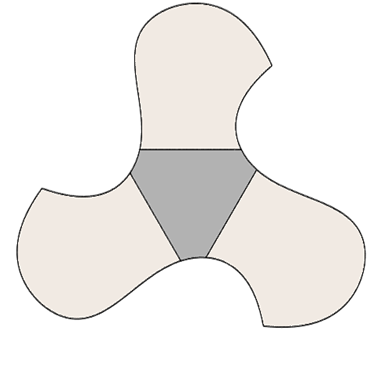}};
\coordSys{(originA)}{1}{\rotationA}
\draw (originA) ++(3,2) coordinate (originB) node{\includegraphics[scale=\scaleR,angle=\rotationB]{figures/MTA3.png}};
\coordSys{(originB)}{2}{\rotationB}
\draw (originB) ++(3,-3) coordinate (originC) node{\includegraphics[scale=\scaleR,angle=\rotationC]{figures/MTA3.png}};
\coordSys{(originC)}{3}{\rotationC}
\draw (originC) ++(1.75,3) coordinate (originD) node{\includegraphics[scale=\scaleR,angle=\rotationD]{figures/MTA3.png}};
\coordSys{(originD)}{4}{\rotationD}
\draw[very thick,->,lococolor] (originA) -- (originB) node[midway,above=3pt,left]{{\footnotesize $\deltaX{1}{1}{2}$}};
\draw[very thick,->,lococolor] (originB) -- (originC) node[midway,left=3pt]{{\footnotesize $\deltaX{2}{2}{3}$}};
\draw[very thick,->,lococolor] (originC) -- (originD) node[midway,right=3pt]{{\footnotesize $\deltaX{3}{3}{4}$}};
\draw[thick, dashed,lococolor] (originB) -- ++(1.5,0) node[above]{$x_1$}++(-0.75,0) arc (0:(\rotationB-\rotationA):0.75) node[above=2pt,right=2pt]{$\deltaT{1}{1}{2}$};
\draw[thick, dashed,lococolor,rotate=\rotationB] (originC) -- ++(1.5,0) node[above]{$x_2$}++(-0.75,0) arc (0:(\rotationC-\rotationB):0.75) node[below=3pt,right=4pt,rotate=-\rotationB]{$\deltaT{2}{2}{3}$};
\draw[thick, dashed,lococolor,rotate=\rotationC] (originD) -- ++(1.5,0) node[above]{$x_3$}++(-0.75,0) arc (0:(\rotationD-\rotationC):0.75) node[midway,below=10pt,right=10pt,rotate=-\rotationC]{$\deltaT{3}{3}{4}$};
\draw (1,3) coordinate (originR) ;
%
\end{tikzpicture}
\end{center}
\caption{Example planar locomotion of a three-limb soft robot where the translations and rotations are observed in the coordinate system of the initial vertex of each edge $e_i$.}
\label{Fig:ExampleLocomotion}
\end{figure}
Consequently, the transformation matrix for simple cycle $L$ that begins at $e_1$ is
\begin{align*}
    g^1(L) &= g_{1} g_{2} \cdots g_{l} = 
    \TMat{\Rot{1,l}}{\disp{1,l}}
\end{align*}
where the superscript denotes the starting edge. However, the simple cycle may start at any edge $e_k\in E(L)$. 
\begin{proposition} The transformation matrix of a simple cycle $L=e_1e_2\cdots e_l$ with start edge $e_i$ is
\begin{align}
    g^i(L) &= \left(g_ig_{i+1}\cdots g_l\right)\left(g_1\cdots g_{i-1}\right)%
    = \TMat{R^i}{\Trans{i}}
\end{align}
where the superscript denotes the starting edge. Two transformation matrices $g^i(L),g^j(L)$ with starting edges $e_i,e_j$ are related as
\begin{align}
    \begin{gathered}
    R^i=R^j=R_{1,l}\\
    \Trans{i} = \Rot{i,(j-1)}\Trans{j} 
     + \left(I-\Rot{1,l}\right)\bm{p}_{i,(j-1)}
    \end{gathered}
    \label{Eqn:TMatEquivalence}
\end{align}
\end{proposition}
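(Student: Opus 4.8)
The plan is to recognize $g^i(L)$ and $g^j(L)$ as \emph{conjugate} elements of $SE(2)$, related by the partial-traversal transformation between the two candidate start edges, and then to extract the rotation and translation blocks. Take $i<j$ for concreteness (the other case is symmetric, or uses the cyclic-arc reading of $g_{i,(j-1)}$). Writing $A=g_ig_{i+1}\cdots g_{j-1}$ and $B=(g_j\cdots g_l)(g_1\cdots g_{i-1})$, the definition of the start-edge transformation matrix gives $g^i(L)=AB$ and $g^j(L)=BA$, so
\begin{equation}
  g^i(L)=A\,g^j(L)\,A^{-1},\qquad A=g_{i,(j-1)} .
\end{equation}
This conjugation identity is the crux; everything that follows is block-matrix bookkeeping in $SE(2)$, needing no graph-theoretic input beyond the cyclic structure of $E(L)$.

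For the rotation block, I would use that the upper-left block of a product in $SE(2)$ is the product of the upper-left blocks, together with commutativity of $SO(2)$: every cyclic reordering of $g_1\cdots g_l$ has rotation block $R\!\left(\sum_{k=1}^{l}\theta_k\right)=\Rot{1,l}$, hence $R^i=R^j=\Rot{1,l}$. The same fact also follows directly from the conjugation identity, since $\Rot{i,(j-1)}\,\Rot{1,l}\,\Rot{i,(j-1)}^{\mathsf T}=\Rot{1,l}$ in the abelian group $SO(2)$.

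For the translation block, I would substitute the block decompositions of $g_{i,(j-1)}$ and $g^j(L)$, together with the standard $SE(2)$ inversion formula for $g_{i,(j-1)}^{-1}$, into the conjugation identity, multiply out the three factors, and collect the top-right entry. This gives
\begin{equation}
  \Trans{i}=\Rot{i,(j-1)}\Trans{j}-\Rot{i,(j-1)}\,\Rot{1,l}\,\Rot{i,(j-1)}^{\mathsf T}\,\bm p_{i,(j-1)}+\bm p_{i,(j-1)} ,
\end{equation}
and applying the abelian identity above to the middle term (it collapses to $-\Rot{1,l}\,\bm p_{i,(j-1)}$) yields $\Trans{i}=\Rot{i,(j-1)}\Trans{j}+\left(I-\Rot{1,l}\right)\bm p_{i,(j-1)}$, i.e.\ \eqref{Eqn:TMatEquivalence}.

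The main obstacle I anticipate is purely clerical: pinning down the wrap-around factor $B=(g_j\cdots g_l)(g_1\cdots g_{i-1})$ and the associated partial transformation $g_{i,(j-1)}$ correctly, checking that the degenerate conventions $\Rot{i,(i-1)}=I$ and the empty-product translation $\bm p_{i,(i-1)}=\bm 0$ make the statement consistent at $i=j$, and being careful to invoke $SO(2)$ commutativity only where it holds --- it is precisely this commutativity that keeps the rotation block unchanged under conjugation and simplifies the translation formula, whereas the analogous $SE(3)$ statement would not collapse so cleanly. Once the conjugation identity is set up, the remaining algebra is routine.
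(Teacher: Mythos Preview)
Your proposal is correct and matches the paper's approach: both rest on the observation that $g^i=AB$ and $g^j=BA$ with $A=g_{i,(j-1)}$, then use $SO(2)$ commutativity for the rotation block and block-matrix bookkeeping for the translation block. The only cosmetic difference is that the paper writes the similarity as $g^i A = A\,g^j$ and reads off the top-right entries of each side directly, whereas you pass to the conjugation $g^i = A\,g^j A^{-1}$ and invoke the $SE(2)$ inversion formula; both routes are equally short and yield the same identity.
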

\begin{proof}
Rotations are commutative in $SE(2)$, hence, $\forall i,j \leq l$ the rotation associated with simple cycle $L$ irrespective of the starting edge is $R^i=R^j=R_{1,l}$. The two transformation matrices can be written as
\begin{align*}
    g^i = \left(g_ig_{i+1}\cdots g_{j-1}\right) \left(g_{j}g_{j+1}\cdots g_l g_1 \cdots g_{i-1}\right)\\
    g^j = \left(g_j g_{j+1}\cdots g_l g_1 \cdots g_{i-1}\right)\left(g_ig_{i+1}\cdots g_{j-1}\right)
\end{align*}
Hence,
\begin{align*}
    \begin{gathered}
    g^i g_{i,j-1} = g_{i,j-1}g^j\\
{\scriptsize    \TMat{R_{1,l}}{\Trans{i}}\TMat{R_{i,(j-1)}}{\disp{{i,(j-1)}}} = \TMat{R_{i,(j-1)}}{\disp{{i,(j-1)}}}\TMat{R_{1,l}}{\Trans{j}}}\\
\Rightarrow \Trans{i} +\Rot{1,l}\disp{i,(j-1)} = \disp{i,(j-1)} +\Rot{i,(j-1)}\Trans{j}\\
\Trans{i} = \Rot{i,(j-1)}\Trans{j} + (I-\Rot{1,l})\disp{i,(j-1)}
    \end{gathered}
\end{align*}
\end{proof}

If the magnitude of the resultant translation or rotation for a gait depends on the starting vertex, the gait definition remains ambiguous. In context of the pMFC framework, this is resolved by ensuring that the resulting magnitude is independent of the starting vertex. 
Geometrically, $\Trans{i},\Trans{j}$ denote the displacements of robot as it traverses the same simple cycle but with different start edges. However, they are expressed in different coordinate systems -  in the initial vertices of their respective edges $e_i,e_j$. \eqref{Eqn:TMatEquivalence} provides the relationship where $\Trans{j}$ is transformed to the same coordinate system as $\Trans{i}$, i.e., the initial vertex $e_i$.
\begin{definition}
Locomotion gaits are defined as simple cycles that are transformation invariant. The principle of transformation invariance implies that the displacement and rotation of a simple cycle are  preserved irrespective of starting vertex, i.e.,
\begin{align*}
    \Trans{i} = \Rot{i,(j-1)}\Trans{j} \quad \mathrm{and} \quad R^i=R^j, \qquad \forall 1<i,j\leq l
\end{align*}
\end{definition}
\begin{proposition}
Only two fundamental types of locomotion gaits in $SE(2)$ exist\footnote{This definition is specific to the pMFC framework and results from the assumptions of quasi-static motion and transformation invariance. In other contexts, coupled rotation and translation cycles may be considered. In fact, some researchers think of the rotation gait as being the only possible type of gait; here, pure translation results from an infinite turning radius \cite{rotation_gait}.}
\begin{enumerate}
    \item Translation gait: When the cumulative rotation of the simple cycle is zero, i.e., $\displaystyle \sum_{k=1}^l \theta_k = 0$
    \item Rotation gait: When the translation of all simple cycle motion primitives are zero, i.e., $\disp{k}=0~\forall~e_k\in E(L)$
\end{enumerate}
\end{proposition}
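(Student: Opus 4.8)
The plan is to read the dichotomy off directly from the preceding Proposition together with the definition of transformation invariance, so essentially no new geometry is required. The Proposition gives, for any two start edges $e_i,e_j$,
$\Trans{i} = \Rot{i,(j-1)}\Trans{j} + \left(I-\Rot{1,l}\right)\bm{p}_{i,(j-1)}$,
whereas transformation invariance demands $\Trans{i} = \Rot{i,(j-1)}\Trans{j}$. Subtracting, the simple cycle $L$ is transformation invariant if and only if $\left(I-\Rot{1,l}\right)\bm{p}_{i,(j-1)} = 0$ for every admissible pair of start edges. The whole argument then turns on whether the net rotation factor $\Rot{1,l} = R\!\left(\sum_{k=1}^{l}\theta_k\right)$ equals the identity.

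First I would dispose of the case $\sum_{k=1}^{l}\theta_k = 0$: here $\Rot{1,l} = I$, the condition $\left(I-\Rot{1,l}\right)\bm{p}_{i,(j-1)} = 0$ holds identically regardless of the edge displacements, so $L$ is transformation invariant with no constraint on the $\disp{k}$ — this is exactly the translation gait. In the complementary case $\sum_{k=1}^{l}\theta_k \neq 0$, a one-line computation gives $\det\!\big(I-R(\theta)\big) = 2(1-\cos\theta)$, which is nonzero precisely when $\theta\not\equiv 0$; hence $I-\Rot{1,l}$ is invertible and transformation invariance forces $\bm{p}_{i,(j-1)} = 0$ for all admissible $i,j$. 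Specializing to consecutive start edges ($j=i+1$) yields $\disp{i} = \bm{p}_{i,i} = 0$, and letting $i$ range — together with the fact that the simple-cycle vector $\mathbf{z}$ carries no distinguished start, so the labels $e_1,\dots,e_l$ may be cyclically permuted — gives $\disp{k} = 0$ for every $e_k\in E(L)$, i.e., the rotation gait. Since $\sum_k\theta_k = 0$ and $\sum_k\theta_k\neq 0$ are exhaustive and mutually exclusive, these are the only two possibilities, which is the claim. (The converse directions follow from the same identity and make the characterization an ``if and only if''.)

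The one step that needs care is the index bookkeeping in the second case. The invariance definition is quantified over start edges with $1<i,j\le l$, so I must verify that the substitution $j=i+1$ plus the cyclic relabeling of the cycle actually reaches all $l$ edges (in particular $e_1$ and $e_l$) rather than only an interior block; applying the relabeling argument twice suffices. Everything else — the determinant computation showing $I-R(\theta)$ is invertible for $\theta\ne 0$, and the identification $\Rot{1,l} = I \iff \sum_k\theta_k \equiv 0$ — is routine.
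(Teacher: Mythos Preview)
Your proposal is correct and follows essentially the same approach as the paper: reduce transformation invariance to $\left(I-\Rot{1,l}\right)\bm{p}_{i,(j-1)}=0$ via the preceding Proposition, then split on whether $\Rot{1,l}=I$. You are in fact more careful than the paper's own proof on two points it glosses over---the invertibility of $I-R(\theta)$ for $\theta\not\equiv 0$ (the paper simply lists the two cases without arguing that $\bm{p}_{i,(j-1)}$ cannot lie in a nontrivial kernel), and the index bookkeeping needed to pass from $\bm{p}_{i,(j-1)}=0$ for all admissible $i,j$ to $\disp{k}=0$ for every edge.
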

\begin{proof}
We have already seen that the resultant rotation of a simple cycle is the same irrespective of the starting vertex, i.e., $R^i=R^j$. Using \eqref{Eqn:TMatEquivalence}, translation equivalence implies
\begin{align*}
\left(I-\Rot{1,l}\right)\bm{p}_{i,(j-1)}=0
\end{align*}
This is possible only for two cases 
\begin{enumerate}
    \item $\displaystyle \left(\Rot{1,l}-I\right) =0 \Rightarrow \sum_{k=1}^l \theta_k = 0$
    \item $\displaystyle \disp{i,(j-1)}=0 \Rightarrow \sum_{k=i}^{j-1} \Rot{i,(k-1)}\disp{k} = 0~\forall~1<i,j\leq l$. Hence, $\disp{k}=0~\forall~e_k\in E(L)$.
\end{enumerate}
\end{proof}

\section{Gait Synthesis and Exploration} \label{Sec:Synthesis}

Gait synthesis is the process of determining a set of periodic sequences of motion primitives that satisfy the transformation invariance principle.

\subsection{Nonlinear Cost Functions} \label{subsec:nonlinear}
One approach to optimal gait synthesis would be to perform constrained optimization on nonlinear cost functions $J_{t,nl}$ and $J_{\theta,nl}$ that maximize a weighted sum of motion (translation or rotation, respectively) normalized by the length of the simple cycle: $\mathbf{z}^\mathsf{T}\mathbf{z}$. 
\begin{align}
    J_{t,nl}(\mathbf{z}) = \frac{\lVert \bm{p}(\mathbf{z}) \rVert + \lambda_t s_p(\mathbf{z})}{\mathbf{z}^\mathsf{T}\mathbf{z}}, ~ 
    J_{\theta,nl}(\mathbf{z}) = \frac{\lvert \Theta^\mathsf{T}\mathbf{z} \rvert + \lambda_\theta s_\theta(\mathbf{z})}{\mathbf{z}^\mathsf{T}\mathbf{z}}
    \label{Eqn:nlcostfuncs}
\end{align}
where $\lambda_t,\lambda_\theta$ are scalars, and {$s_p, s_\theta$} are the trace of the translation and rotation covariance matrices of the simple cycle. Here, the calculation of $\bm{p}(\mathbf{z})$ and the covariance matrix $\Sigma(\mathbf{z})$ can be approximated on Euclidean motion groups \cite{wang_nonparametric_2008}. Given the lack of a smooth expression for these cost functions, they need to be exhaustively calculated for each locomotion gait (constrained simple cycles).

A well-known algorithm for finding simple cycles has been proposed by Tarjan and later improved by Johnson \cite{johnson_finding_1975}. However, the number of simple cycles $n_{z}$ for a complete digraph increases exponentially with the size of the graph as defined by
$ \displaystyle n_z = \sum_{i=1}^{n}\left(
\begin{array}{c}
n\\
n-i+1
\end{array}
\right) \left(n-i\right)! $ 
Consequently, the associated time complexity $\mathcal{O}((n + m)(n_z+1))$ limits the usefulness of optimizing these cost functions. For example, this approach results in a very tractable lookup table for small graphs, e.g.,  $n_z = 24$ for the two-limb example robot. However, larger graphs quickly result in combinatorial explosion, e.g., $n_z \approx 3.81 \times 10^{12}$ for the four-limb experiment robot with $n=16$ vertex digraph. As a result, the nonlinear cost function for this combinatorial (discrete) problem is not tractable for robots with more than three actuators with traditional solvers. 

The pMFC-based gait synthesis can be considered as an asymmetric traveling salesman problem, a famous NP-hard example in combinatorial optimization. Modern solutions include stochastic sampling solvers like simulated annealing, ant colony optimization, genetic algorithms, reinforcement learning (e.g., simple Q-learning or more complex deep learning approaches), and neural networks \cite{GOPALAKRISHNAN201551, 10.5555/3618408.3618810, ARORA2017683}. However, drawbacks to these approaches include unknown optimality and the black-box nature, which makes tuning of these solvers difficult to understand in physical terms. Therefore, a common approach is to linearize the cost function and use well-known methods like branch-and-bound which have guarantees on optimality \cite{ARORA2017683}. Although this approach does not entirely eliminate the issue of combinatorial explosion, it provides a good balance of tractability and optimality, as discussed in further detail in Section \ref{subsec:tract} and Section \ref{subsubsec:optimality}.

\subsection{Linearized Cost Functions} \label{subseq:linearcost}
We propose modified cost functions $J_t$ and $J_\theta$, respectively, that linearly weight the locomotion, variance and  gait length as 
\begin{align*}
    J_t(\mathbf{z}) &=  \bm{\alpha}_t^\mathsf{T}\underbrace{P\mathbf{z}}_{\mathrm{translation}} + \beta_t \underbrace{S_{p}^\mathsf{T}\mathbf{z}}_{\mathrm{variance}} + \gamma_t \underbrace{{1}_{1\times m}\mathbf{z}}_{\mathrm{length}}, \\
    J_\theta(\mathbf{z}) &= \alpha_\theta\underbrace{\Theta\mathbf{z}}_{\mathrm{rotation}} + \beta_{\theta} \underbrace{S_{\theta}^\mathsf{T}\mathbf{z}}_{\mathrm{variance}} + \gamma_\theta \underbrace{{1}_{1\times m}\mathbf{z}}_{\mathrm{length}}
\label{Eqn:costfuncs}
\end{align*}
where $P,\Theta,S_p,S_\theta$ are the graph's translation, rotation and covariance trace matrices as summarized in \Sec \ref{Sec:Basics}A; and $\{\bm{\alpha}_t,\beta_t,\gamma_t,\alpha_\theta,\beta_\theta,\gamma_\theta\}$ are the tunable hyperparameters. Consequently, gait synthesis is formulated as a \underline{B}inary \underline{I}nteger \underline{L}inear \underline{P}rogramming (BILP) optimization problem with linear constraints that can be solved using optimization solvers, e.g., MATLAB\textregistered \texttt{ intlinprog}. The optimization problem for the synthesis of a translation-dominant gait $\mathbf{z}_t$ is formulated as 

\begin{equation}
\begin{aligned}
    \min_{\mathbf{z}} \quad &J_t(\mathbf{z}) = \left(\bm{\alpha}_t^\mathsf{T} P + \beta_t S_{p} + \gamma_t 1_{1\times m} \right)\mathbf{z} \\
    \mathrm{s.t.} \quad& |\Theta \mathbf{z}|\leq \varepsilon_\theta\\
    &\mathbf{z}=0, B^i\mathbf{z}\leq 1, z_i\in\{0,1\} \forall i,\\
    &\nexists \mathbf{z}_1, \mathbf{z}_2 \mathrm{~s.t.~} \mathbf{z} = \mathbf{z}_1+\mathbf{z}_2,~B\mathbf{z}_1=B\mathbf{z}_2=0
\end{aligned}
\label{Eq:tconstraints}
\end{equation}
where $\varepsilon_\theta$ is a user-defined scalar that limits the total permitted rotation for a simple cycle to ensure translation-dominance. Similarly, a rotation-dominant gait $\mathbf{z}_\theta$ can be synthesized via
\begin{equation}
\begin{aligned}
    \min_{\mathbf{z}} \quad &J_\theta(\mathbf{z}) =\left({\alpha}_\theta \Theta + \beta_\theta S_{\theta} + \gamma_\theta 1_{1\times m} \right)\mathbf{z} \\
    \mathrm{~s.t.~} \quad& |P\mathbf{z}| \leq \varepsilon_t\\
    &\mathbf{z}=0, B^i\mathbf{z}\leq 1, z_i\in\{0,1\} \forall i,\\
    &\nexists \mathbf{z}_1, \mathbf{z}_2 \mathrm{~s.t.~} \mathbf{z} = \mathbf{z}_1+\mathbf{z}_2,~B\mathbf{z}_1=B\mathbf{z}_2=0
\end{aligned}
\label{Eq:rconstraints}
\end{equation}
where $\varepsilon_t$ is the user-defined maximum permitted translation.
The exploration of gaits is performed by varying the hyperparameters ($\alpha, \beta, \gamma$), thereby varying the direction and magnitude of the translation/rotation, the variance (predictability) of the gait behavior, and the length (speed) of the gaits. Therefore, an advantage of this method is the explainability of these hyperparameters, as they correspond to understandable values.

Randomized variation of the first hyperparameter $\{\bm{\alpha_t}, \alpha_\theta\}$ is achieved with \underline{L}atin \underline{H}ypercube \underline{S}ampling (LHS), which produces a grid $H\in \mathbb{R}^{N\times M}$ of sample points constrained to the interval $[0,1]$. $N$ is the user-defined number of variations, while $M$ corresponds to the length of $\alpha$ ($M=2$ and $M=1$ for translation and rotation goals, respectively). The LHS grid can then be adapted to sweep $\bm{\alpha}_t$ and $\alpha_\theta$ in the search spaces $\bigl[ \begin{smallmatrix}-1 & 1\\ -1 & 1\end{smallmatrix}\bigr]$ and $[-1, 1]$, respectively. This permits the exploration of omnidirectional translation and both clockwise and counter-clockwise rotation. \Fig \ref{Fig:flowchart} details the proposed gait synthesis algorithm. Initially, the solution to the \begin{figure}[h]
\begin{center}
\includegraphics[width=0.94\columnwidth,trim=.5cm 0cm 14cm 0cm, clip=true]{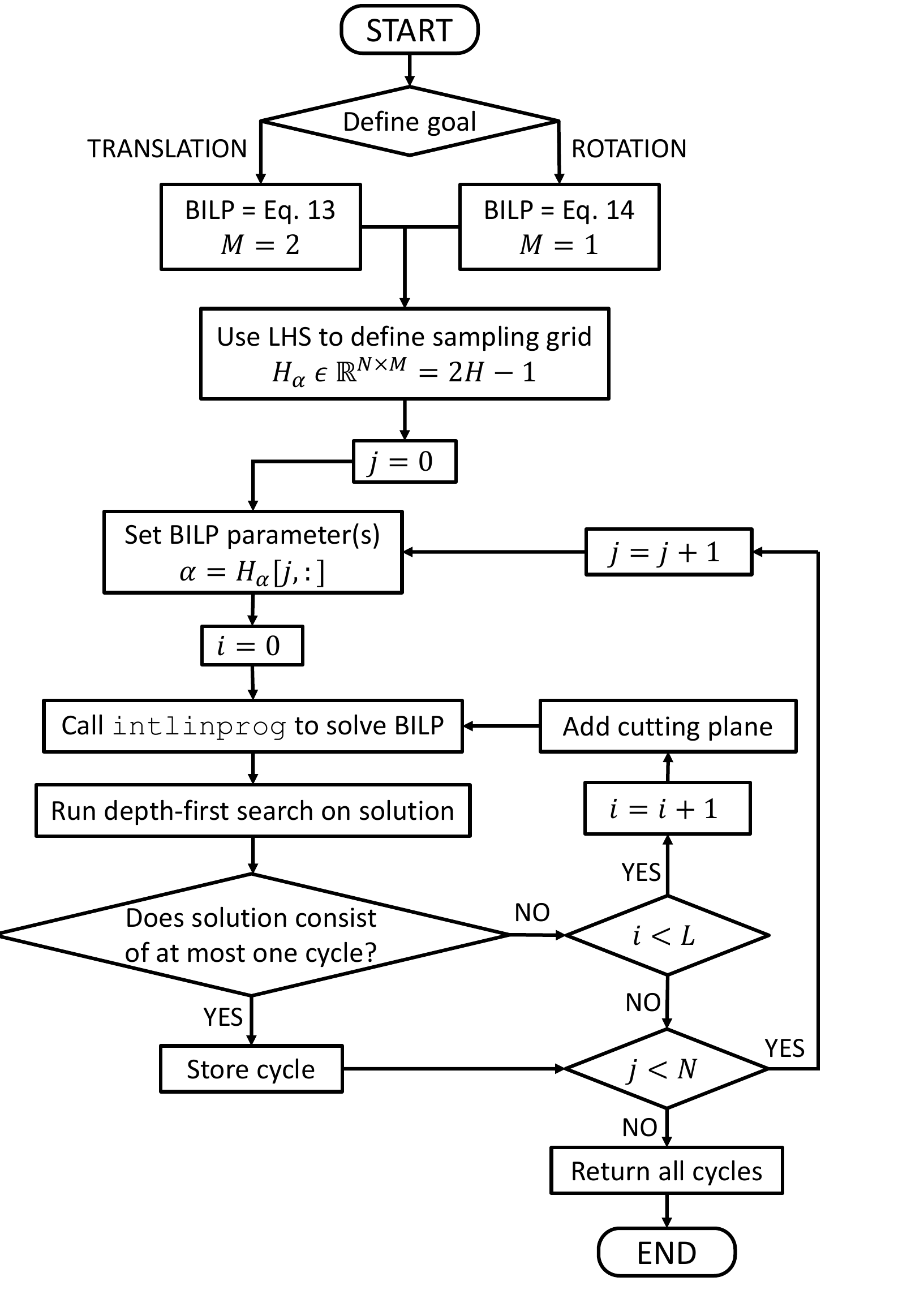}
\caption{Gait synthesis algorithm. Sampling grid $H$ is of size $N$ (number of variations) by $M$ ($M=1,2$ for rotation and translation gaits, respectively). $L$ is the user-defined maximum number of iterations to achieve simple cycle output.}
\label{Fig:flowchart}
\end{center}
\end{figure}
linearly constrained BILP, \eqref{Eq:tconstraints} or \eqref{Eq:rconstraints}, permits disjoint cycles. A detection step using \underline{D}epth-\underline{F}irst \underline{S}earch (DFS) is included to check if the solution is a single simple cycle, and, if not, a cutting plane is added to the BILP that removes this solution from the feasible region.

\subsection{Complexity and Tractability} \label{subsec:tract}
The gait synthesis problem remains NP-hard even in its linearized cost formulation and is complicated by the density of the directed graph and the existence of negative-weight cycles. However, the use of integer variables and DFS creates an iterative (not linear or convex) optimization approach that is solved using linear programming relaxation sub-problems. \Matlab \verb+intlinprog+ uses a branch-and-bound technique to solve BILPs and is generally fast, but can devolve into exhaustive enumeration in its worst case. With parameters $N = 100$ and $L = 50$, gait synthesis for the datasets presented in this paper is generally solved in 1 to 2 minutes on a desktop computer with an Intel(R) Xeon(R) CPU E5-1650 v4 @ 3.60GHz processor and 64 GB of RAM. This speed is important because this process will need to be repeated multiple times per robot-environment interaction to properly tune the hyperparameters to the user's satisfaction. 

 There is an active area of research into solving these types of combinatorial optimization problems, particularly when there is a large number of graph nodes. Because branch-and-bound solvers may not scale to systems with a larger number of actuators, alternative approaches like those mentioned in Section \ref{subsec:nonlinear} can be used. However, our presented methodology (Fig.~\ref{Fig:flowchart}) with \Matlab \verb+intlinprog+ is preferred in our case due to its tractability for the robots presented in Section \ref{Sec:ExpSetup}, the tunability based of the hyperparameters, and the near optimality demonstrated in Section \ref{subsubsec:optimality}.
\section{Experimental Setup and Procedure} \label{Sec:ExpSetup}
\subsection{Robot Design and Fabrication}

To validate the efficacy and versatility of the proposed gait synthesis, we fabricate two motor-tendon actuated (MTA) soft robots with different morphologies: the three-limb \TriSoRo and the four-limb \TetraSoRo. Both are designed and fabricated to be radially symmetric modular soft robots that are capable of multi-module reconfiguration into a sphere, as detailed in \cite{freeman_topology_2022}. This design leads to unique locomotive behavior of the individual robot modules as the curling of the limbs induces coupled translation and rotation. All robot configurations are statically stable, which simplifies calculations in the pMFC framework. Furthermore, the motion primitives are quasi-static, provided that the binary actuation is performed at the appropriate power level for the appropriate time duration such that the limb curling and uncurling reaches steady state. Hence, this robot can be used to highlight and validate this gait synthesis procedure as: 1) simple actuation produces complex movement with coupled translation and rotation; 2) the stick-slip nature of the limb movement complicates traditional gait synthesis and increases the dependence of the robot behavior on the environment; and 3) the exploration of a diverse set of gaits can be used for open-loop path planning. %

Both robots consist of a 3D-printed semi-flexible robot hub which houses the motors and tether connector and silicone limbs that are cast in a 3D-printed mold. Fishing line attached to the motor spool is then threaded through Teflon-reinforced channels in the limb and secured to the tips of the limbs with fishing hooks. The tendon routing and hub design for the \TetraSoRo~is shown in Fig.~\ref{Fig:fourlimbSoRo} and the two robots are shown in Fig.~\ref{Fig:robots}. The resulting MTAs are powered via a tether that connects to an off-board power supply and motor driver circuit, controlled via an Arduino Nano microcontroller. Both robots are supplied with 12 V DC and utilize binary actuation (step functions) with consistent time constants for all motion primitives that are experimentally determined (550 ms for \TriSoRo~and 450 ms for \TetraSoRo). To facilitate passive uncurling of the limbs, a brief motor reversal is incorporated when transitioning from a curled to uncurled limb state. The tether for the \TetraSoRo~is equipped with a slip ring connector at the hub to prevent motion bias due to accumulated torsional effects when the robot performs rotations for an extended period of time.

\begin{figure}[htb]
\begin{center}
\subfloat[][]{\includegraphics[page=1,width=.33\columnwidth,trim= 0.7cm 4.6cm 18cm 2.65cm, clip=true]{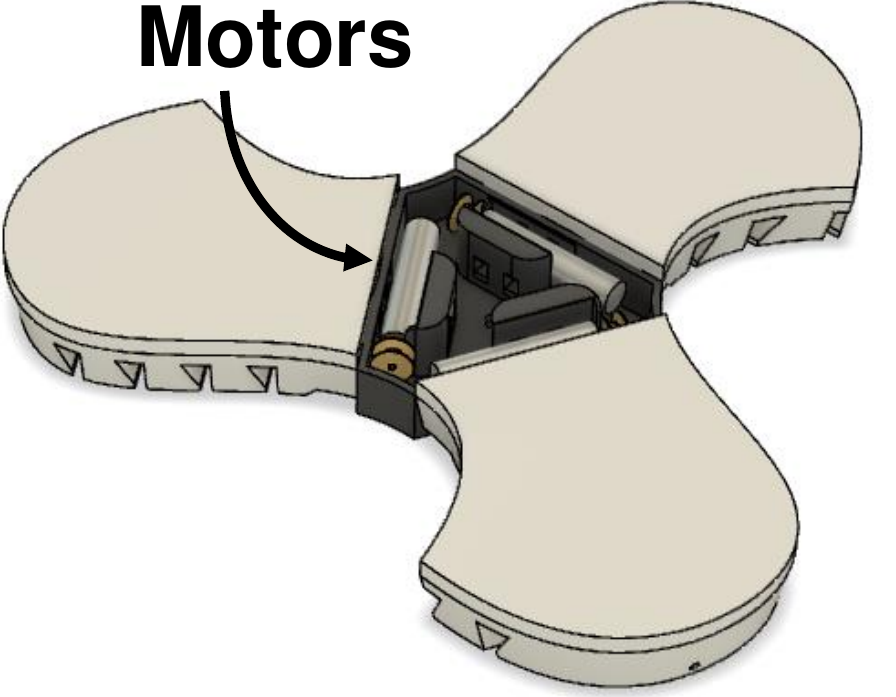}}\hfill
\subfloat[][]{\includegraphics[page=1,width=.30\columnwidth,trim= 16cm 4.9cm 2.6cm .9cm, clip=true]{figures/MTA3_fig.pdf}}\hfill
\subfloat[][]{\includegraphics[width=.31\columnwidth,trim= 22cm 8cm 15.4cm 7.8cm, clip=true]{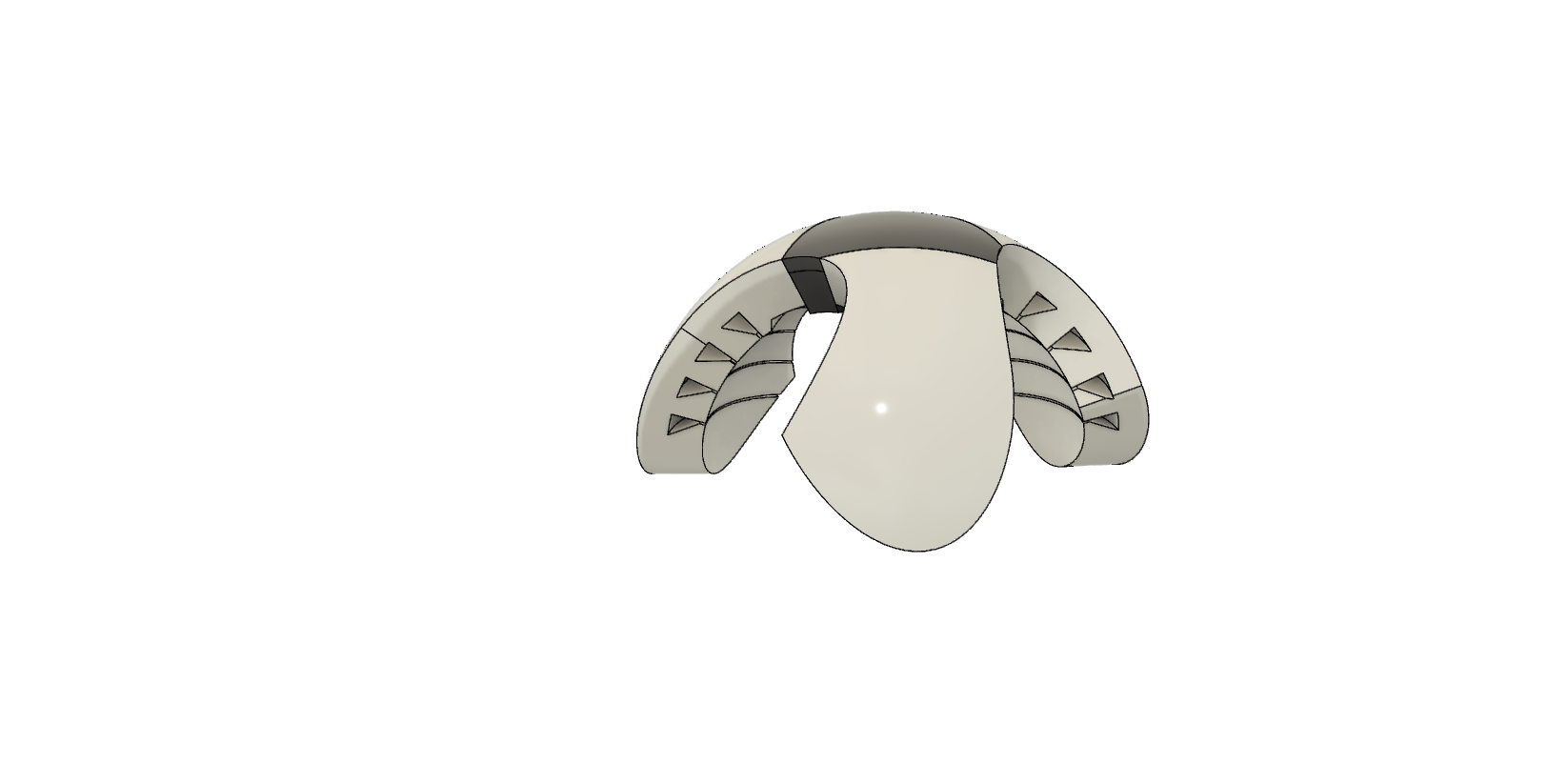}}

\caption{\TriSoRo~CAD drawings with (a) hub cap removed to show motors unactuated state, (b) bottom view with tendon routing (red), and (c) spherical curling in fully actuated state.}
\label{Fig:fourlimbSoRo}
\end{center}
\begin{center}
\subfloat[][]{\includegraphics[width=.5\columnwidth]{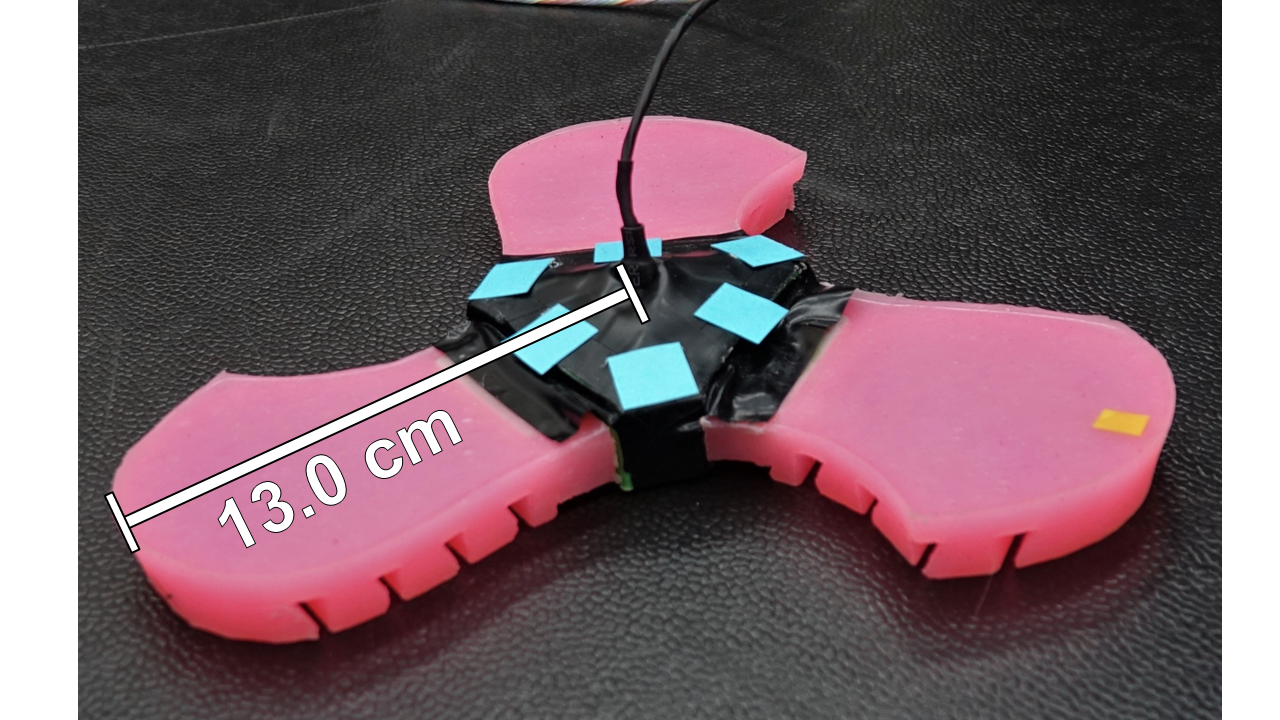}} \hfill
\subfloat[][]{\includegraphics[width=.5\columnwidth]{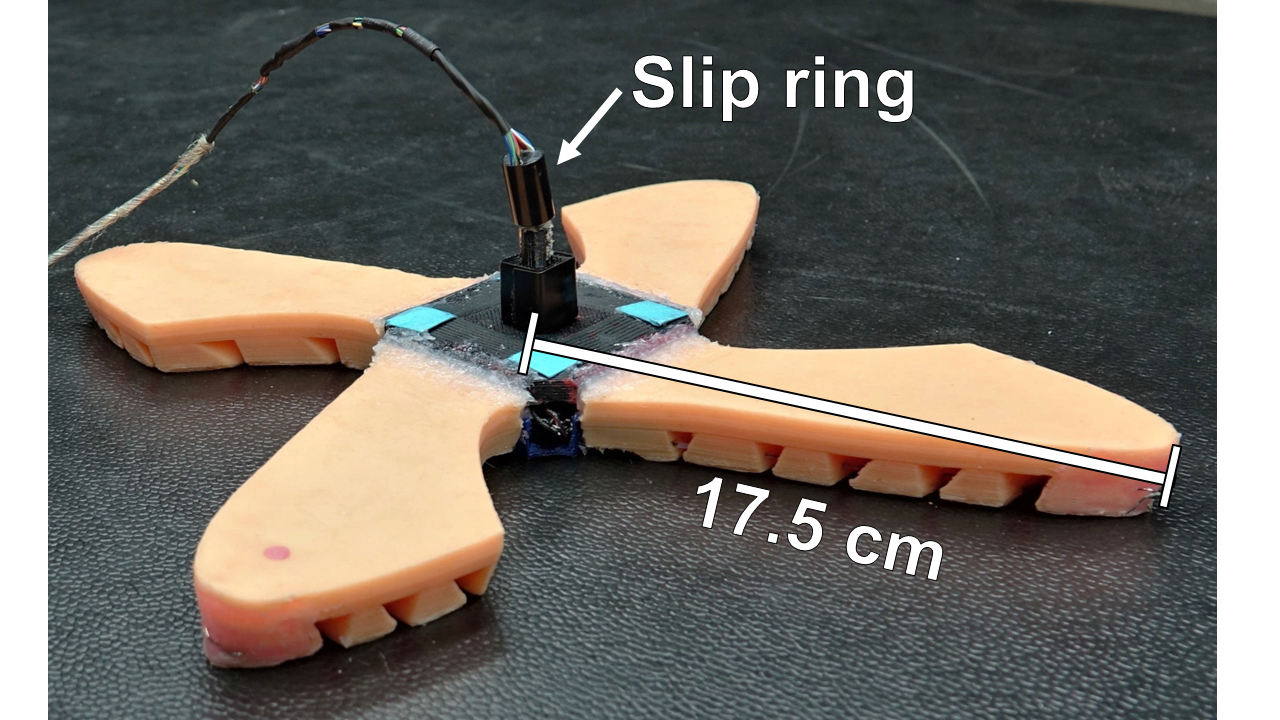}}
\caption{Soft MTA robots with labelled distance from robot center to limb tip: (a) three-limb \TriSoRo~and (b) four-limb \TetraSoRo~with slip ring tether connector.}
\label{Fig:robots}
\end{center}
\end{figure}

\subsection{Experimental Setup}

Experiments are performed on flat substrates placed on top of a laboratory floor and secured to prevent slippage. Three substrates with varying frictional and roughness properties are used in this experiment: a rubber garage mat, a whiteboard, and a carpet. As shown in Fig.~\ref{Fig:expsetup}, the robot is placed in the center of the substrate and fluorescent markers (fiducials) are placed on the hub. Two overhead webcams (approx. 30 fps) are placed above to capture the substrate with a field of view of approximately 4 feet by 7 feet. One webcam is reserved for capturing video of the experiments while the other adjusts the camera properties (e.g., contrast, brightness, etc.) to facilitate tracking by improving image segmentation of the fluorescent markers. Both cameras are calibrated to account for distortion using the MATLAB Camera Calibrator App and a printed checkerboard pattern. 
\begin{figure}
    \centering
    \subfloat[][]{\includegraphics[page=2,width=.53\columnwidth,trim= 7.6cm 0.5cm 15cm .3cm, clip=true]{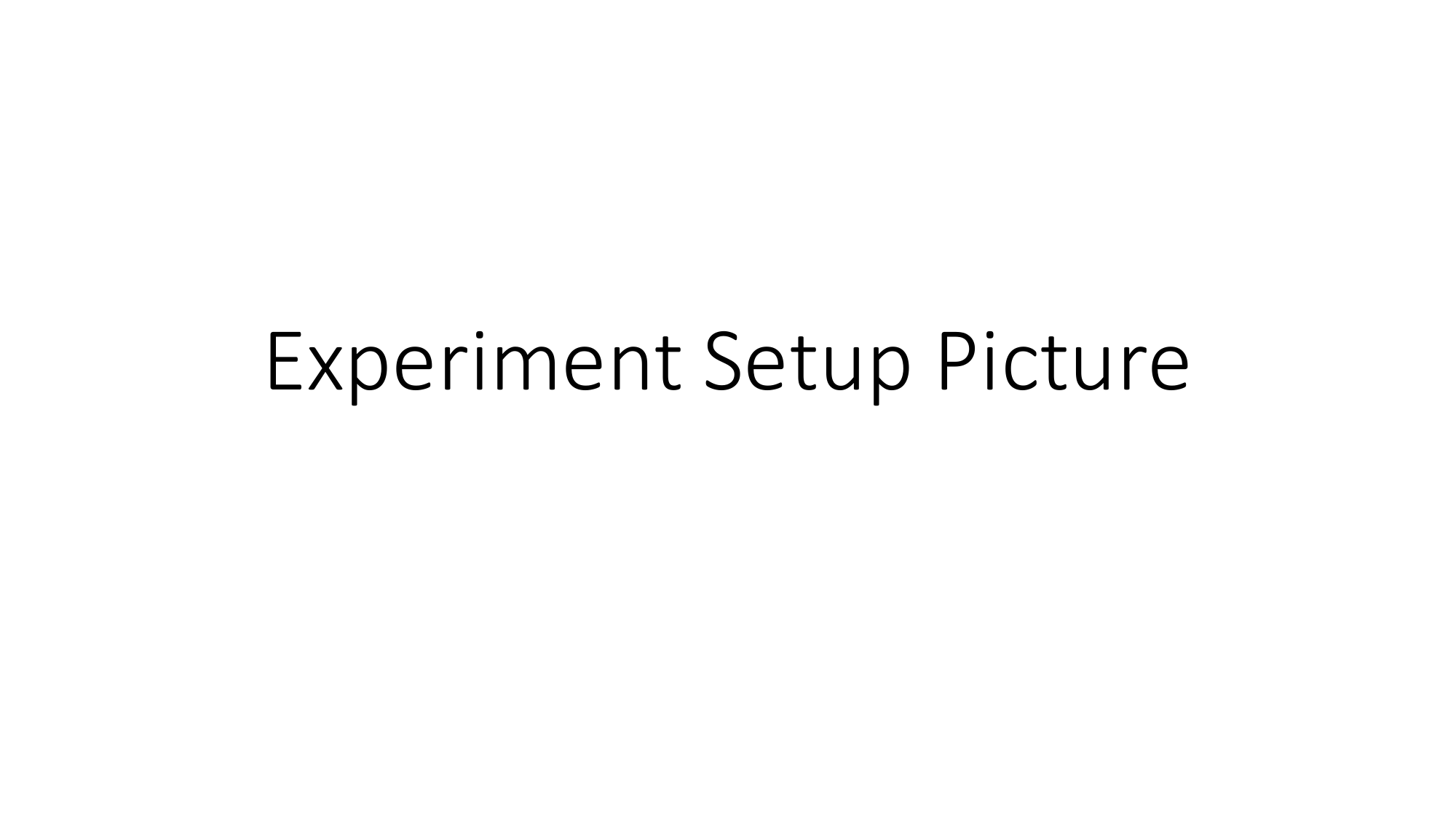}} \hfill
    \subfloat[][]{\includegraphics[page=2,width=.43\columnwidth,trim= 19cm 0.5cm 5.8cm .3cm, clip=true]{figures/Exp_Setup_New.pdf}}
    \caption{(a) Experimental setup with the soft robot placed on top of a removable flat substrate with (b) alternate substrates. A tether cable connects the robot to the microcontroller, which connects to the power supply. }
\label{Fig:expsetup}
\end{figure}

Tracking of the robot is performed frame by frame in MATLAB by using the Color Thresholder App to segment the markers as regions of interest in a binary mask. The marker centroids are calculated and recorded for each frame. The robot pose $[x,y,\theta]^T$ can then be estimated by finding the least-squares solution for the rotation and translation between the first frame and the current frame \cite{arun_least-squares_1987}. In cases where the tether occludes one or more markers, the marker positions(s) can be reconstructed from the estimated pose by finding the nearest neighbor from the previous frame.

\subsection{Experimental Procedure}

Experiments for each robot-environment pair are performed using the following procedure.

\subsubsection{Digraph Construction}
Before beginning experiments, two generic unweighted digraphs are constructed as described in Section \ref{Sec:Basics}~that are specific to the \TriSoRo~and \TetraSoRo, respectively. These digraphs are environment-agnostic and are used to abstractly model all robot states and motion primitives. The \TriSoRo~has 8 robot states with 56 motion primitives while the \TetraSoRo~has 16 robot states with 240 motion primitives. The objective of the experiments is for the robots to learn environment-specific digraph weights and validate the synthesized environment-specific gaits.

\subsubsection{Edge Weight Learning Experiments}
 The robot executes five randomized Eulerian cycles (Algorithm  \ref{Algo:ModifiedHeriholzers}) and the resulting motion is recorded, as detailed in Section \ref{Subsec:Hierholzer}. The total experiment time for the five trials of a given robot-environment pair is determined by $t_{\text{total}} = 5m\tau$ for $m$ motion primitives with $\tau$ time constant and is $5\times(56)\times(550 \text{ ms}) = 2.6$ minutes for the \TriSoRo~and $5\times(240)\times(450 \text{ ms}) = 9.0$ minutes for the \TetraSoRo. These data are then processed in MATLAB to extract and sort the digraph weights as measured in the initial vertex coordinate frame. 

\subsubsection{Gait Synthesis}
Once the edge weights of the digraph have been calculated, the translation and rotation gaits are synthesized separately using the gait synthesis algorithm, \Fig \ref{Fig:flowchart}. The intervals for the swept hyperparameters $\alpha_t$ and $\alpha_\theta$ are set to $[-1,1]$ to permit translation in all directions and both clockwise and counterclockwise rotation, respectively. Expected gait motion is then calculated for each gait and the remaining hyperparameters ($\beta$ and $\gamma$) are manually adjusted as needed to achieve useful gait motion. Tolerances ($\varepsilon_t$ and $\varepsilon_\theta$) are also chosen depending on the motion and size of the robot. Third, each gait is tested for a total of 120 motion primitives (66 seconds for the \TriSoRo~and 54 seconds for the \TetraSoRo) to give sufficient time to investigate the motion and variance of the gaits. 

\subsubsection{Gait Validation Experiments}
Finally, the gait experiment data (robot trajectories and rotation over time) are processed and aligned to start at the same initial global orientation of the robot at the origin of the coordinate grid. The optimality of the optimization is explored in Section \ref{subsubsec:optimality}. 



\section{Results and Discussion} \label{Sec:Results}
\Tab\ref{Tab:ExpMat} presents the matrix of planned experiments aimed at investigating and analyzing the extent to which robot-environment interactions affect locomotion, and the applicability and effectiveness of the proposed gait synthesis strategy. First, both robots execute intuitive gaits that are chosen manually. Both robots lack direct robotic analogues (due to the specific combined inching and rotating motion of the limbs) and direct biological analogues (due to being rotationally symmetric with three or four limbs). Therefore, three intuitive gaits for each robot are chosen based loosely on gaits that have successfully been performed on rotationally symmetric robots and starfish. Then, gait synthesis is performed for Robot 1 (\TriSoRo) on three different substrates: 1) a black rubber garage mat; 2) a whiteboard; and 3) a carpet. Three translation gaits and three rotation gaits are synthesized for each substrate, and the results are recorded. For Robot 2 (\TetraSoRo), gait synthesis is only performed on one substrate (rubber mat), but is also performed in the case of simulated actuator failure (loss of limb scenario).

\begin{table}[ht]
\captionsetup{justification=centering, labelsep=newline,textfont=sc}
    \caption{Experiment Matrix.}
        \label{Tab:ExpMat}
\centering
\renewcommand{\arraystretch}{1.3}
    \noindent\begin{tabular*}{\columnwidth}{@{\extracolsep{\fill}}*{7}{c}@{}}
    \hline
            \multirow{2}{*}{\hfil Robot} & \multicolumn{3}{c}{Substrate} & Intuitive & \multicolumn{2}{c}{Synthesized Gaits}\\
            \cline{2-4} \cline{6-7}
           & 1 & 2 & 3 & Gaits & Standard & Loss-of-Limb\\
            \hline \hline
            1 & $\bm{\times}$& $\bm{\times}$ & $\bm{\times}$ & $\bm{\times}$& $\bm{\times}$&\\
             2 & $\bm{\times}$ & &  & $\bm{\times}$ & $\bm{\times}$ & $\bm{\times}$\\
            \hline
    \end{tabular*}
\end{table}

 Analysis of preliminary experiments indicated a significant influence of the tether placement and direction on robot behavior. This was attributed to the weight and relative stiffness of the tether, as well as swinging of the tether when connected to an overhead component or interference with the robot when allowed to drag on the floor. To address this, the tether was redesigned to be skinnier, lighter, and more flexible. The tether connector was changed to include a vertical offset to avoid interference with the robot, and a slip ring was added to the four-limb robot. Repeatability experiments were then conducted by executing the same locomotion gait with the tether placed in different orientations relative to the robot; the results showed a significant reduction in locomotion variation due to the redesigned tether, resulting in an acceptable level of repeatability.

Readers may refer to the attached multimedia for the experimental videos of selected synthesized gaits and a visual outline of the pMFC framework.

\subsection{Robot 1: \TriSoRo}
This section details the results of both intuitive/ ad hoc gaits and synthesized gaits for the three-limb robot on three different substrates. %
The optimality of the BILP formulation approach is analyzed by comparing the results with those of an \textbf{exhaustive gait search} where the nonlinear cost functions \eqref{Eqn:nlcostfuncs} are evaluated for all possible locomotion gaits. 
This analysis is only performed for the three-limb robot as increasing the number of limbs (even to four limbs) leads to combinatorial explosion and the intractability of exhaustive gait search.
\subsubsection{Intuitive Gaits}
First, manually chosen gaits are executed on all three substrates. These gaits, detailed in \Tab\ref{Tab:Int3}, are loosely inspired by the actuation sequences of pronk and rotary gallop gaits that are commonly executed in rotationally symmetric robots\cite{Masuda_2017jrm,7989652}, and push-pull gaits observed in biological starfish and starfish-inspired robots\cite{MAO2014400}, respectively. The sequence of discrete robot states (vertices), $V(L)$, for each gait is shown. Shaded (maroon) limbs represent limb actuation. As the actuation of each limb is designed to have a rotational component and is rotationally symmetric about the robot's center hub, the intuitive belief is that the symmetric actuation sequences of both pronk and rotary gallop gaits would induce only rotation.

\begin{table}[hb]
\renewcommand\arraystretch{2}
\begin{center}
\caption{\TriSoRo~Intuitive Gaits (Not Synthesized).}
\label{Tab:Int3}
\begin{tabular*}{\columnwidth}{@{\extracolsep{\fill}}cccc@{}}
\hline           
Gait & $V(L)$ & Robot States & Inspiration\\           \hline \hline         
$L_1$ &  $[1,8]$ & \raisebox{-.3\totalheight}{\includegraphics[width=.5cm]{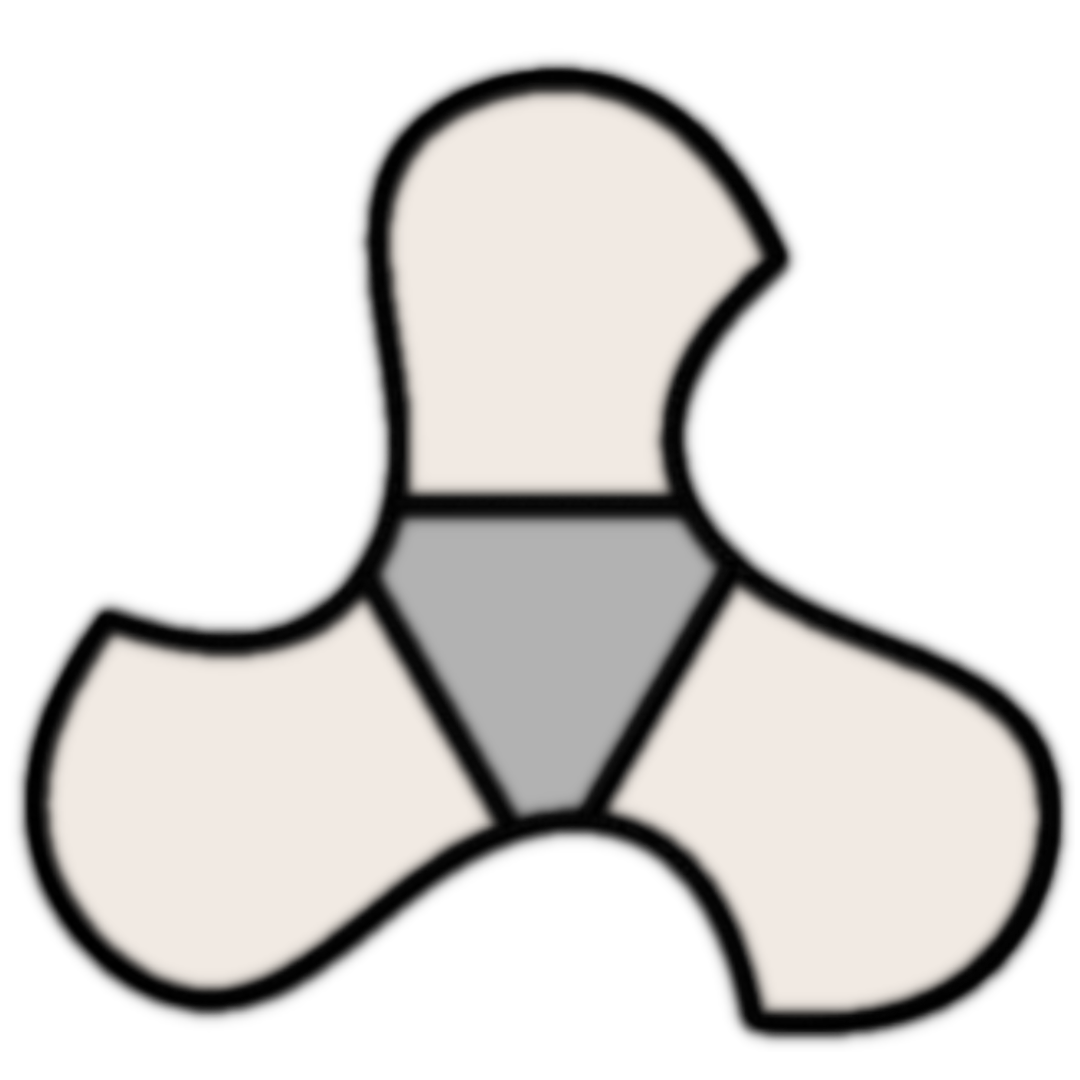}} \kern-.3em $\pmb{\rightarrow}$  \kern-.3em \raisebox{-.3\totalheight}{\includegraphics[width=.5cm]{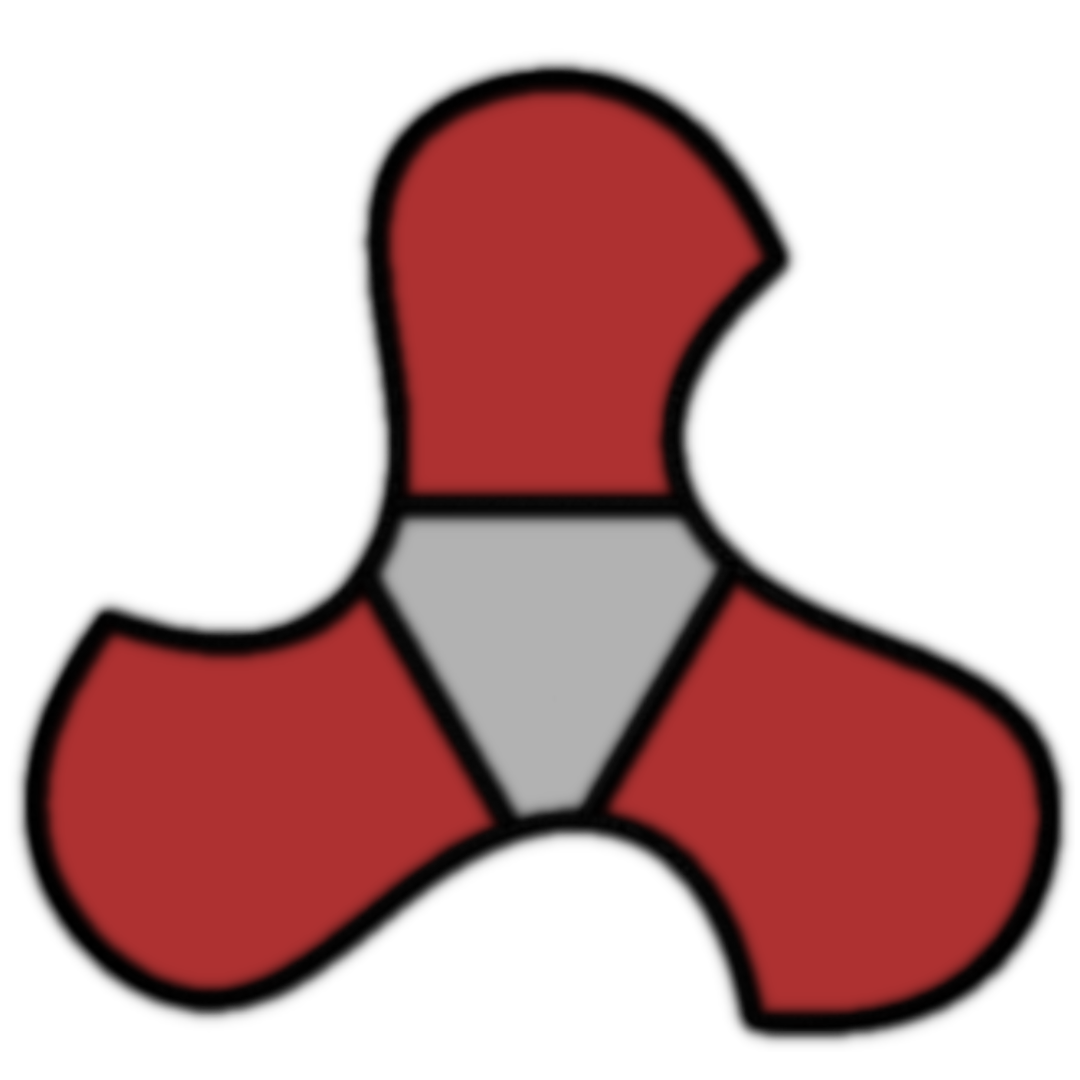}}& pronk\\            
        
$L_2$ &  $[2,3,5]$ & \raisebox{-.3\totalheight}{\includegraphics[width=.5cm]{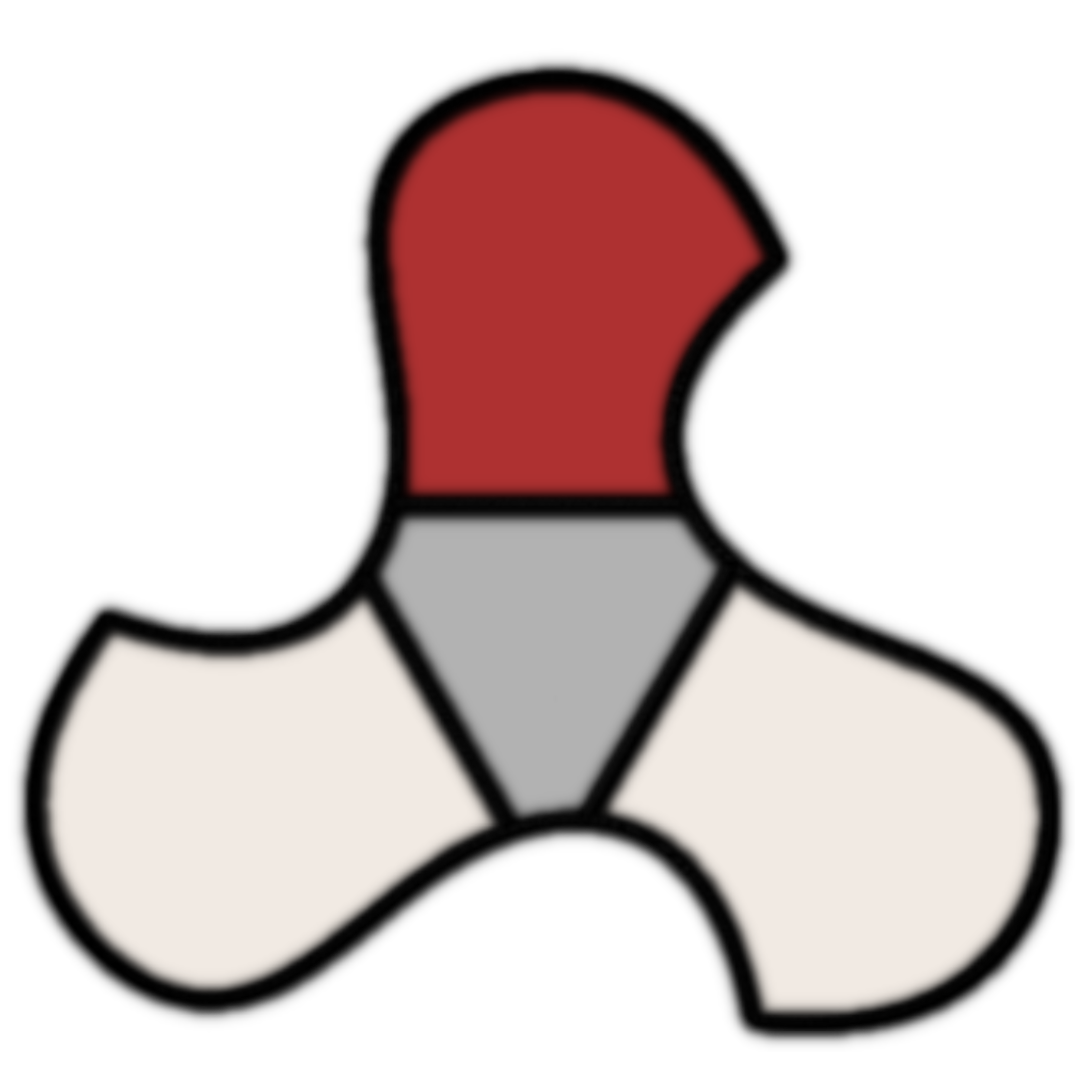}} \kern-.3em $\pmb{\rightarrow}$  \kern-.3em \raisebox{-.3\totalheight}{\includegraphics[width=.5cm]{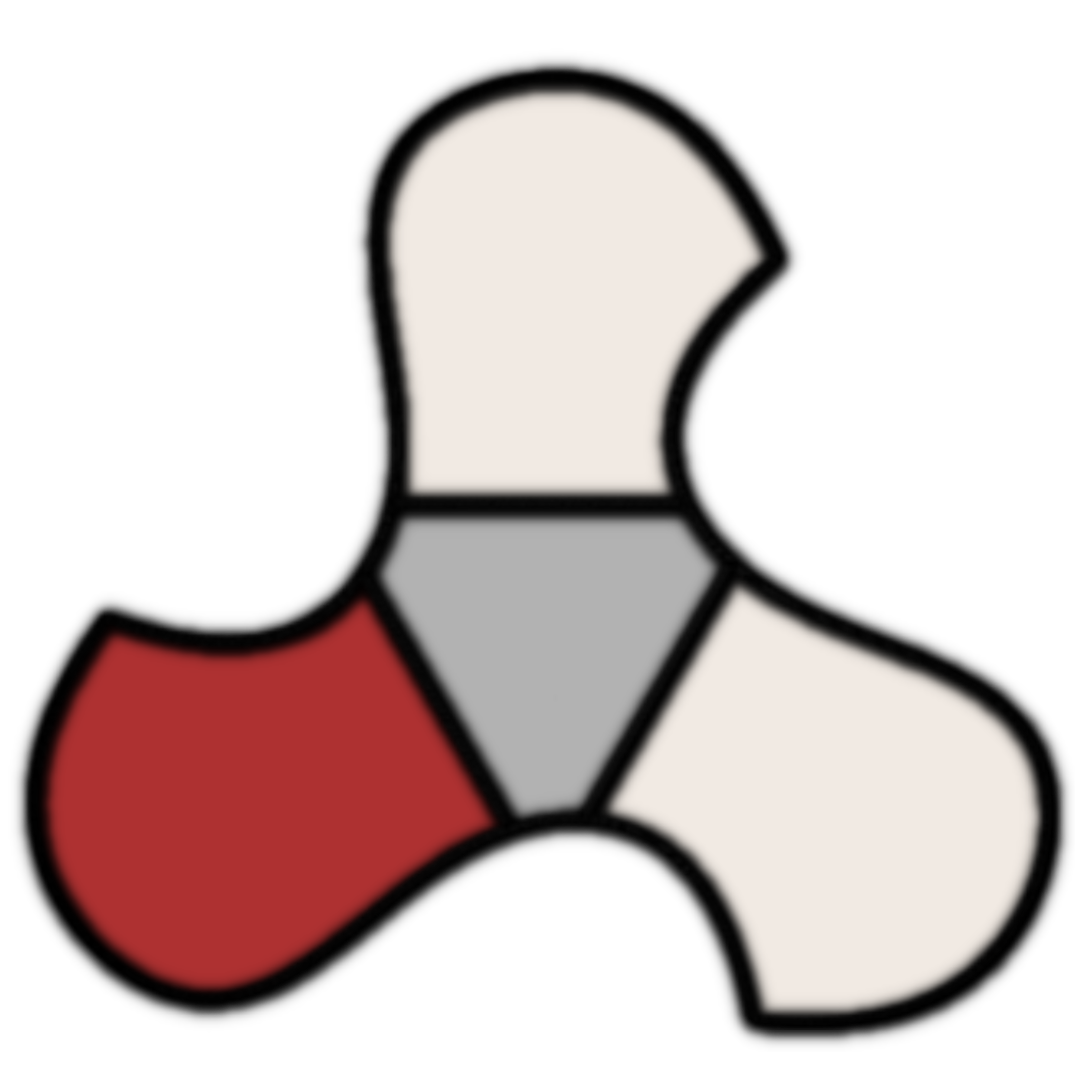}}$\pmb{\rightarrow}$  \kern-.3em \raisebox{-.3\totalheight}{\includegraphics[width=.5cm]{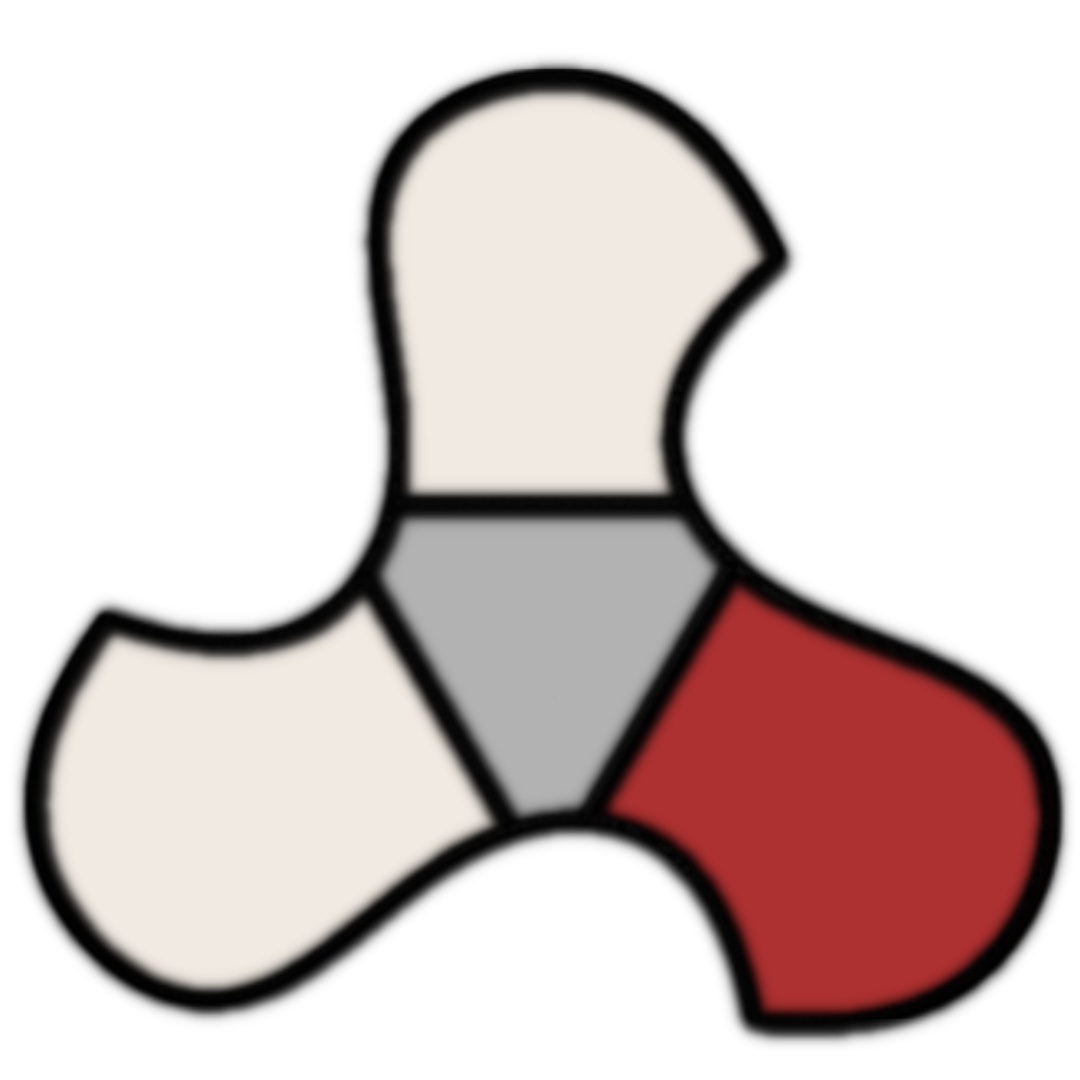}} & rotary gallop\\            
            
$L_3$ &  $[5,8,4,1]$ &\raisebox{-.3\totalheight}{\includegraphics[width=.5cm]{figures/MTA3_State_5}} \kern-.3em $\pmb{\rightarrow}$  \kern-.3em \raisebox{-.3\totalheight}{\includegraphics[width=.5cm]{figures/MTA3_State_8}}$\pmb{\rightarrow}$  \kern-.3em \raisebox{-.3\totalheight}{\includegraphics[width=.5cm]{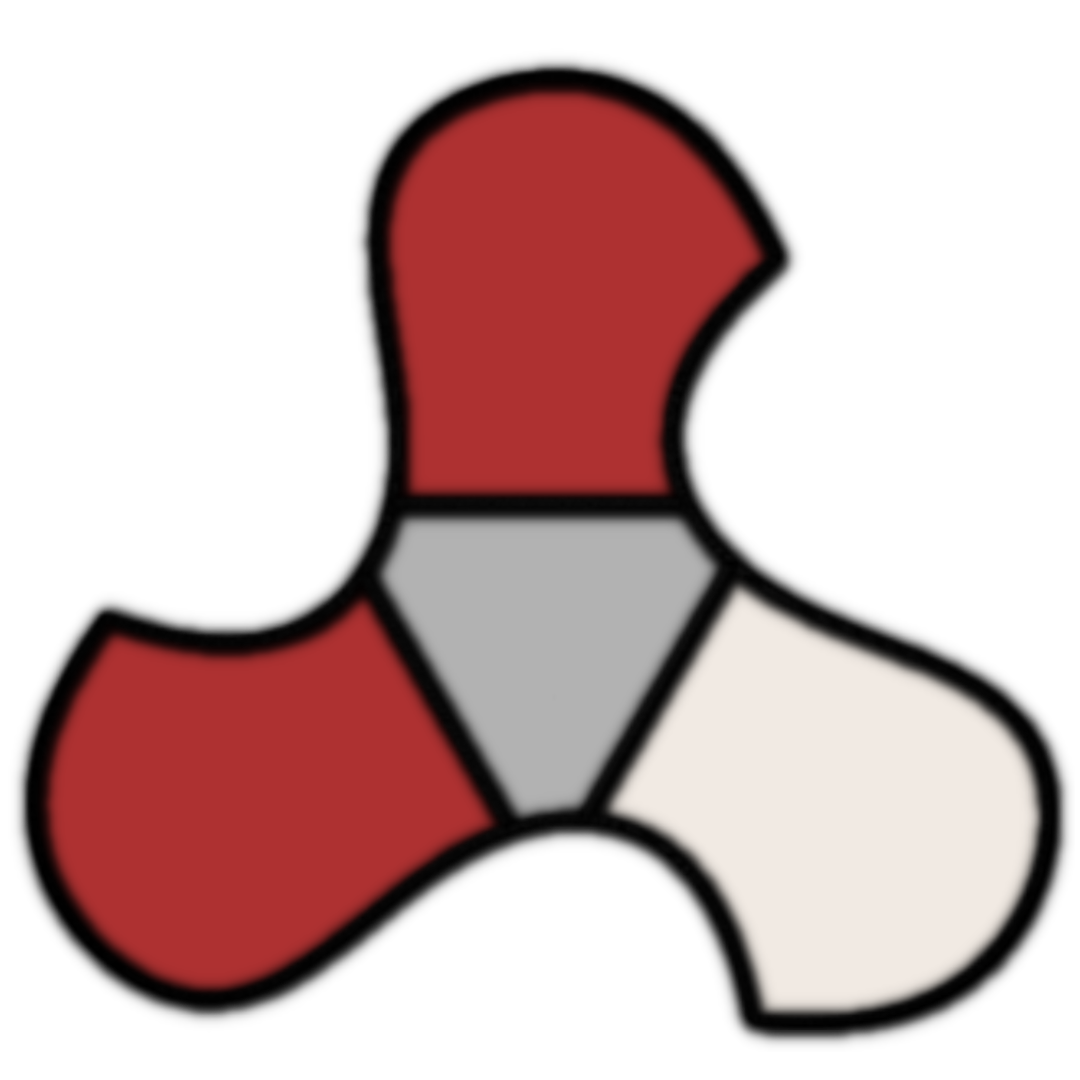}} \kern-.3em $\pmb{\rightarrow}$  \kern-.3em \raisebox{-.3\totalheight}{\includegraphics[width=.5cm]{figures/MTA3_State_1}}& push-pull\\            
\hline    
\end{tabular*}
\end{center}
\end{table}
\begin{figure}[ht]
\centering
    \includegraphics[width=.9\columnwidth,trim=1.5cm 23.5cm 3cm 3.4cm, clip=true]{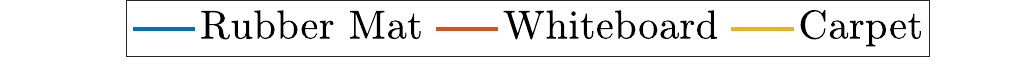} \\[-2.2ex] 

    \subfloat[][]{\includegraphics[width = .43\columnwidth,trim=1.5cm 0.2cm 2.5cm .3cm, clip=true]{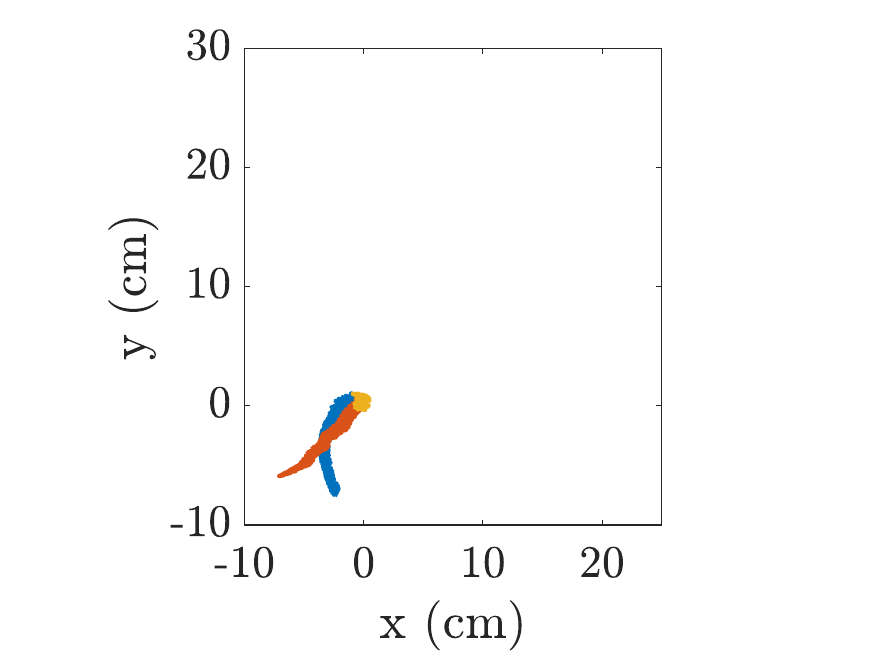}} \label{7a}\hfill
    \subfloat[][]{\includegraphics[width = .54\columnwidth,trim= 0cm 0.1cm 1cm .3cm, clip=true]{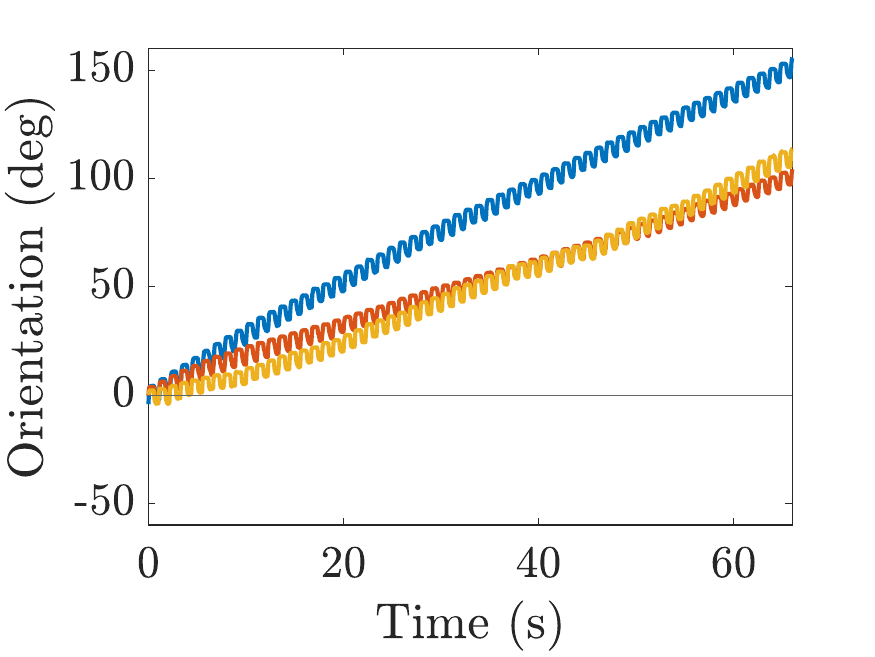}}\\[-2ex] 
    \subfloat[][]{\includegraphics[width = .43\columnwidth,trim=1.5cm 0cm 2.5cm 0cm, clip=true]{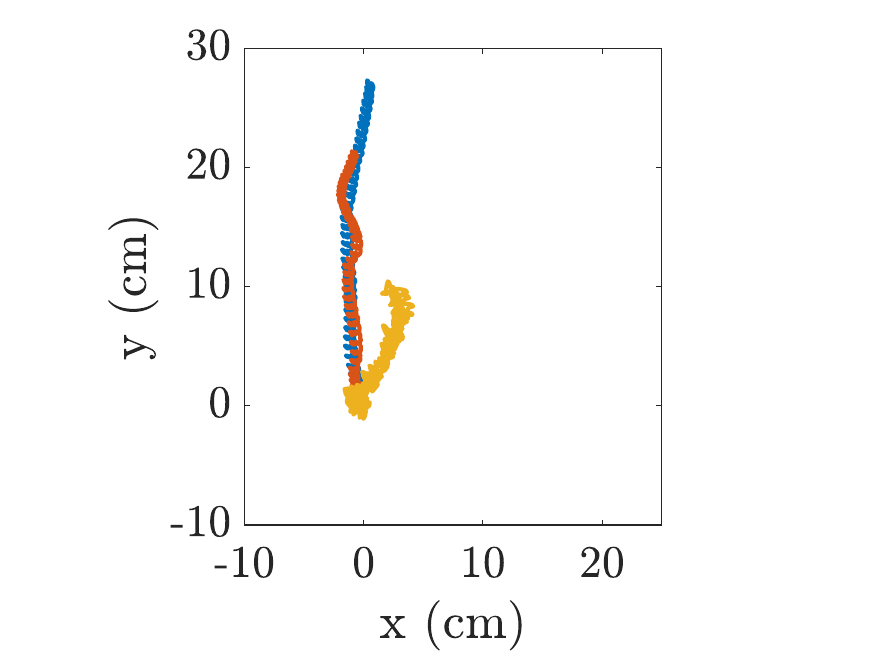}}\hfill
    \subfloat[][]{\includegraphics[width = .54\columnwidth,trim= 0cm 0cm 1cm 0cm, clip=true]{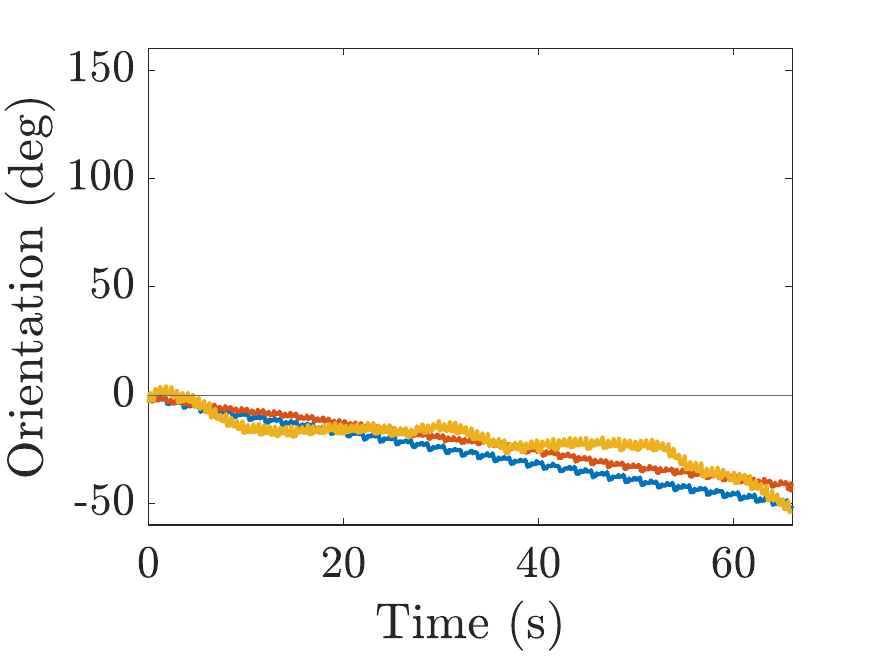}}\\[-2ex] 
    \subfloat[][]{\includegraphics[width = .43\columnwidth,trim=1.5cm 0cm 2.5cm 0cm, clip=true]{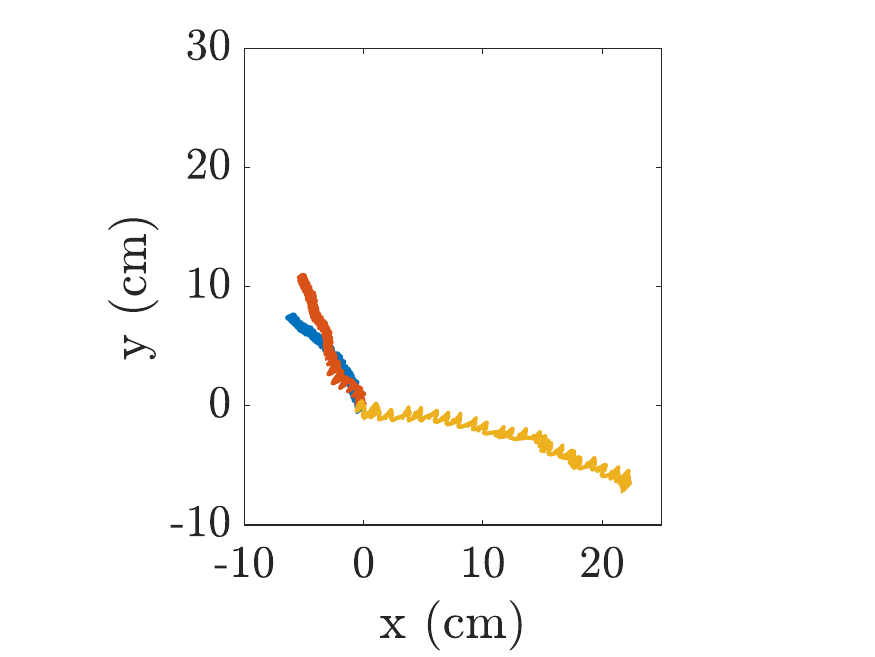}}\hfill
    \subfloat[][]{\includegraphics[width = .54\columnwidth,trim= 0cm 0cm 1cm 0cm, clip=true]{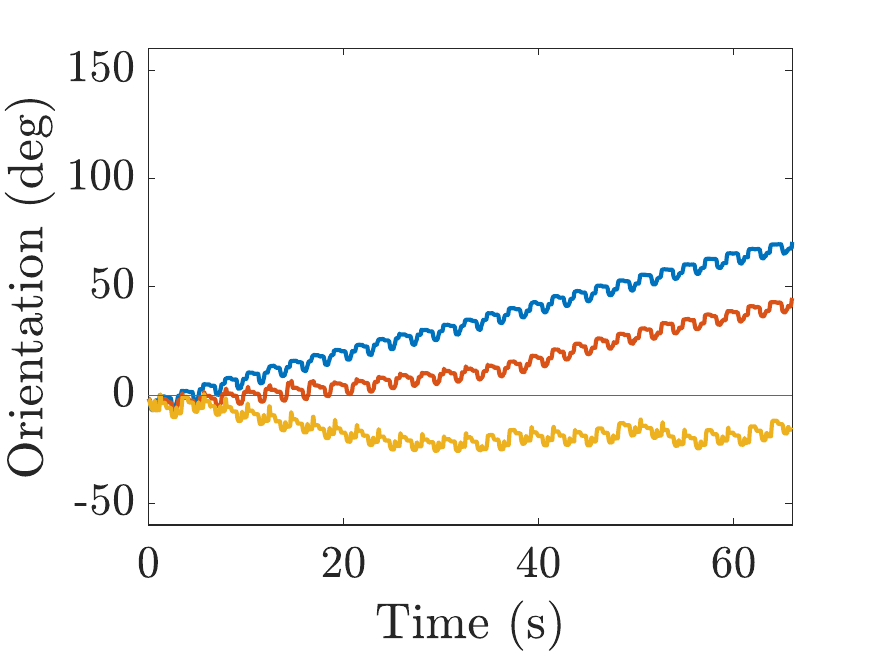}}\par
    \caption{Experimental results for \TriSoRo~performance of the intuitive gaits (Table \ref{Tab:Int3}) on three different substrates. (a,c,e) Robot trajectories and (b,d,f) global robot orientation over time are plotted for gaits $L_1$, $L_2$, and $L_3$, respectively. Each plot contains experimental data for three different substrates: rubber mat (blue), whiteboard (orange), and carpet (yellow). }
    \label{Fig:Int3}
\end{figure}
However, Fig.~\ref{Fig:Int3}(a-b) shows that the pronk-like gait results in a small amount of translation on two of the substrates, indicating some unexpected asymmetrical distribution of forces. Similarly, the rotary gallop-like gait, Fig.~\ref{Fig:Int3}(c-d), results in large net translation on two of the substrates. Finally, the push-pull gait, Fig.~\ref{Fig:Int3}(e-f), which is expected to be translation-dominant, only achieves useful translation on carpet. These results suggest that unexpected asymmetries (which can result from several sources including manufacturing inaccuracies and surface defects of the substrate) can have substantial influence on gait behavior, hence, motivating a more data-driven approach. Further, these gaits have different performance on different substrates; the push-pull gait translates in almost the opposite direction on carpet as it does on the rubber mat and the whiteboard. This further justifies an environment-centric gait synthesis strategy.

\begin{table}
\renewcommand\arraystretch{2}
\begin{center}
\caption{\TriSoRo~Synthesized Gaits on Substrate 1 (Rubber Mat).}
\label{Tab:sub1}
\noindent\begin{tabular*}{\columnwidth}{@{\extracolsep{\fill}}ccc@{}}
\hline        
Gait & $V(L)$ & Robot States \\           \hline \hline  
$^1L_{t1}$ &  $[6,3]$ & \raisebox{-.3\totalheight}{\includegraphics[width=.5cm]{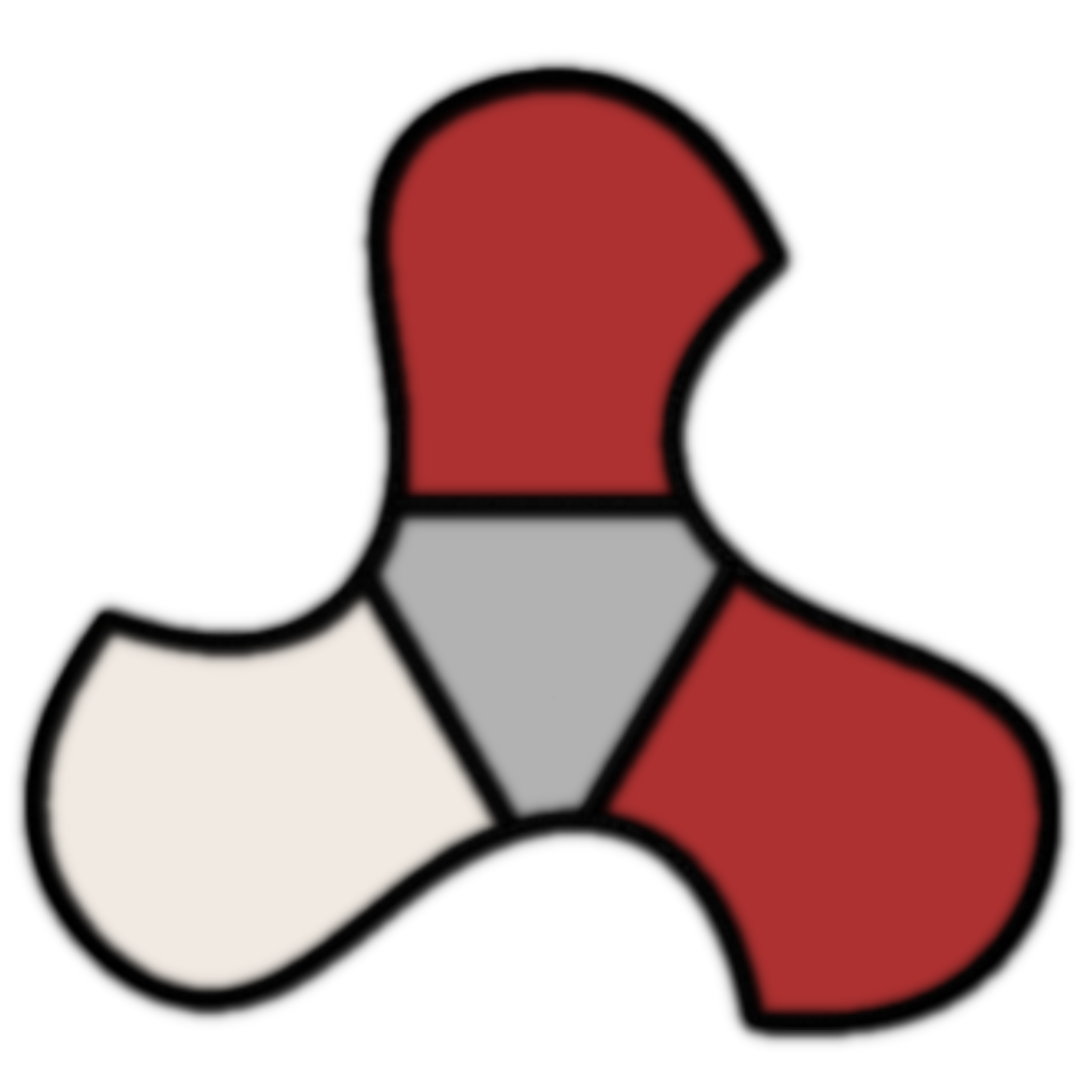}} \kern-.3em $\pmb{\rightarrow}$  \kern-.3em \raisebox{-.3\totalheight}{\includegraphics[width=.5cm]{figures/MTA3_State_3}}\\            
        
$^1L_{t2}$ &  $[8,2,1]$ & \raisebox{-.3\totalheight}{\includegraphics[width=.5cm]{figures/MTA3_State_8}} \kern-.3em $\pmb{\rightarrow}$  \kern-.3em \raisebox{-.3\totalheight}{\includegraphics[width=.5cm]{figures/MTA3_State_2}}$\pmb{\rightarrow}$  \kern-.3em \raisebox{-.3\totalheight}{\includegraphics[width=.5cm]{figures/MTA3_State_1}} \\            
            
$^1L_{t3}$ &  $[6,4,5,7,1]$ &\raisebox{-.3\totalheight}{\includegraphics[width=.5cm]{figures/MTA3_State_6}} \kern-.3em $\pmb{\rightarrow}$  \kern-.3em \raisebox{-.3\totalheight}{\includegraphics[width=.5cm]{figures/MTA3_State_4}}$\pmb{\rightarrow}$  \kern-.3em \raisebox{-.3\totalheight}{\includegraphics[width=.5cm]{figures/MTA3_State_5}} \kern-.3em $\pmb{\rightarrow}$  \kern-.3em \raisebox{-.3\totalheight}{\includegraphics[width=.5cm]{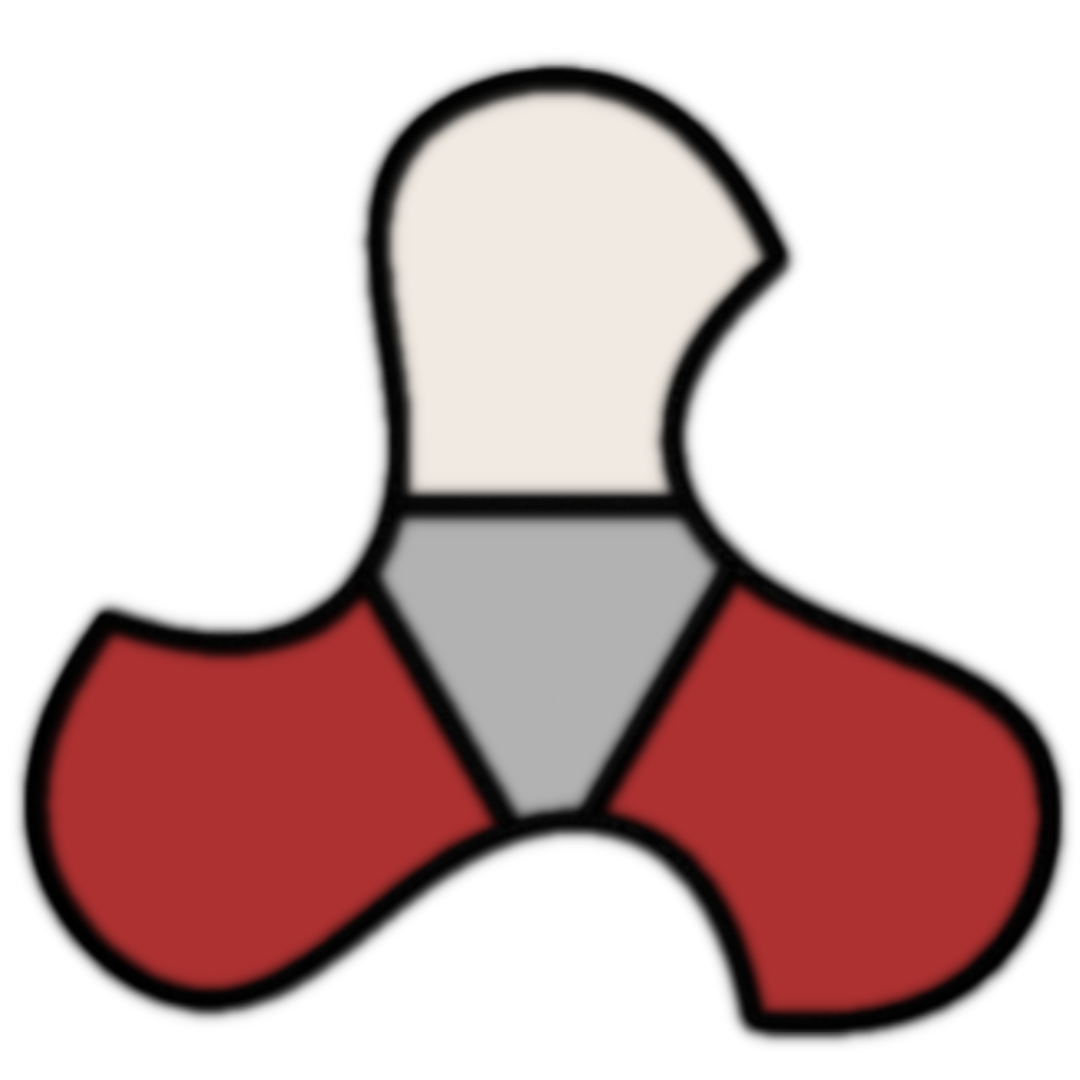}} \kern-.3em $\pmb{\rightarrow}$  \kern-.3em \raisebox{-.3\totalheight}{\includegraphics[width=.5cm]{figures/MTA3_State_1}}\\            
\hline    \hline
$^1L_{\theta 1}$ &  $[4,2,1]$ & \raisebox{-.3\totalheight}{\includegraphics[width=.5cm]{figures/MTA3_State_4}} \kern-.3em $\pmb{\rightarrow}$  \kern-.3em \raisebox{-.3\totalheight}{\includegraphics[width=.5cm]{figures/MTA3_State_2}} \kern-.3em $\pmb{\rightarrow}$  \kern-.3em \raisebox{-.3\totalheight}{\includegraphics[width=.5cm]{figures/MTA3_State_1}}\\            
        
$^1L_{\theta 2}$ &  $[3, 4, 2, 6, 7, 1]$ & \raisebox{-.3\totalheight}{\includegraphics[width=.5cm]{figures/MTA3_State_3}} \kern-.3em $\pmb{\rightarrow}$  \kern-.3em \raisebox{-.3\totalheight}{\includegraphics[width=.5cm]{figures/MTA3_State_4}}$\pmb{\rightarrow}$  \kern-.3em \raisebox{-.3\totalheight}{\includegraphics[width=.5cm]{figures/MTA3_State_2}} \kern-.3em $\pmb{\rightarrow}$  \kern-.3em \raisebox{-.3\totalheight}{\includegraphics[width=.5cm]{figures/MTA3_State_6}}  \kern-.3em $\pmb{\rightarrow}$ \kern-.3em \raisebox{-.3\totalheight}{\includegraphics[width=.5cm]{figures/MTA3_State_7}} \kern-.3em $\pmb{\rightarrow}$  \kern-.3em \raisebox{-.3\totalheight}{\includegraphics[width=.5cm]{figures/MTA3_State_1}} \\            
            
$^1L_{\theta 3}$ &  $[7, 5, 6, 2,4,1]$ &\raisebox{-.3\totalheight}{\includegraphics[width=.5cm]{figures/MTA3_State_7}} \kern-.3em $\pmb{\rightarrow}$  \kern-.3em \raisebox{-.3\totalheight}{\includegraphics[width=.5cm]{figures/MTA3_State_5}}$\pmb{\rightarrow}$  \kern-.3em \raisebox{-.3\totalheight}{\includegraphics[width=.5cm]{figures/MTA3_State_6}} \kern-.3em $\pmb{\rightarrow}$  \kern-.3em \raisebox{-.3\totalheight}{\includegraphics[width=.5cm]{figures/MTA3_State_2}} \kern-.3em $\pmb{\rightarrow}$  \kern-.3em \raisebox{-.3\totalheight}{\includegraphics[width=.5cm]{figures/MTA3_State_4}} \kern-.3em $\pmb{\rightarrow}$ \kern-.3em \raisebox{-.3\totalheight}{\includegraphics[width=.5cm]{figures/MTA3_State_1}}\\            
\hline    
\end{tabular*}
\end{center}
\end{table}

\subsubsection{Gait Synthesis for Substrate 1 (Rubber Mat)}

\Tab\ref{Tab:sub1} shows the synthesized gaits $^1L$ for the rubber mat. The results are divided into translation-dominant  gaits $^1L_{ti}$ and rotation-dominant gaits $^1L_{\theta i}$ where $i=1,2,3$. Experimental results suggest that the method is effective at finding both translation-dominant, Fig.~\ref{Fig:sub1}(a-b), and rotation-dominant gaits, Fig.~\ref{Fig:sub1}(c-d) (see multimedia video attachment). It should be noted that gait optimality depends on the application. For example, the translation-dominant gait $^1L_{t2}$ has a large translation amplitude but suffers from a rotational component that may be undesirable. On the other hand, the translation-dominant gait $^1L_{t3}$ has very stable straight-line motion but is not as fast as the other gaits. The rotation-dominant gait $^1L_{\theta 1}$ has very minimal translation but has less precise rotational motion compared to the other rotation gaits. The proposed methodology allows the researcher to vary hyperparameters and constraints accordingly to tune these very aspects (e.g., speed, precision, and translation/rotation separation). Furthermore, it permits omnidirectional translation and is able to find both clockwise and counter-clockwise rotational gaits. 

\begin{figure}[ht]
\centering
    \includegraphics[width=.7\columnwidth,trim=4cm 17.8cm 5cm 9.1cm, clip=true]{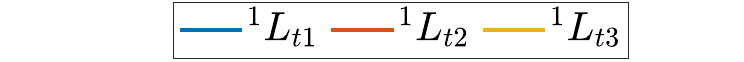} \\[-1.5ex] 
    \subfloat[][]{\includegraphics[width = .43\columnwidth,trim=2cm 0.2cm 2.2cm .8cm, clip=true]{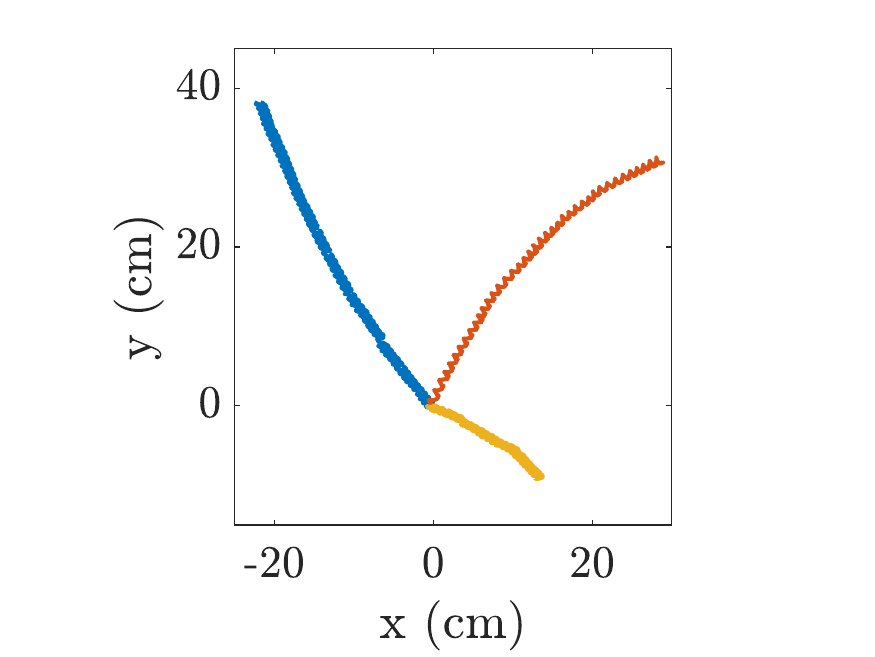}}\hfill
    \subfloat[][]{\includegraphics[width = .54\columnwidth,trim= 0cm 0.1cm 1.3cm .8cm, clip=true]{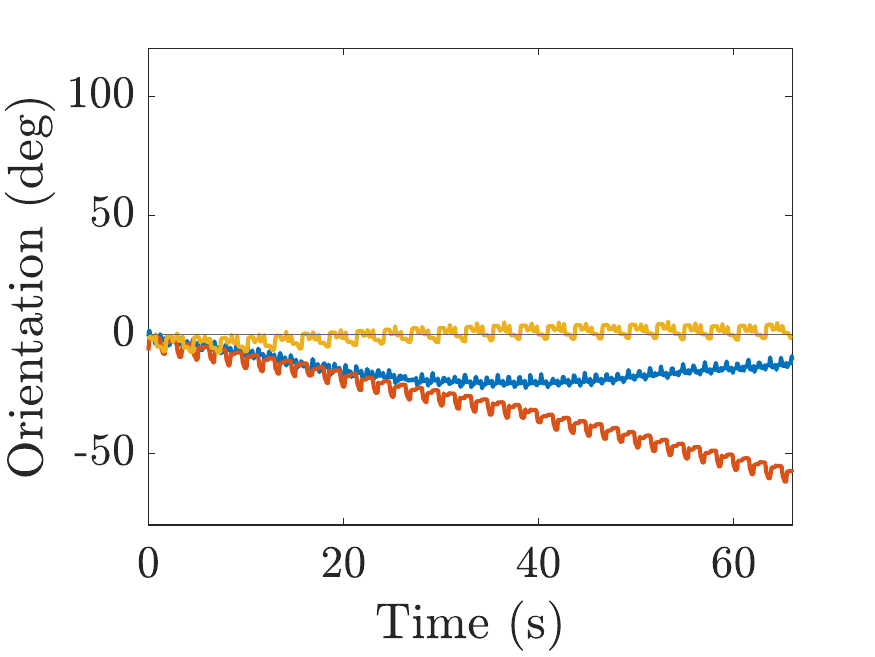}}\par
    
    \includegraphics[width=.6\columnwidth,trim=6cm 19.7cm 5cm 7cm, clip=true]{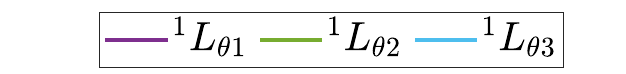} \\[-1.8ex] 
    \subfloat[][]{\includegraphics[width = .43\columnwidth,trim=2cm 0.2cm 2.2cm .8cm, clip=true]{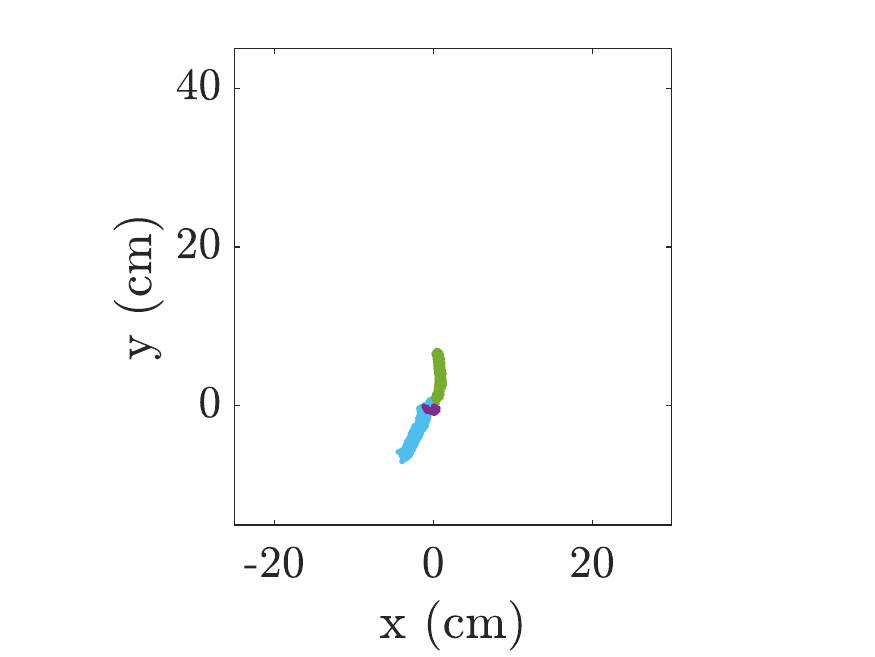}}\hfill
    \subfloat[][]{\includegraphics[width = .54\columnwidth,trim= 0cm 0.1cm 1.3cm .8cm, clip=true]{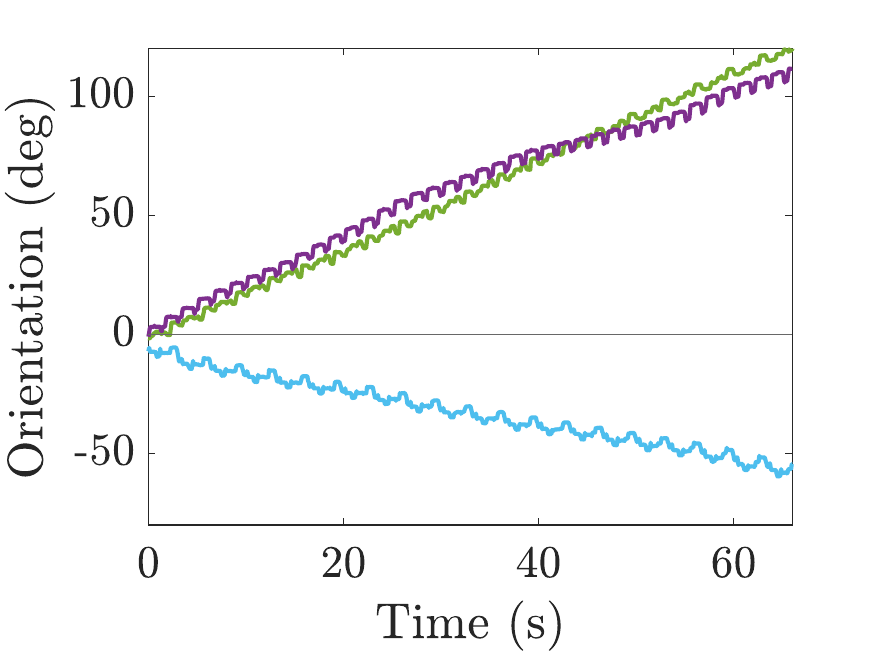}}\par

    \caption{Experimental results of \TriSoRo~gaits synthesized for the rubber mat (substrate 1). The top figures plot the translation-dominant gaits ($^1L_{t1}, ^1L_{t2}, ^1L_{t3}$) with (a) the robot trajectory and (b) the global robot orientation over time, respectively. The bottom figures plot the rotation-dominant gaits ($^1L_{\theta1}, ^1L_{\theta2}, ^1L_{\theta3}$) with (c) the robot trajectory and (d) the global robot orientation over time, respectively.}
    \label{Fig:sub1}
\end{figure}

One may wonder whether a gait that is synthesized to be optimal for one substrate will be optimal and/or effective on others. To investigate this, Fig.~\ref{Fig:1L}
visualizes the results of the translation-dominant gait $^1L_{t1}$ and rotation-dominant gait $^1L_{\theta 1}$ executed on all three substrates (see multimedia video attachment). Interestingly,  $^1L_{t1}$ performs comparably on the rubber mat and the whiteboard, but has a very large rotational component on the carpet. On the other hand, $^1L_{\theta 1}$ performs similarly for the rubber mat and carpet, but has a larger rotational component on the whiteboard. These results re-affirm the need for environment-centric and data-driven gait synthesis.
\begin{figure}[ht]
\centering
    \includegraphics[width=.9\columnwidth,trim=1.5cm 23.5cm 3cm 3.4cm, clip=true]{figures/substrate_legend.pdf} \\[-1.8ex]
    \subfloat[][]{\includegraphics[width = .43\columnwidth,trim=1.7cm 0.2cm 2cm 0.5cm, clip=true]{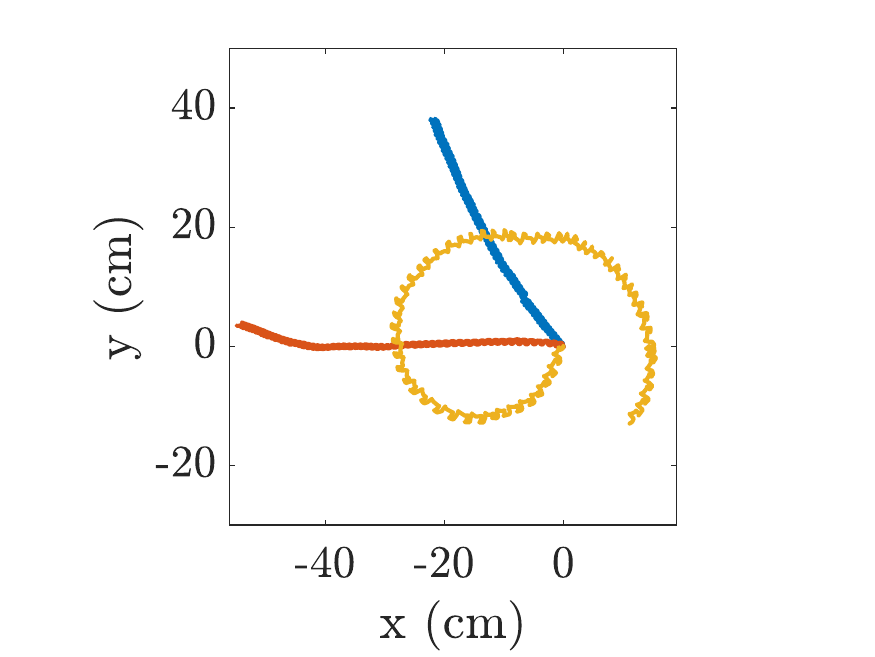}}\hfill
    \subfloat[][]{\includegraphics[width = .53\columnwidth,trim= 0cm 0.1cm 1.3cm .8cm, clip=true]{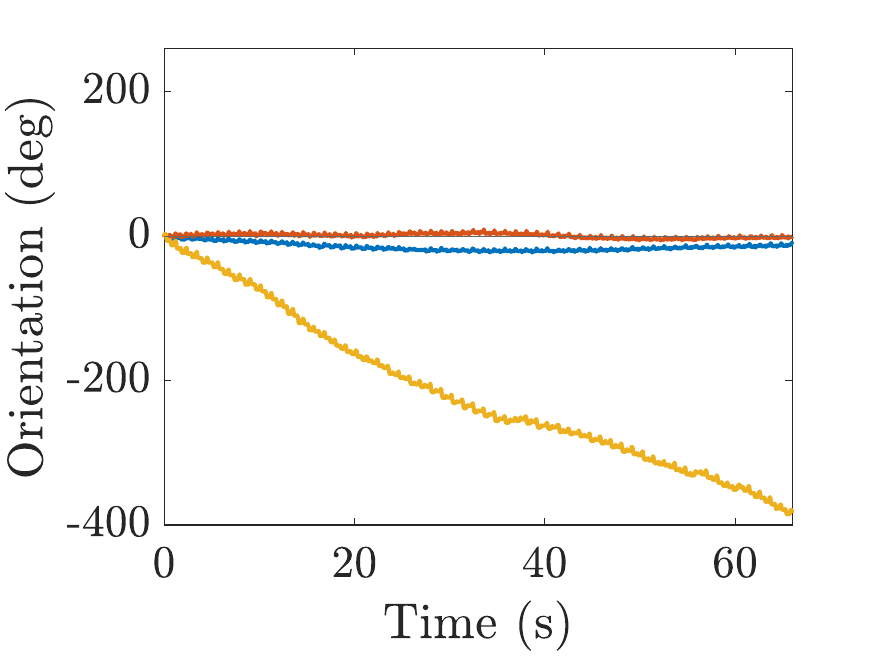}}\\[-1.5ex]

    \subfloat[][]{\includegraphics[width = .43\columnwidth,trim=1.7cm 0.2cm 2cm 0.5cm, clip=true]{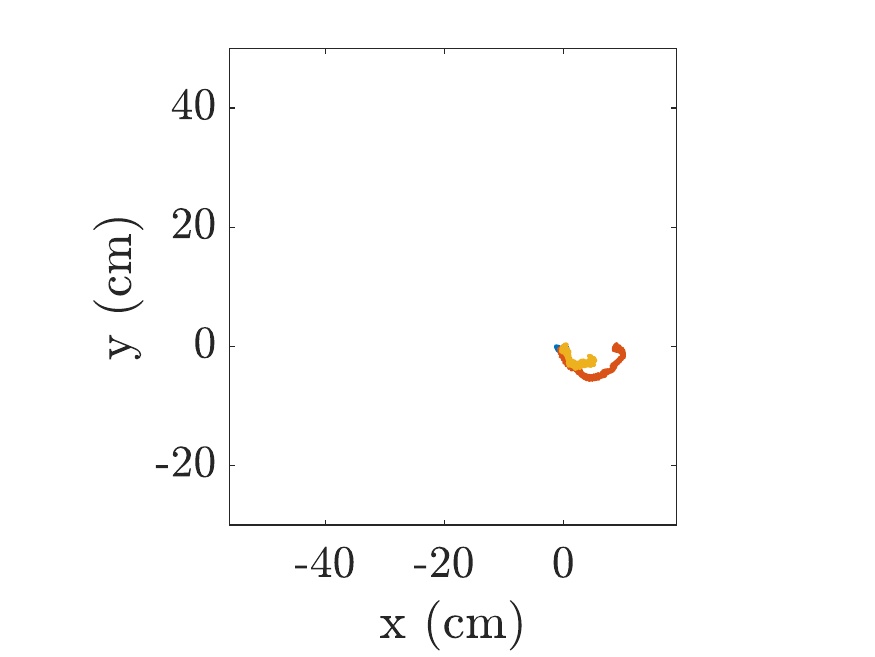}}\hfill
    \subfloat[][]{\includegraphics[width = .53\columnwidth,trim= 0cm 0.1cm 1.3cm .8cm, clip=true]{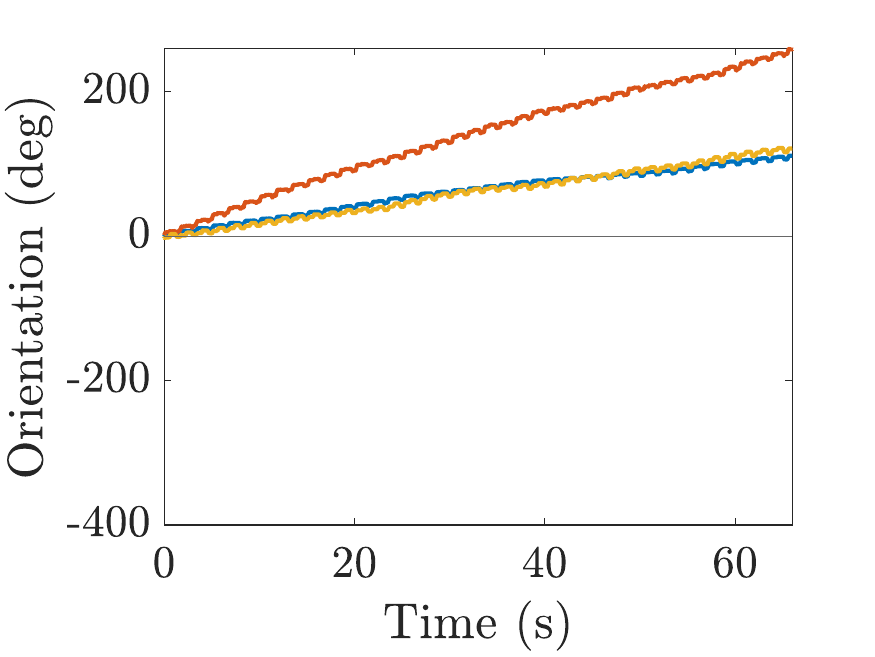}}\par
    \caption{\TriSoRo~gaits synthesized for the rubber mat executed on all three substrates. Translation-dominant gait $^1L_{t1}$ trajectory and orientation results are plotted in (a) and (b), respectively. Rotation-dominant gait $^1L_{\theta 1}$ trajectory and orientation results are plotted in (c) and (d), respectively.}
    \label{Fig:1L}
\end{figure}

\subsubsection{Gait Synthesis for Substrate 2 (Whiteboard)}
Synthesized gaits on the whiteboard are detailed in \Tab\ref{Tab:sub2} and experimental plots are presented in \Fig\ref{Fig:sub2} (see multimedia video attachment for $^2L_{t3}$). As can be seen, the gaits synthesized for this surface are different from the rubber mat ones, suggesting that gait motion and optimality are dependent on the environment (i.e., substrate). Nevertheless, the gait synthesis appears to be fairly successful, although one of the translation gaits $^2L_{t2}$ has a large rotational component. While the hyperparameters can be tuned to avoid this, the variance in the robot motion means that expected (predicted) gait behavior may still vary from its experimental trajectory. The translation-dominant gaits are of similar speed and magnitude to those tested on the rubber mat, whereas the rotation-dominant gaits achieve much faster and precise rotation. This may suggest that it is easier for the robot to achieve rotational motion on this substrate as compared to the rubber mat. 

\subsubsection{Gait Synthesis for Substrate 3 (Carpet)}
For the last substrate, the synthesized gaits for carpet are detailed in \Tab\ref{Tab:sub3} with the experimental results shown in \Fig\ref{Fig:sub3} (see multimedia video attachment for $^3L_{t3}$). Carpet is assumed to be the most difficult surface tested in this experiment as (a) the loop pile texture leads to a rougher and less flat substrate that results in a much more complex robot-environment interaction, especially as silicone material can deform and interact with small hills and valleys in the material, (b) the material is much less homogeneous than the other substrates as it can wear over time, and (c) experiments show that unlike the other substrates, the robot is not able to fully uncurl the limbs at times, leading to higher variance in the motion. All three of these factors contribute to violations of the assumptions of the methodology, which dictate repeatable quasi-static motion (e.g., consistent curling and uncurling of the limbs) and flat and homogeneous substrates. It is therefore anticipated that the results will be poorer for this substrate. Despite this, the synthesized gaits are still fairly successful, resulting in multi-directional translation-dominant gaits and rotation-dominant gaits in both clockwise and counter-clockwise directions. One of the translation-dominant gaits $^3L_{t1}$ has extremely limited translation, but very precise zero rotation. However, these gaits tend to be less precise (i.e., contain more variance) than those executed on the other substrates, particularly for the rotation-dominant gaits. For example, the rotation-dominant gait $^3L_{\theta2}$  has a large translation component and does not have a precise instantaneous center of rotation. It is also interesting that the synthesized rotation-dominant gaits have much longer sequences when compared to the other substrates. 
\begin{table}[ht]
\renewcommand\arraystretch{2}
\begin{center}
\caption{\TriSoRo~Synthesized Gaits on Substrate 2 (Whiteboard).}
\label{Tab:sub2}
\noindent\begin{tabular*}{\columnwidth}{@{\extracolsep{\fill}}ccc@{}}
\hline        
Gait & $V(L)$ & Robot States \\           \hline \hline  
$^2L_{t1}$ &  $[6,4,5,2]$ & \raisebox{-.3\totalheight}{\includegraphics[width=.5cm]{figures/MTA3_State_6}} \kern-.3em $\pmb{\rightarrow}$  \kern-.3em \raisebox{-.3\totalheight}{\includegraphics[width=.5cm]{figures/MTA3_State_4}} \kern-.3em $\pmb{\rightarrow}$  \kern-.3em \raisebox{-.3\totalheight}{\includegraphics[width=.5cm]{figures/MTA3_State_5}} \kern-.3em $\pmb{\rightarrow}$  \kern-.3em \raisebox{-.3\totalheight}{\includegraphics[width=.5cm]{figures/MTA3_State_2}}\\            
        
$^2L_{t2}$ &  $[4,7,5,1]$ & \raisebox{-.3\totalheight}{\includegraphics[width=.5cm]{figures/MTA3_State_4}} \kern-.3em $\pmb{\rightarrow}$  \kern-.3em \raisebox{-.3\totalheight}{\includegraphics[width=.5cm]{figures/MTA3_State_7}}$\pmb{\rightarrow}$  \kern-.3em \raisebox{-.3\totalheight}{\includegraphics[width=.5cm]{figures/MTA3_State_5}} \kern-.3em $\pmb{\rightarrow}$  \kern-.3em \raisebox{-.3\totalheight}{\includegraphics[width=.5cm]{figures/MTA3_State_1}} \\            
            
$^2L_{t3}$ &  $[5,3]$ &\raisebox{-.3\totalheight}{\includegraphics[width=.5cm]{figures/MTA3_State_5}} \kern-.3em $\pmb{\rightarrow}$  \kern-.3em \raisebox{-.3\totalheight}{\includegraphics[width=.5cm]{figures/MTA3_State_3}}\\            
\hline    \hline
$^2L_{\theta 1}$ &  $[5,1]$ & \raisebox{-.3\totalheight}{\includegraphics[width=.5cm]{figures/MTA3_State_5}} \kern-.3em $\pmb{\rightarrow}$  \kern-.3em \raisebox{-.3\totalheight}{\includegraphics[width=.5cm]{figures/MTA3_State_1}}\\            
        
$^2L_{\theta 2}$ &  $[1,4,2,6]$ & \raisebox{-.3\totalheight}{\includegraphics[width=.5cm]{figures/MTA3_State_1}} \kern-.3em $\pmb{\rightarrow}$  \kern-.3em \raisebox{-.3\totalheight}{\includegraphics[width=.5cm]{figures/MTA3_State_4}}$\pmb{\rightarrow}$  \kern-.3em \raisebox{-.3\totalheight}{\includegraphics[width=.5cm]{figures/MTA3_State_2}} \kern-.3em $\pmb{\rightarrow}$  \kern-.3em \raisebox{-.3\totalheight}{\includegraphics[width=.5cm]{figures/MTA3_State_6}} \\            
            
$^2L_{\theta 3}$ &  $[5,7,6,2]$ &\raisebox{-.3\totalheight}{\includegraphics[width=.5cm]{figures/MTA3_State_5}} \kern-.3em $\pmb{\rightarrow}$  \kern-.3em \raisebox{-.3\totalheight}{\includegraphics[width=.5cm]{figures/MTA3_State_7}}$\pmb{\rightarrow}$  \kern-.3em \raisebox{-.3\totalheight}{\includegraphics[width=.5cm]{figures/MTA3_State_6}} \kern-.3em $\pmb{\rightarrow}$  \kern-.3em \raisebox{-.3\totalheight}{\includegraphics[width=.5cm]{figures/MTA3_State_2}}\\            
\hline    
\end{tabular*}
\end{center}
\end{table}

\begin{figure}[ht]
\centering
    \includegraphics[width=.7\columnwidth,trim=4cm 17.8cm 5cm 9cm, clip=true]{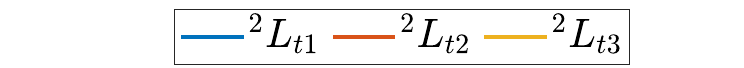} \\[-1.9ex]
    \subfloat[][]{\includegraphics[width = .46\columnwidth,trim=1.3cm 0.2cm 1.5cm 0.8cm, clip=true]{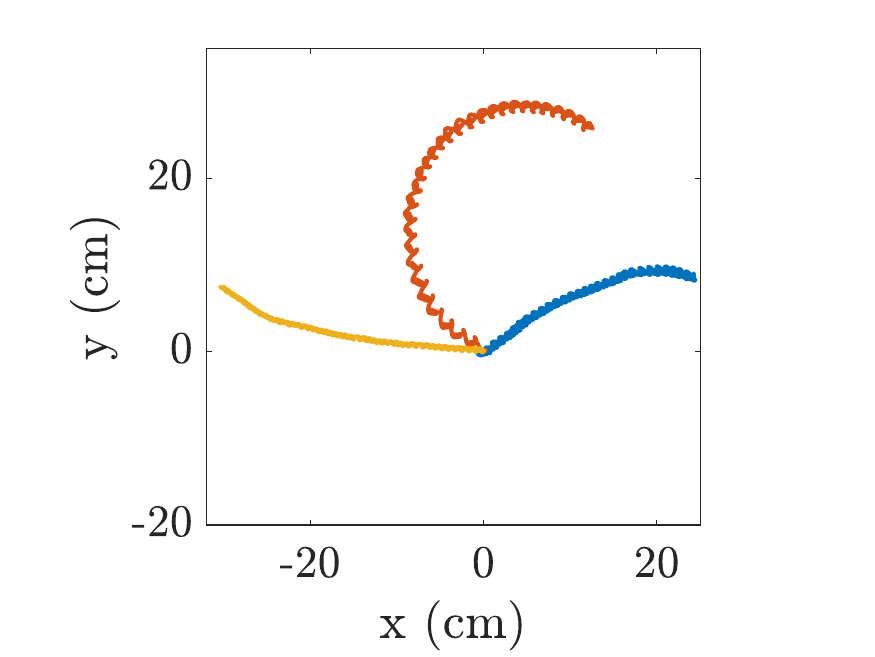}}\hfill
    \subfloat[][]{\includegraphics[width = .515\columnwidth,trim= 0cm 0.2cm 1.4cm 0.8cm, clip=true]{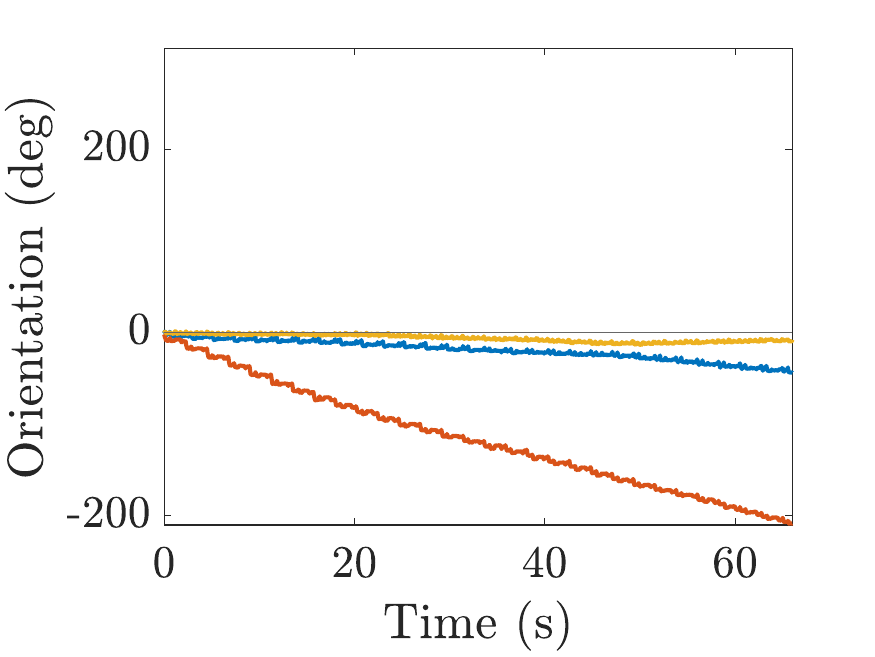}}\\
    
    \includegraphics[width=.5\columnwidth,trim=6cm 17.8cm 6.8cm 9cm, clip=true]{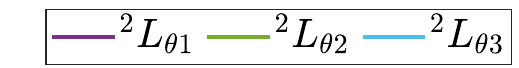} \\[-1.9ex]
    \subfloat[][]{\includegraphics[width = .46\columnwidth,trim=1.3cm 0.2cm 1.5cm 0.8cm, clip=true]{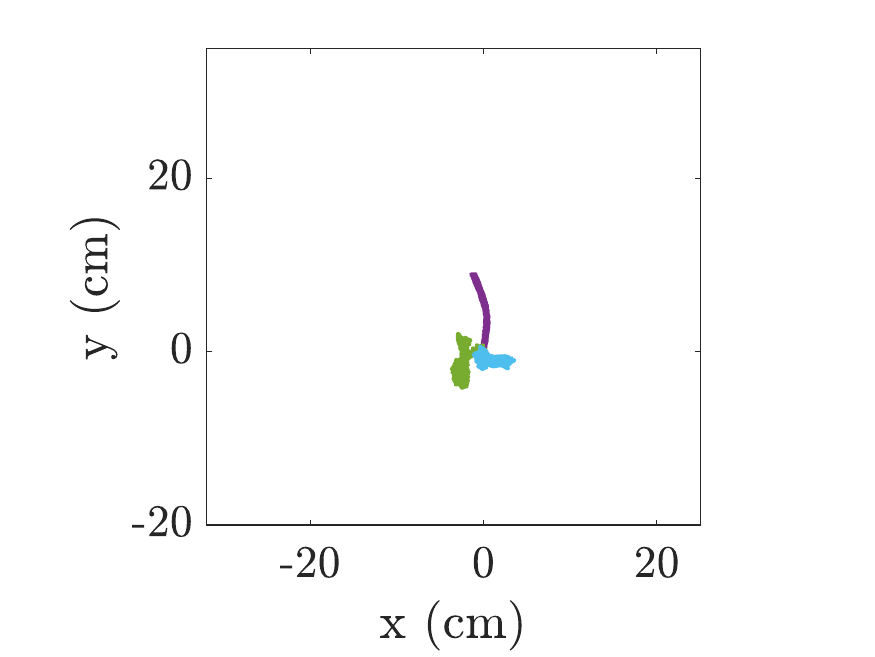}}\hfill
    \subfloat[][]{\includegraphics[width = .515\columnwidth,trim= 0cm 0.2cm 1.4cm 0.8cm, clip=true]{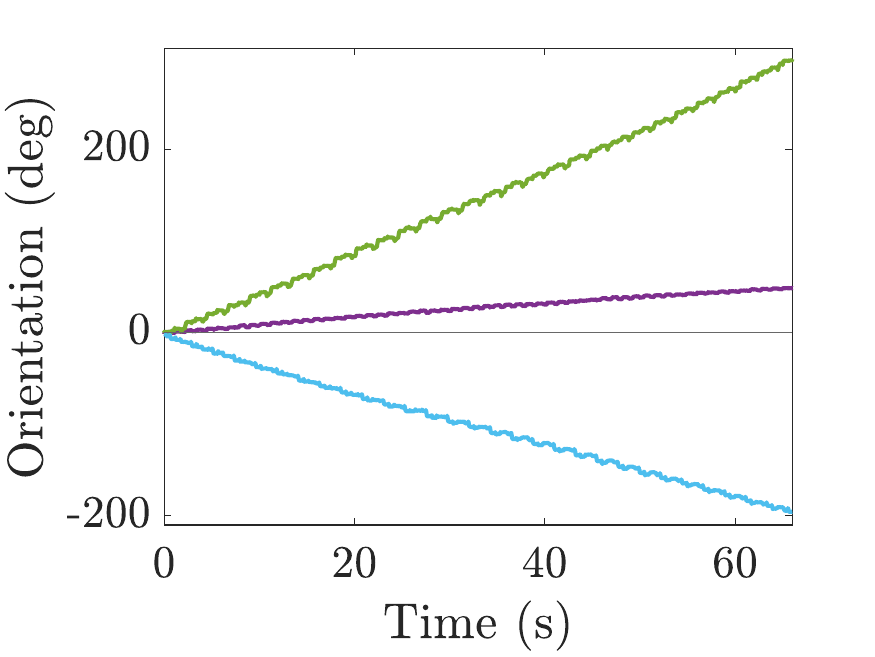}}\par

    \caption{\TriSoRo~gaits synthesized for the whiteboard (substrate 2). Experimental (a) trajectory and (b) rotation plots for the translation-dominant gaits ($^2L_{t1}, ^2L_{t2}, ^2L_{t3}$) are shown in addition to the (c) trajectory and (d) orientation plots for the rotation-dominant gaits ($^2L_{\theta 1}, ^2L_{\theta 2}, ^2L_{\theta 3}$).}
    \label{Fig:sub2}
\end{figure}

\begin{table}
\renewcommand\arraystretch{2}
\begin{center}
\caption{\TriSoRo~Synthesized Gaits on Substrate 3 (Carpet).}
\label{Tab:sub3}
\noindent\begin{tabular*}{\columnwidth}{@{\extracolsep{\fill}}ccc@{}}
\hline        
Gait & $V(L)$ & Robot States \\           \hline \hline  
$^3L_{t1}$ &  $[7,3]$ & \raisebox{-.3\totalheight}{\includegraphics[width=.5cm]{figures/MTA3_State_7}} \kern-.3em $\pmb{\rightarrow}$  \kern-.3em \raisebox{-.3\totalheight}{\includegraphics[width=.5cm]{figures/MTA3_State_3}}\\            
        
$^3L_{t2}$ &  $[7,4,3]$ & \raisebox{-.3\totalheight}{\includegraphics[width=.5cm]{figures/MTA3_State_7}} \kern-.3em $\pmb{\rightarrow}$  \kern-.3em \raisebox{-.3\totalheight}{\includegraphics[width=.5cm]{figures/MTA3_State_4}}$\pmb{\rightarrow}$  \kern-.3em \raisebox{-.3\totalheight}{\includegraphics[width=.5cm]{figures/MTA3_State_3}} \\            
            
$^3L_{t3}$ &  $[4,5,3,2]$ & \raisebox{-.3\totalheight}{\includegraphics[width=.5cm]{figures/MTA3_State_4}} \kern-.3em $\pmb{\rightarrow}$  \kern-.3em \raisebox{-.3\totalheight}{\includegraphics[width=.5cm]{figures/MTA3_State_5}}$\pmb{\rightarrow}$  \kern-.3em \raisebox{-.3\totalheight}{\includegraphics[width=.5cm]{figures/MTA3_State_3}} \kern-.3em $\pmb{\rightarrow}$  \kern-.3em \raisebox{-.3\totalheight}{\includegraphics[width=.5cm]{figures/MTA3_State_2}}\\            
\hline    \hline
$^3L_{\theta 1}$ &  $[2, 6 ,5, 3 ,8 ,4]$ & \raisebox{-.3\totalheight}{\includegraphics[width=.5cm]{figures/MTA3_State_2}} \kern-.3em $\pmb{\rightarrow}$  \kern-.3em \raisebox{-.3\totalheight}{\includegraphics[width=.5cm]{figures/MTA3_State_6}}$\pmb{\rightarrow}$  \kern-.3em \raisebox{-.3\totalheight}{\includegraphics[width=.5cm]{figures/MTA3_State_5}} \kern-.3em $\pmb{\rightarrow}$  \kern-.3em \raisebox{-.3\totalheight}{\includegraphics[width=.5cm]{figures/MTA3_State_3}} \kern-.3em $\pmb{\rightarrow}$  \kern-.3em \raisebox{-.3\totalheight}{\includegraphics[width=.5cm]{figures/MTA3_State_8}}$\pmb{\rightarrow}$  \kern-.3em \raisebox{-.3\totalheight}{\includegraphics[width=.5cm]{figures/MTA3_State_4}}\\            
        
$^3L_{\theta 2}$ &  $[3, 8, 7, 6, 4, 5, 2]$ &\raisebox{-.3\totalheight}{\includegraphics[width=.45cm]{figures/MTA3_State_3}} \kern-.3em $\pmb{\rightarrow}$  \kern-.3em \raisebox{-.3\totalheight}{\includegraphics[width=.45cm]{figures/MTA3_State_8}}$\pmb{\rightarrow}$  \kern-.3em \raisebox{-.3\totalheight}{\includegraphics[width=.45cm]{figures/MTA3_State_7}} \kern-.3em $\pmb{\rightarrow}$  \kern-.3em \raisebox{-.3\totalheight}{\includegraphics[width=.5cm]{figures/MTA3_State_6}} \kern-.3em $\pmb{\rightarrow}$  \kern-.3em \raisebox{-.3\totalheight}{\includegraphics[width=.45cm]{figures/MTA3_State_4}}$\pmb{\rightarrow}$  \kern-.3em \raisebox{-.3\totalheight}{\includegraphics[width=.45cm]{figures/MTA3_State_5}} \kern-.3em $\pmb{\rightarrow}$  \kern-.3em \raisebox{-.3\totalheight}{\includegraphics[width=.45cm]{figures/MTA3_State_2}}\\            
            
$^3L_{\theta 3}$ &  $[7, 4, 2, 6 ,5, 3, 1]$ &\raisebox{-.3\totalheight}{\includegraphics[width=.45cm]{figures/MTA3_State_7}} \kern-.3em $\pmb{\rightarrow}$  \kern-.3em \raisebox{-.3\totalheight}{\includegraphics[width=.5cm]{figures/MTA3_State_4}}$\pmb{\rightarrow}$  \kern-.3em \raisebox{-.3\totalheight}{\includegraphics[width=.45cm]{figures/MTA3_State_2}} \kern-.3em $\pmb{\rightarrow}$  \kern-.3em \raisebox{-.3\totalheight}{\includegraphics[width=.45cm]{figures/MTA3_State_6}} \kern-.3em $\pmb{\rightarrow}$  \kern-.3em \raisebox{-.3\totalheight}{\includegraphics[width=.45cm]{figures/MTA3_State_5}}$\pmb{\rightarrow}$  \kern-.3em \raisebox{-.3\totalheight}{\includegraphics[width=.45cm]{figures/MTA3_State_3}} \kern-.3em $\pmb{\rightarrow}$  \kern-.3em \raisebox{-.3\totalheight}{\includegraphics[width=.45cm]{figures/MTA3_State_1}}\\            
\hline    
\end{tabular*}
\end{center}
\end{table}

\begin{figure}[ht]
\centering
    \includegraphics[width=.55\columnwidth,trim=6cm 19.1cm 6cm 7.8cm, clip=true]{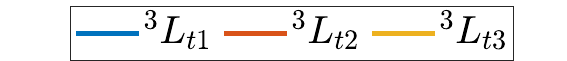} \\[-1.9ex]
    \subfloat[][]{\includegraphics[width = .44\columnwidth,trim=1.8cm 0.2cm 2cm 0.8cm, clip=true]{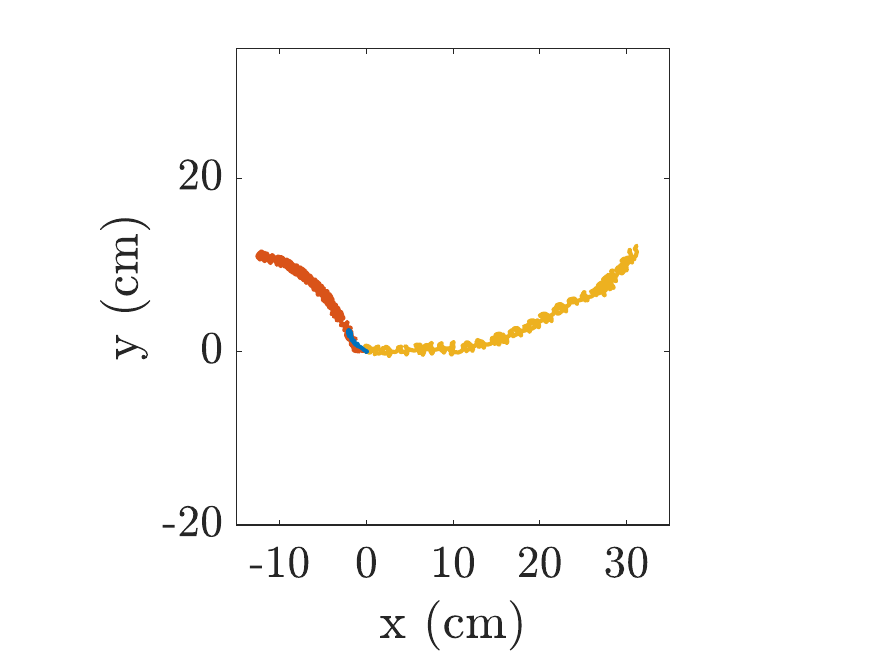}}\hfill
    \subfloat[][]{\includegraphics[width = .53\columnwidth,trim= 0cm 0.1cm 1.3cm .5cm, clip=true]{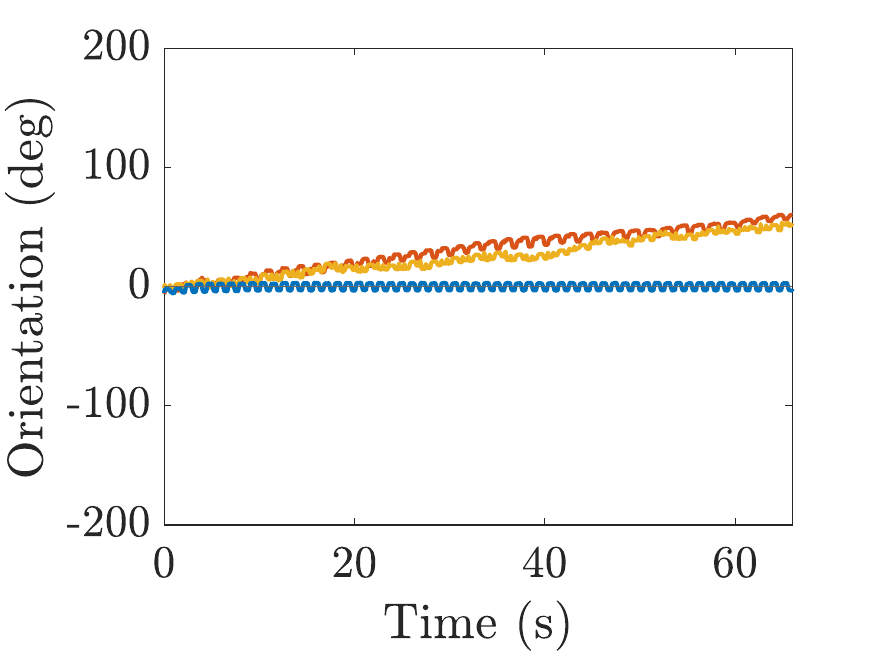}}\\
    
    \includegraphics[width=.55\columnwidth,trim=6cm 17.8cm 6cm 8.6cm, clip=true]{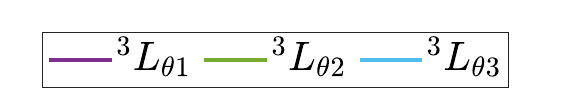} \\[-1.9ex]
    \subfloat[][]{\includegraphics[width = .44\columnwidth,trim=1.8cm 0.2cm 2cm 0.8cm, clip=true]{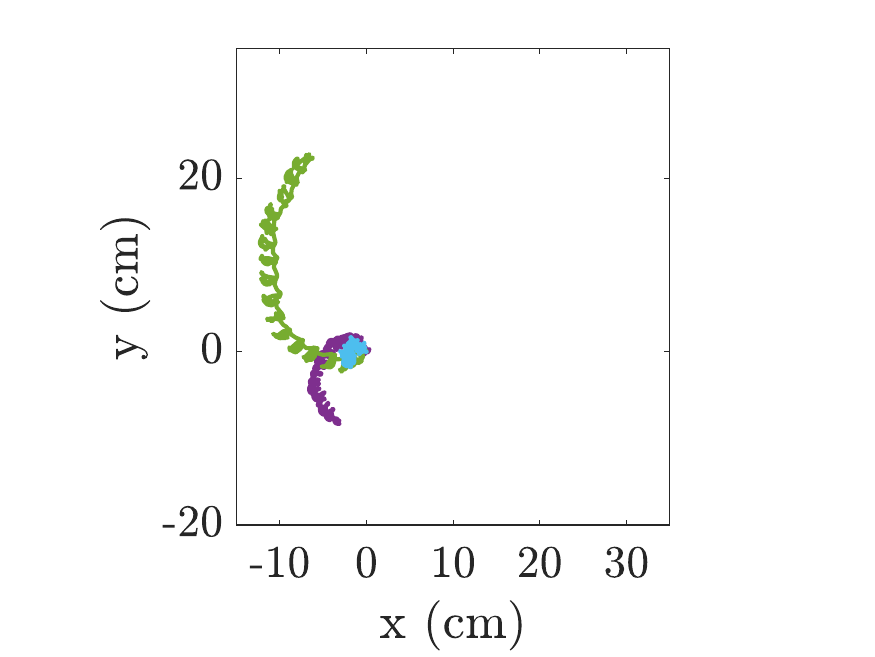}}\hfill
    \subfloat[][]{\includegraphics[width = .53\columnwidth,trim= 0cm 0.1cm 1.3cm .5cm, clip=true]{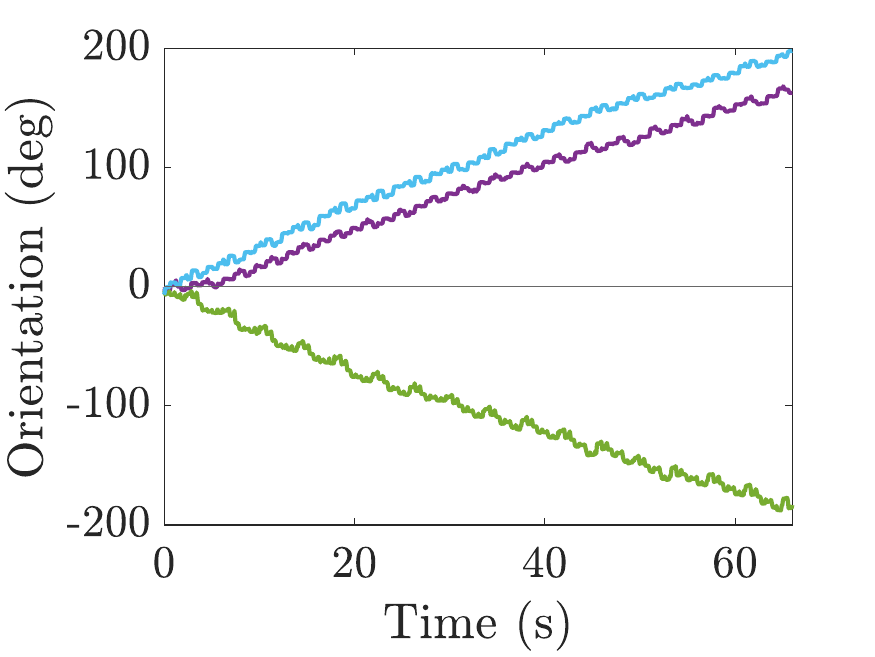}}\par
    \caption{\TriSoRo~gaits synthesized for the carpet (substrate 3). Experimental (a) trajectory and (b) orientation plots for the translation-dominant gaits ($^3L_{t1}, ^3L_{t2}, ^3L_{t3}$) are shown in addition to the (c) trajectory and (d) orientation plots for the rotation-dominant gaits ($^3L_{\theta 1}, ^3L_{\theta 2}, ^3L_{\theta 3}$).}
    \label{Fig:sub3}
\end{figure}
%
%
\subsubsection{Comparison Between BILP and Exhaustive Gait Search Results} \label{subsubsec:optimality}
The gait synthesis methodology presented here attempts to overcome the computational challenges of nonlinearity and combinatorial explosion by introducing linearizations, relaxations, and simplifications that reduce the complexity and increase the tractability of the problem. To examine the degree to which these alterations affect the optimality of the solutions, we can exhaustively calculate the nonlinear cost functions $J_{t,nl}$ and $J_{\theta,nl}$, defined in \eqref{Eqn:nlcostfuncs}, for all possible simple cycles. This is only tractable for robots of three or fewer limbs. The hyperparameter $\lambda$ is set to 1 to equally scale the motion and variance. \Fig~\ref{Fig:pareto} plots one nonlinear cost function versus the inverse of the other. The Pareto front can then be seen along the upper right-hand side of the plots as the points at which one cost function is maximized and the other is minimized. Shading is also incorporated to indicate gait length, as shorter gaits tend to be preferable due to their simplicity. While it is assumed that some level of optimality will be compromised, the data indicate that the majority of the synthesized gaits do fall on the Pareto front, while the remaining ones fall slightly within. These plots are only shown for one dataset (the rubber mat substrate) but suggest that this gait synthesis methodology maintains a reasonable degree of optimality and experimentally produces useful gaits. Again, numerical optimality for these cost functions does not necessarily equate to experimental optimality as there is expected to be some level of variation between the predicted (i.e., expected) and experimental gait behaviors.
\begin{figure}[ht]
\centering
    \includegraphics[width=\columnwidth,trim=4.5cm 17.5cm 4.5cm 8.8cm, clip=true]{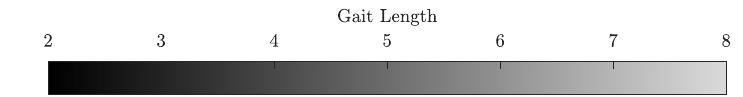} \\[-.6ex]
    \subfloat[][]{\includegraphics[width = .49\columnwidth,trim= 0cm 0.4cm 2cm .2cm, clip=true]{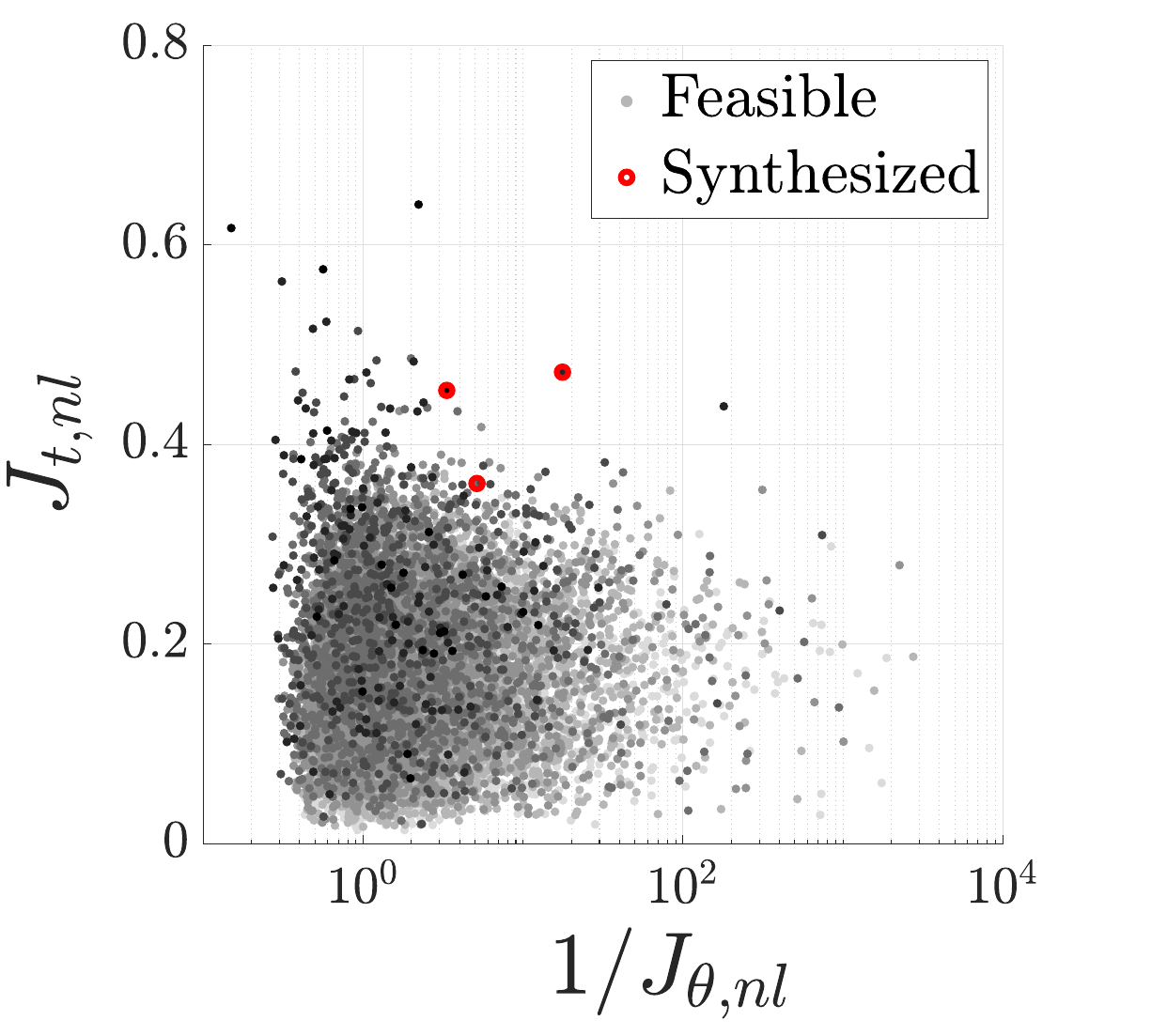}}\hfill
    \subfloat[][]{\includegraphics[width = .47\columnwidth,trim= 0cm 0.4cm 2.5cm .2cm, clip=true]{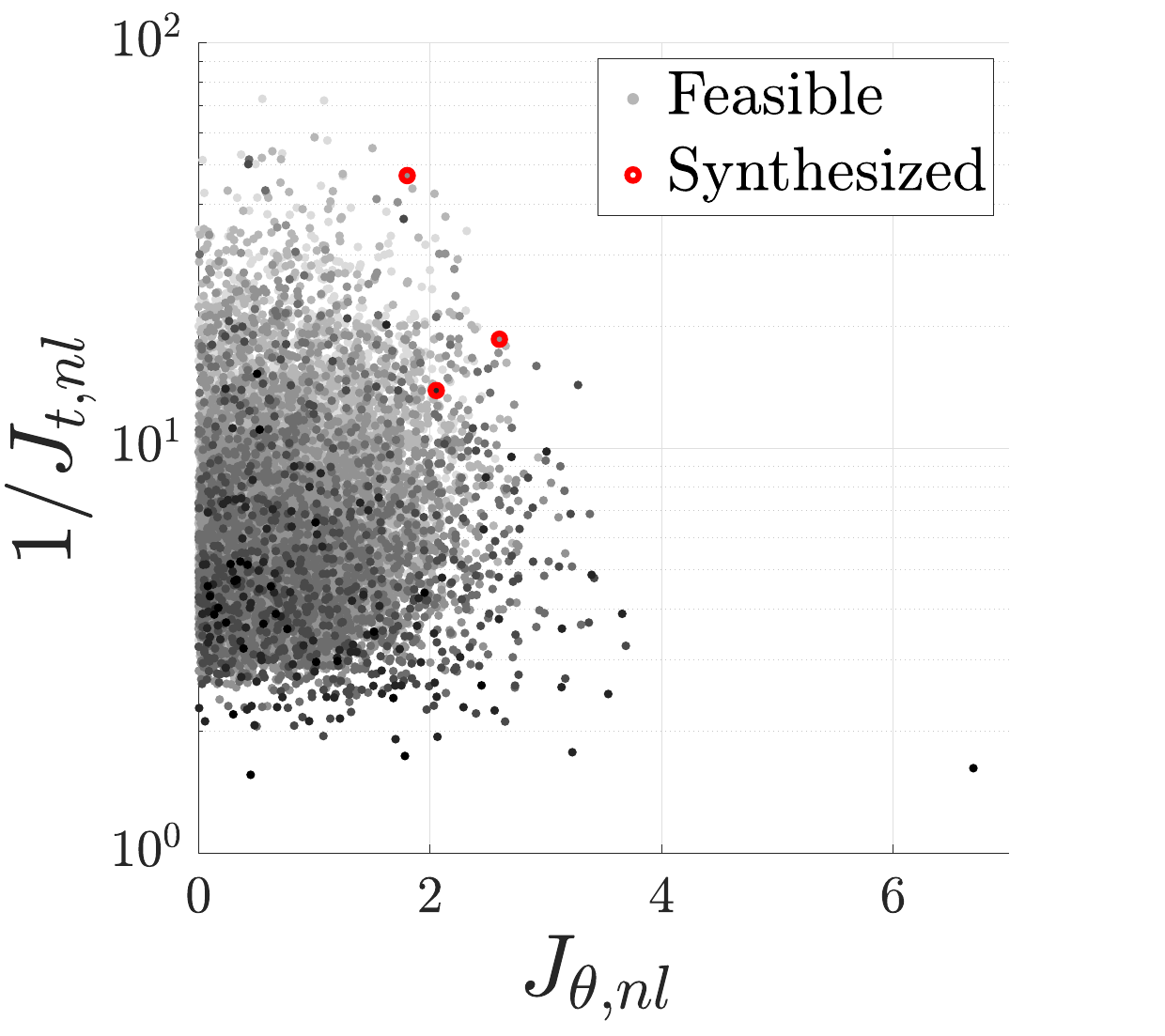}}\\

    \caption{Exhaustive nonlinear cost function evaluations for all possible \TriSoRo~gaits (simple cycles) on the rubber mat, calculated with data from Eulerian cycle experiments. Synthesized gaits (red) are plotted alongside feasible gaits of varying lengths (grey/black) to evaluate (a) translation dominance of gaits $^1L_{t1}, ^1L_{t2},$ and $ ^1L_{t3}$ and (b) rotation dominance of gaits $^1L_{\theta 1}, ^1L_{\theta 2},$ and $ ^1L_{\theta 3}$. Inverse cost functions $1/J_{t,nl}$ and $1/J_{\theta,nl}$ are shown on the logarithmic scale.}
    \label{Fig:pareto}
\end{figure}

\subsection{Robot 2: \TetraSoRo}
This section details the experimental results  of both synthesized and intuitive gaits for the four-limb robot on a single substrate (rubber mat). Gaits are then re-synthesized without re-learning the graph with the added constraint that one limb is non-functioning.
\subsubsection{Intuitive Gaits}
Intuitive gaits are first performed on the rubber mat substrate as detailed in \Tab\ref{Tab:Int4}. Despite being adapted from three-limb robot intuitive gaits, the resulting behavior is different. All three gaits contain a much stronger rotational component than those seen in the three-limb robot, suggesting that this robot may be more inclined towards counter-clockwise rotation. The pronk-like gait $L_1$, which would be expected to induce only rotation due to the rotational symmetry of the robot instead results in the largest translation with the smallest rotation of the three gaits; the opposite is true of the pronk-like gait for the three-limb robot. The rotary gallop-inspired gait $L_2$ produces relatively small translation with a rotation component, despite this pattern resulting in the largest translation for the three-limb robot. Finally, the push-pull gait, selected to produce translation, results in the largest rotation and least translation of all the gaits. This contrasts with the translation-dominant nature of the three-limb push-pull gait. These experimental results underscore the difficulty of intuitively picking soft robot gaits and predicting the resulting behavior, particularly when they lack robotic or biological analogues. 

\begin{table}[ht]
\begin{center}
\caption{Intuitive Gaits for \TetraSoRo~(Not Synthesized).}
\label{Tab:Int4}
\renewcommand\arraystretch{2}
\noindent\begin{tabular*}{\columnwidth}{@{\extracolsep{\fill}}cccc@{}}
\hline          
Gait & $V(L)$ & Robot States & Inspiration\\           \hline \hline           
$L_1$ &  $[16,1]$ &   \raisebox{-.3\totalheight}{\includegraphics[width=.6cm]{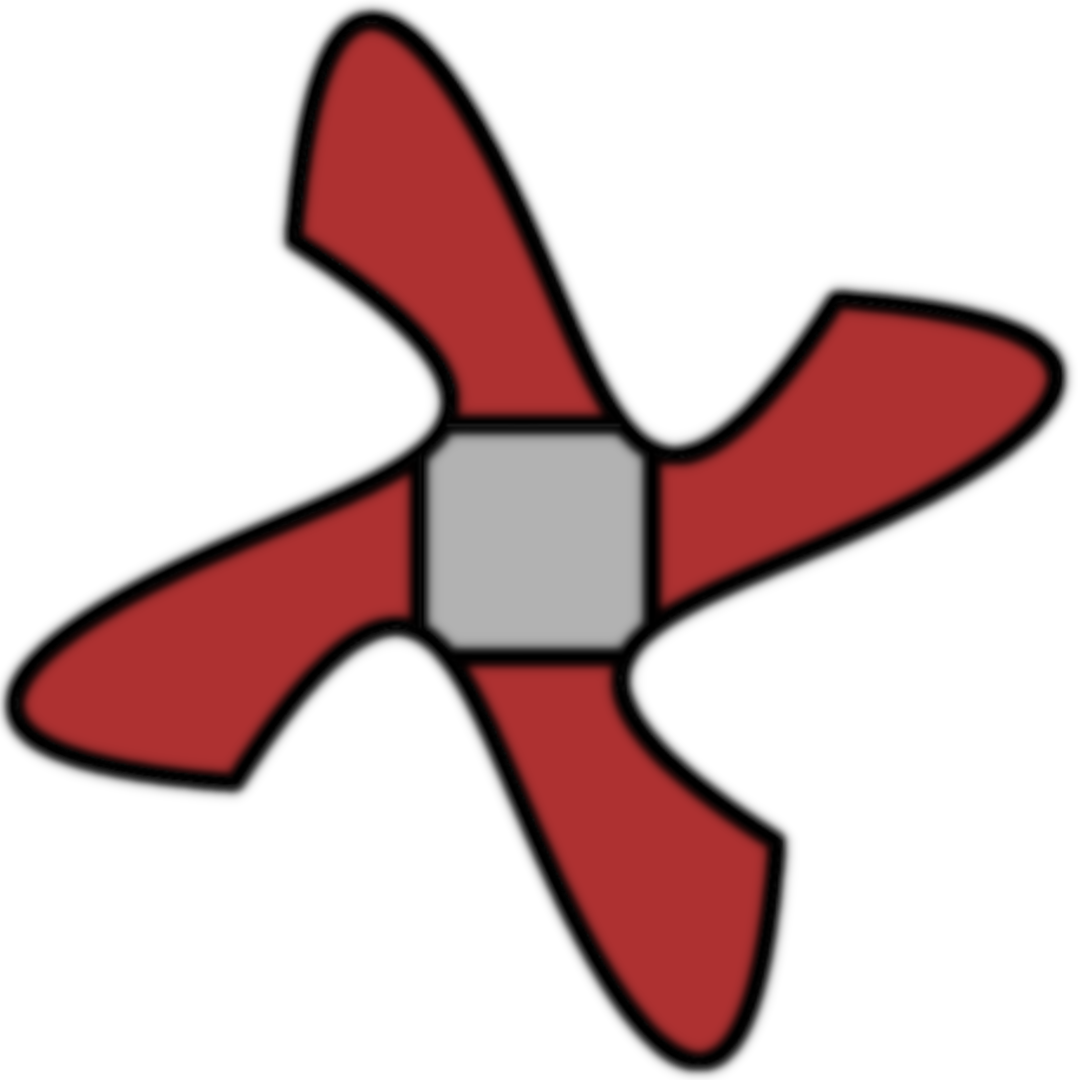}} \kern-.5em $\pmb{\rightarrow}$ \kern-.3em \raisebox{-.3\totalheight}{\includegraphics[width=.6cm]{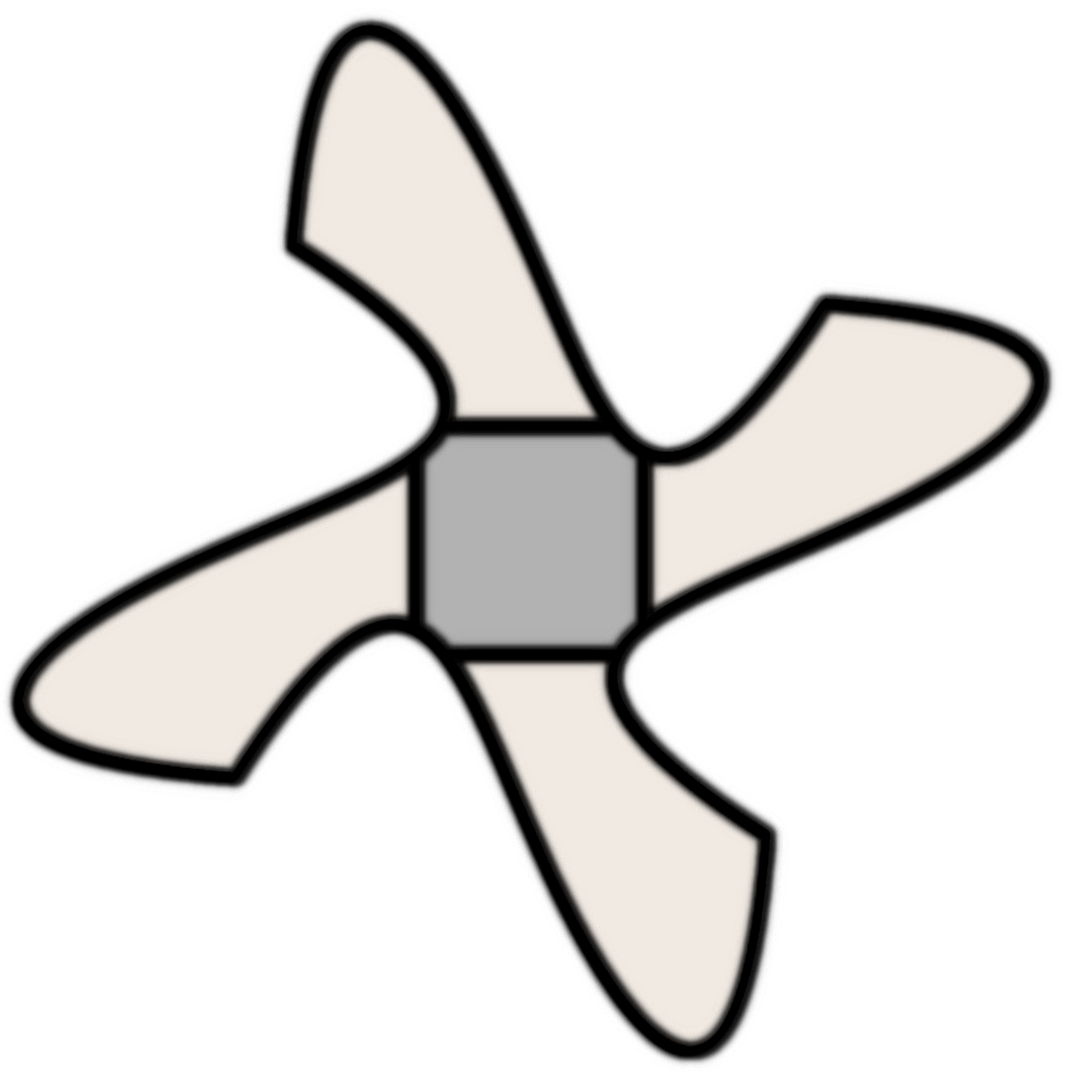}} & pronk\\            
          
$L_2$ &  $[9,5,3,2]$ &  \raisebox{-.3\totalheight}{\includegraphics[width=.6cm]{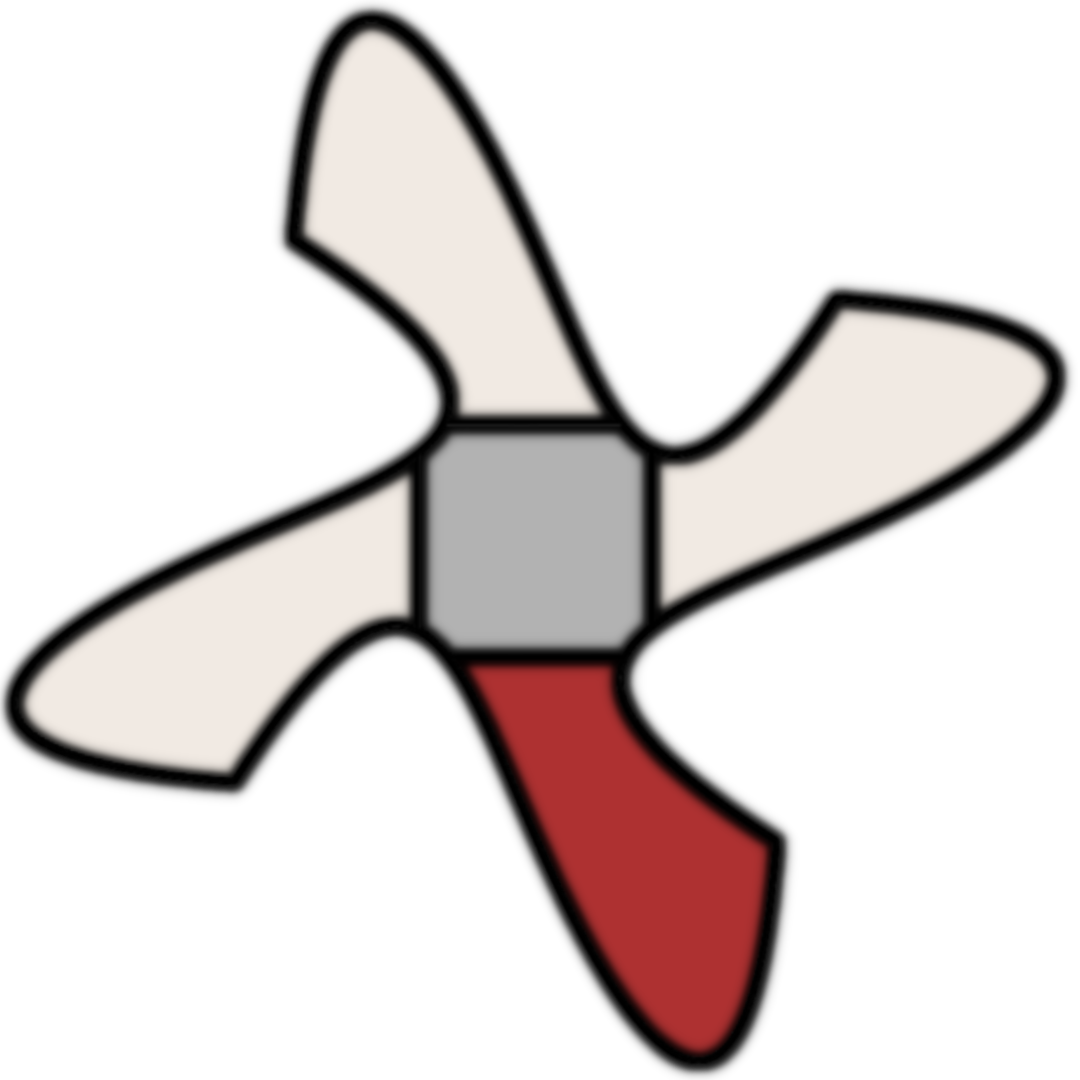}} \kern-.5em $\pmb{\rightarrow}$ \kern-.3em \raisebox{-.3\totalheight}{\includegraphics[width=.6cm]{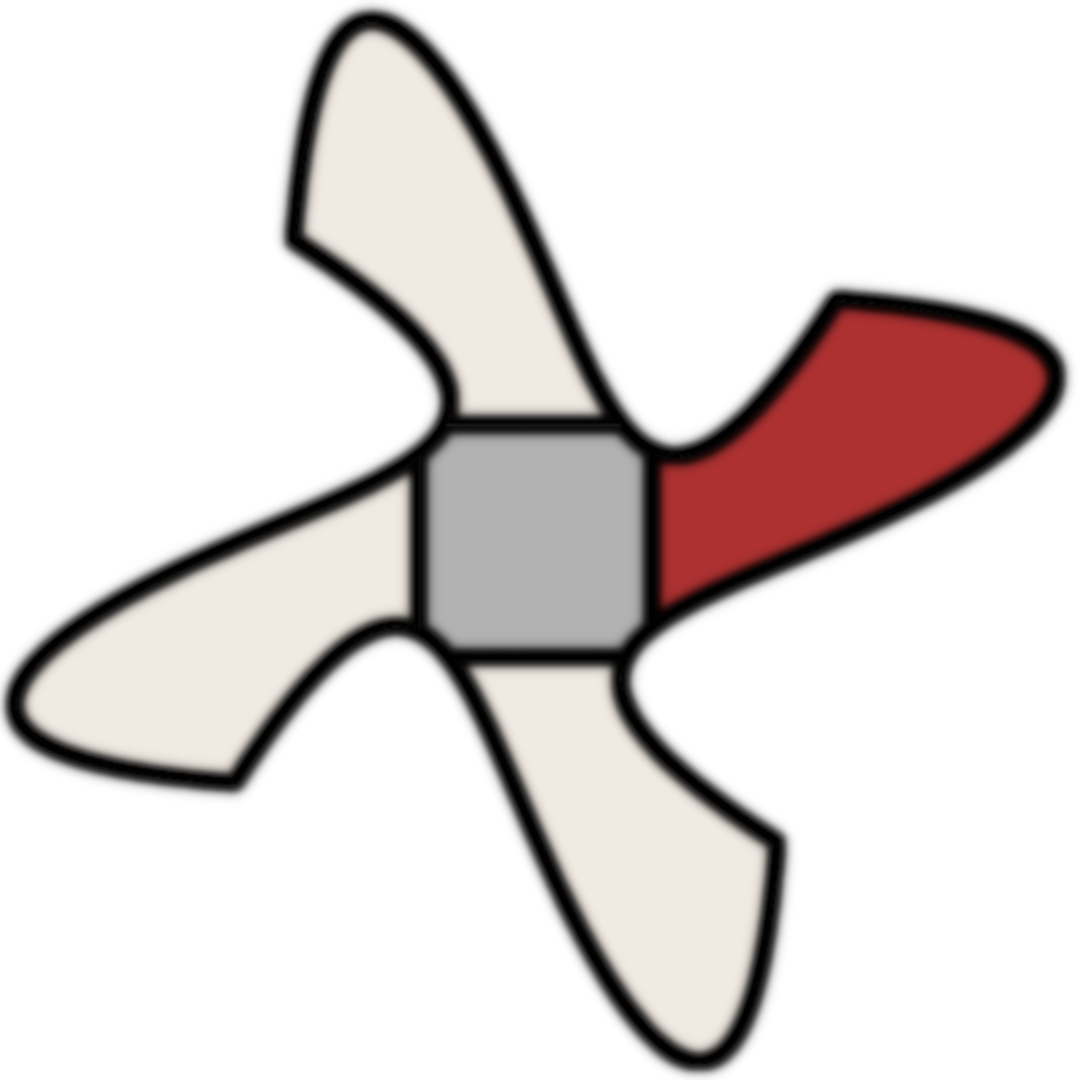}} \kern-.5em $\pmb{\rightarrow}$ \kern-.3em  \raisebox{-.3\totalheight}{\includegraphics[width=.6cm]{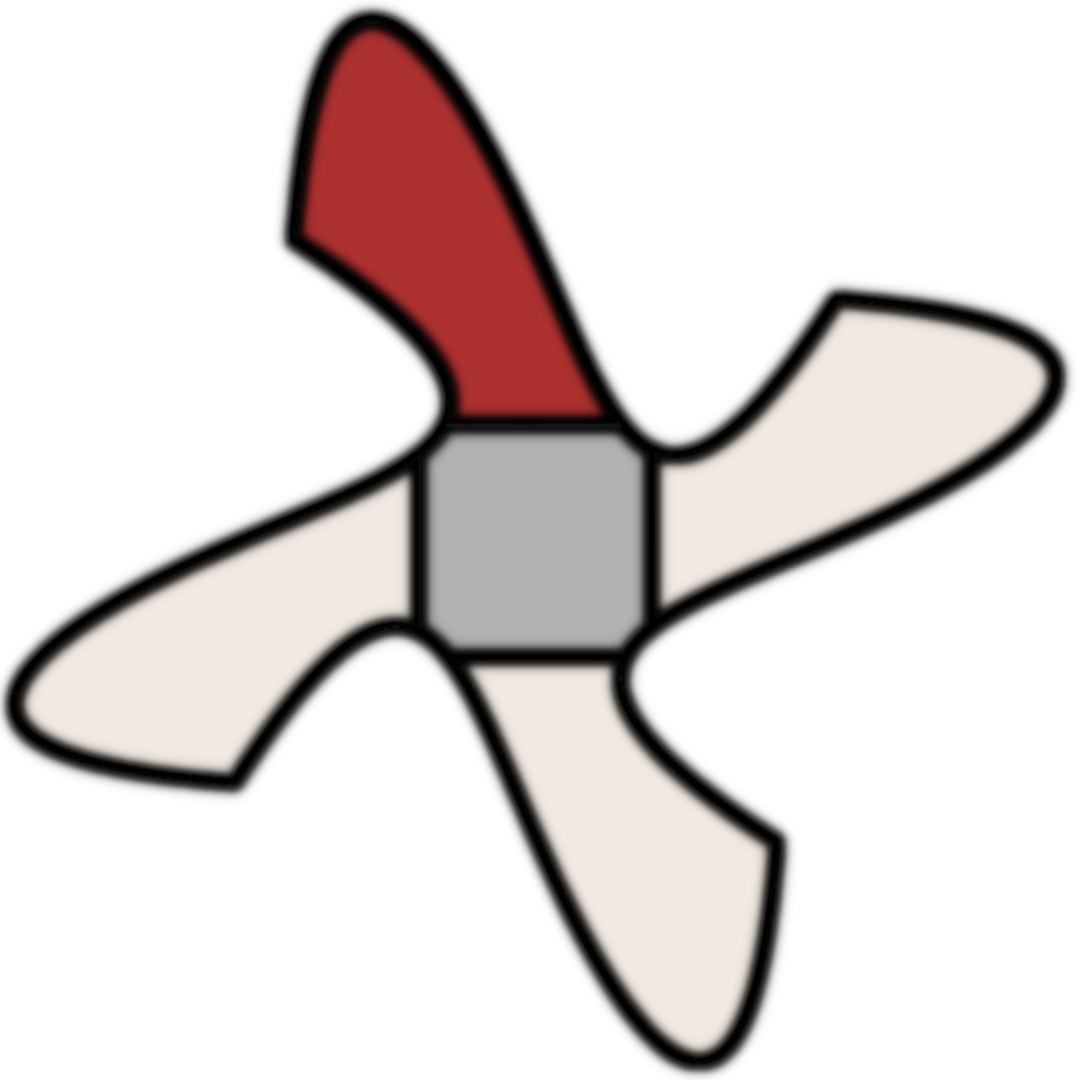}} \kern-.5em $\pmb{\rightarrow}$ \kern-.3em \raisebox{-.3\totalheight}{\includegraphics[width=.6cm]{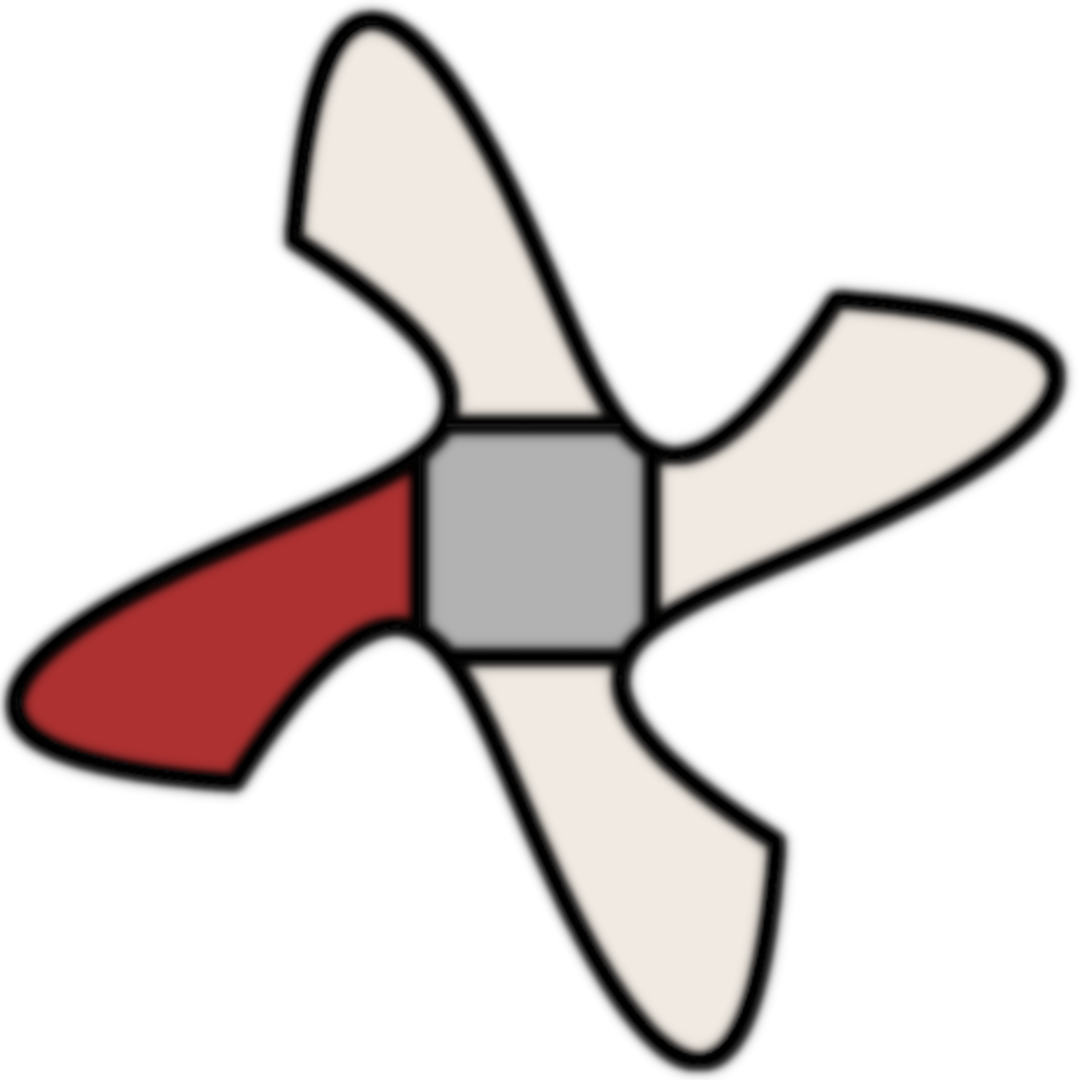}}& rotary gallop\\            
           
$L_3$ &  $[13,16,4,1]$ & \raisebox{-.3\totalheight}{\includegraphics[width=.6cm]{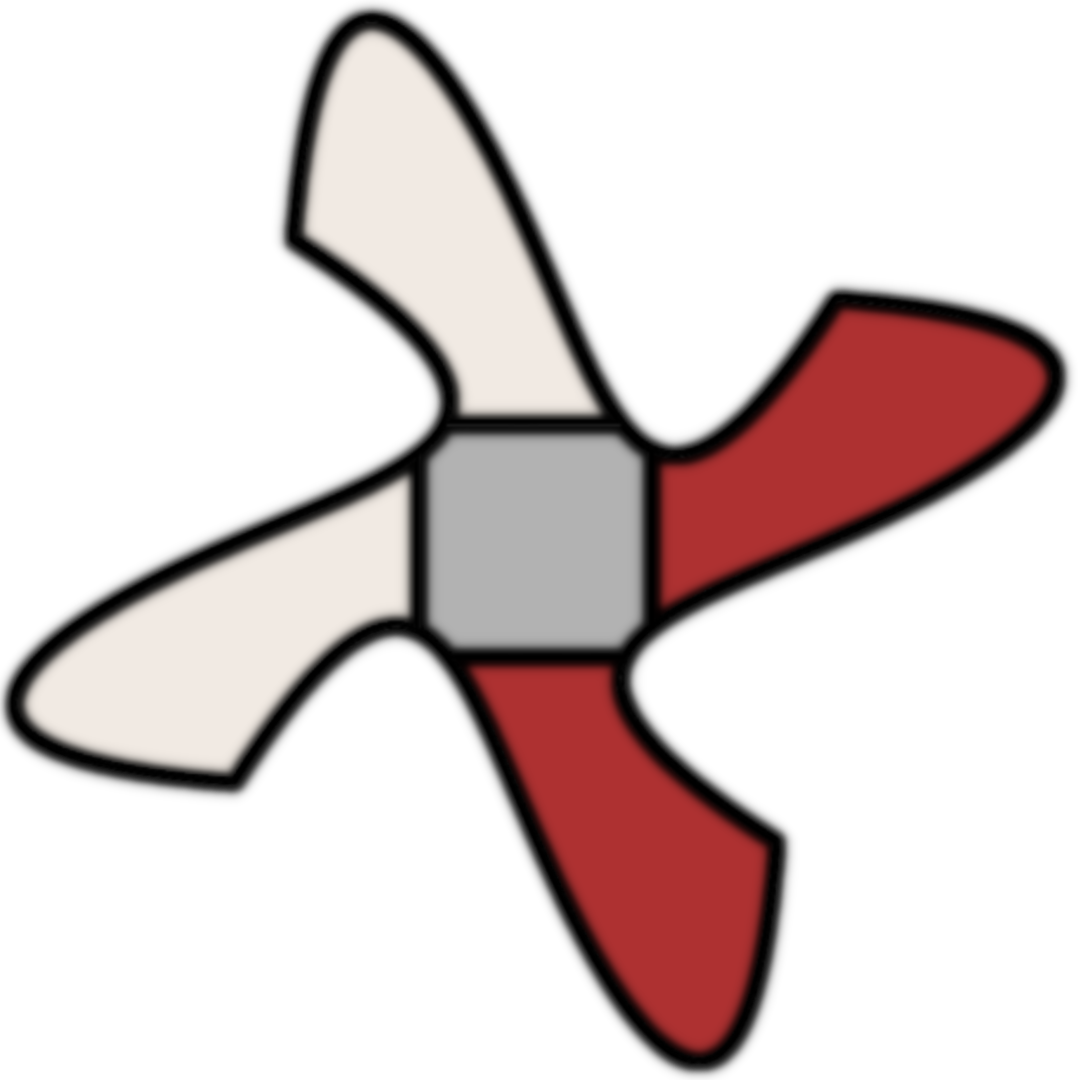}} \kern-.5em  $\pmb{\rightarrow}$    \kern-.3em \raisebox{-.3\totalheight}{\includegraphics[width=.6cm]{figures/MTA4_State_16.png}} \kern-.5em $\pmb{\rightarrow}$ \kern-.3em  \raisebox{-.3\totalheight}{\includegraphics[width=.6cm]{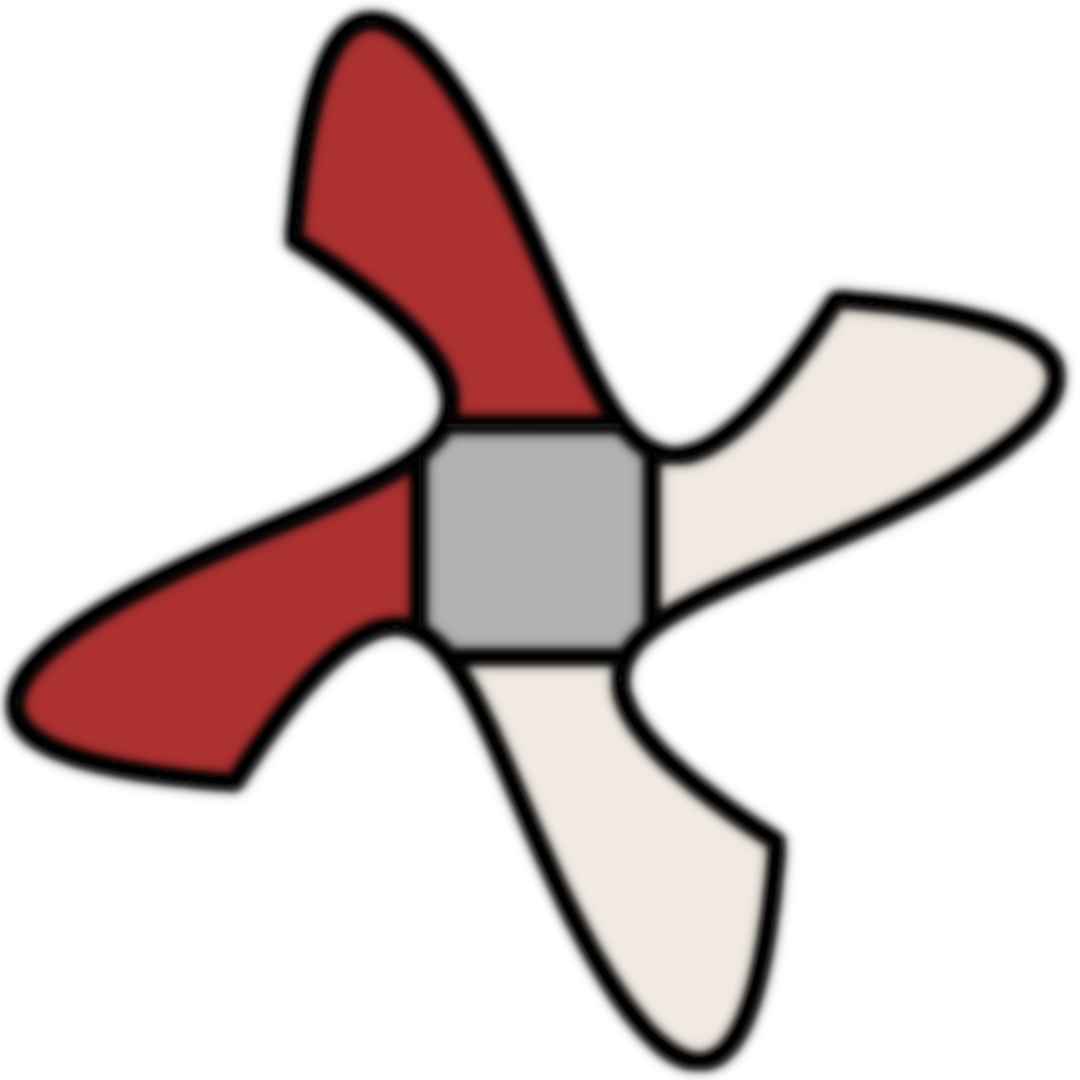}} \kern-.5em $\pmb{\rightarrow}$ \kern-.3em   \raisebox{-.3\totalheight}{\includegraphics[width=.6cm]{figures/MTA4_State_1.png}}& push-pull\\            
\hline    
\end{tabular*}
\end{center}
\end{table}

\begin{figure}[ht]
\centering
    \includegraphics[width=.7\columnwidth,trim=4cm 17.8cm 5cm 9.1cm, clip=true]{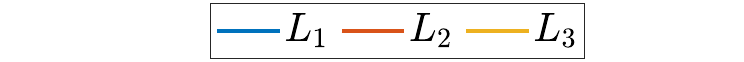} \\[-1.5ex] 
    \subfloat[][]{\includegraphics[width = .43\columnwidth,trim=2cm 0.2cm 2.2cm .8cm, clip=true]{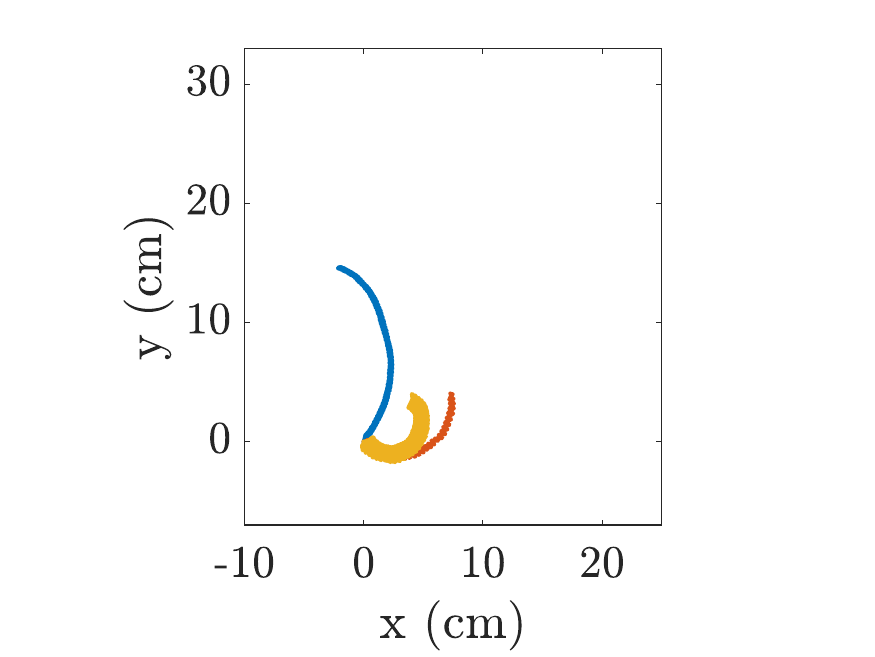}}\hfill
    \subfloat[][]{\includegraphics[width = .54\columnwidth,trim= 0cm 0.1cm 1.3cm .4cm, clip=true]{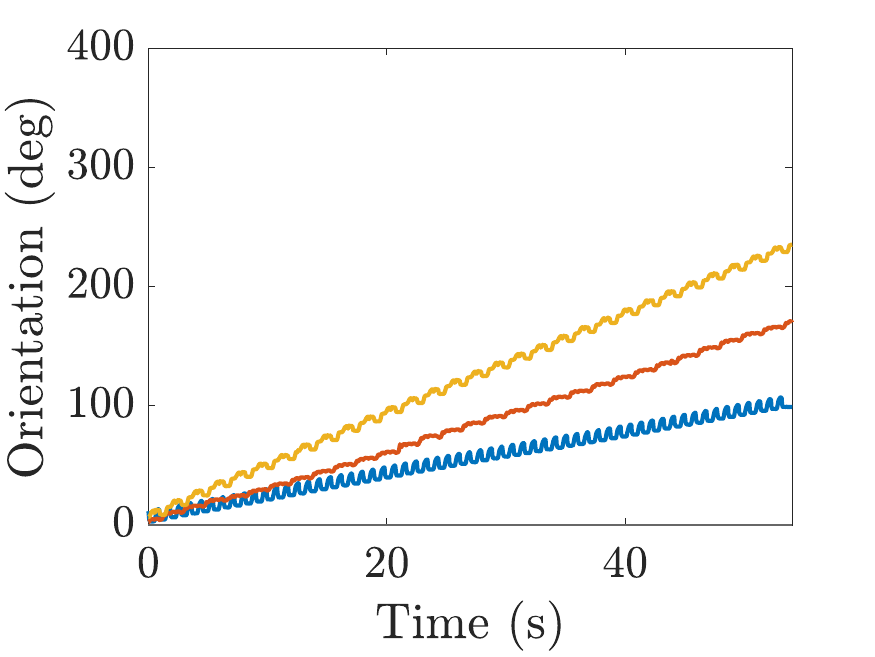}}\par

    \caption{\TetraSoRo~intuitive gaits $L_1, L_2,$ and $L_3$ tested on the rubber mat with experimental (a) trajectory and (b) orientation plots. }
    \label{Fig:4int}
\end{figure}

\subsubsection{Experimental Results on Substrate 1 (Rubber Mat)}
Gaits for the four-limb robot are synthesized for the rubber mat substrate and presented in \Tab\ref{Tab:4synth} and Fig.~\ref{Fig:4synth} (see multimedia video attachment for $^1L_{t1}$). Compared to the three-limb robot, the translation-dominant gaits have a larger rotational component and the rotation-dominant gaits achieve much larger rotation magnitudes (up to six-fold) with much smaller translation components. Additionally, no clockwise rotation-dominant gait is synthesized for the four-limb robot. These reasons suggest that this robot is more inclined towards counter-clockwise motion. Despite the strong coupling of translational and rotational motion inherent in the robot design, the synthesized gaits are successful and reasonably uncoupled. Another interesting observation is that all three translation-dominant gaits consist of the same limb being either fully curled or fully uncurled for the duration of the gait. This can essentially be interpreted as an alteration of the robot morphology to prevent rotation, similar to the way some animals drag their tails to stabilize their locomotion. The fact that this alteration occurred for every synthesized translation-dominant gait (and with the same limb) without any pre-defined knowledge of the robot highlights the versatility of this method to identify manufacturing / behavior inconsistencies and exploit them to enhance locomotion. 
\begin{table}[hb]
\renewcommand\arraystretch{2}
\begin{center}
\caption{\TetraSoRo~Synthesized Gaits on Substrate 1 (Rubber Mat).}
\label{Tab:4synth}
\noindent\begin{tabular*}{\columnwidth}{@{\extracolsep{\fill}}ccc@{}}
\hline        
Gait & $V(L)$ & Robot States \\           \hline \hline  
$^1L_{t1}$ &  $[3,15,1]$ & \raisebox{-.3\totalheight}{\includegraphics[width=.6cm]{figures/MTA4_State_3}} \kern-.5em $\pmb{\rightarrow}$ \kern-.3em \raisebox{-.3\totalheight}{\includegraphics[width=.6cm]{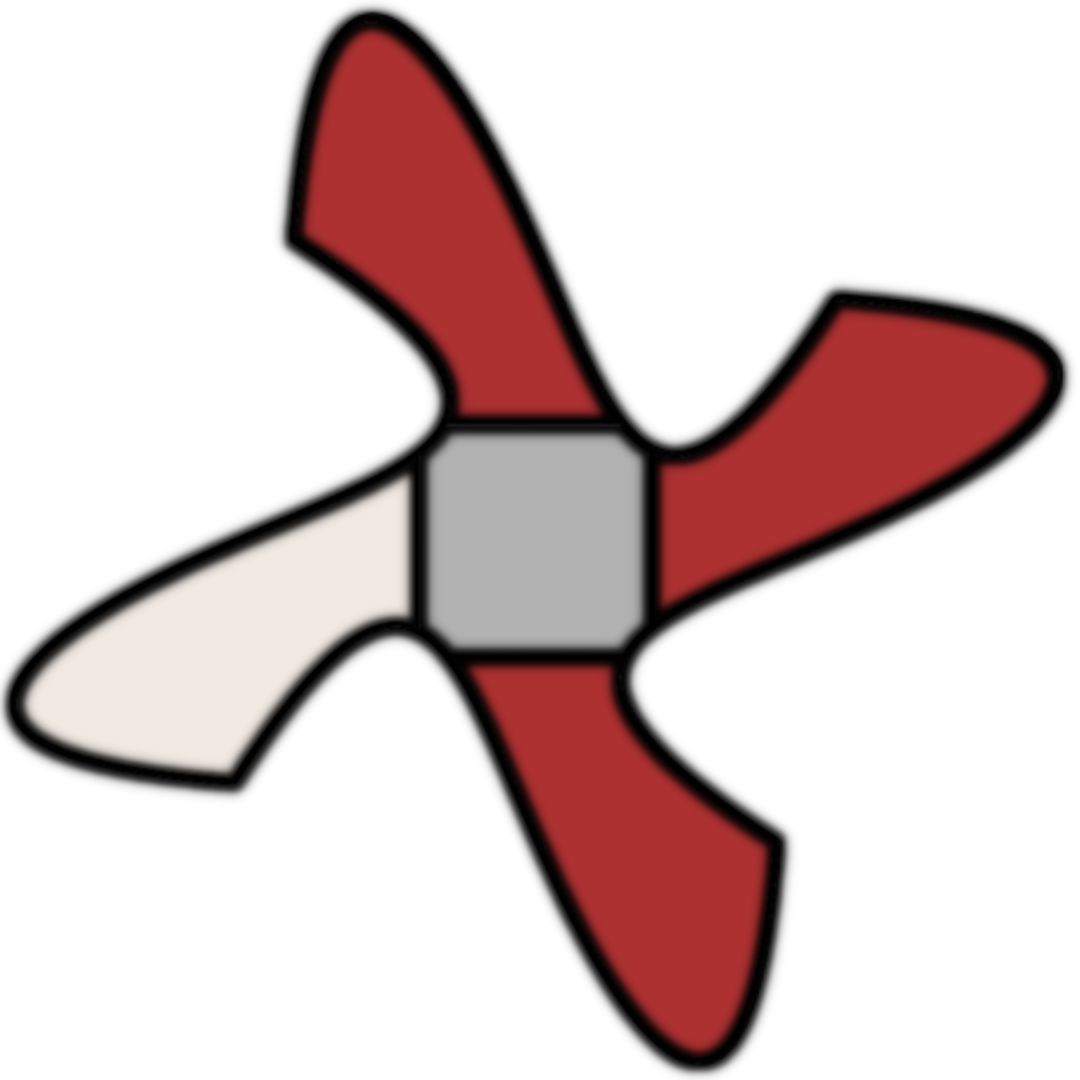}} \kern-.5em $\pmb{\rightarrow}$ \kern-.3em  \raisebox{-.3\totalheight}{\includegraphics[width=.6cm]{figures/MTA4_State_1.png}}\\            
        
$^1L_{t2}$ &  $[5,15,1]$ & \raisebox{-.3\totalheight}{\includegraphics[width=.6cm]{figures/MTA4_State_5}} \kern-.5em $\pmb{\rightarrow}$ \kern-.3em \raisebox{-.3\totalheight}{\includegraphics[width=.6cm]{figures/MTA4_State_15.png}} \kern-.5em $\pmb{\rightarrow}$ \kern-.3em  \raisebox{-.3\totalheight}{\includegraphics[width=.6cm]{figures/MTA4_State_1.png}}\\            
            
$^1L_{t3}$ &  $[6,16,2]$ & \raisebox{-.3\totalheight}{\includegraphics[width=.6cm]{figures/MTA4_State_6}} \kern-.5em $\pmb{\rightarrow}$ \kern-.3em \raisebox{-.3\totalheight}{\includegraphics[width=.6cm]{figures/MTA4_State_16.png}} \kern-.5em $\pmb{\rightarrow}$ \kern-.3em  \raisebox{-.3\totalheight}{\includegraphics[width=.6cm]{figures/MTA4_State_2.png}}\\            
\hline    \hline
$^1L_{\theta 1}$ &  $[13, 8, 10, 7, 11, 2]$ & \raisebox{-.3\totalheight}{\includegraphics[width=.6cm]{figures/MTA4_State_13}} \kern-.5em $\pmb{\rightarrow}$ \kern-.3em \raisebox{-.3\totalheight}{\includegraphics[width=.6cm]{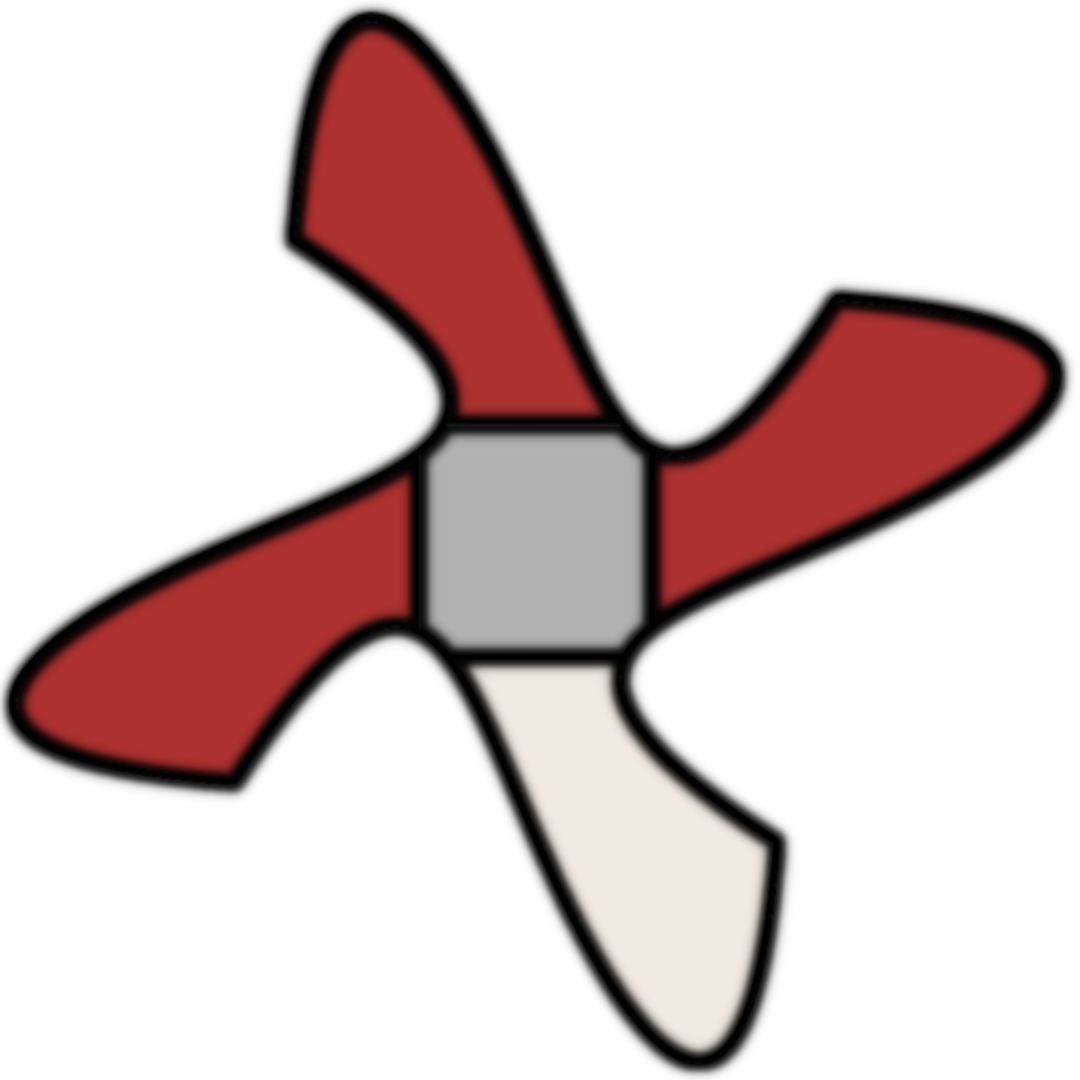}} \kern-.5em $\pmb{\rightarrow}$ \kern-.3em  \raisebox{-.3\totalheight}{\includegraphics[width=.6cm]{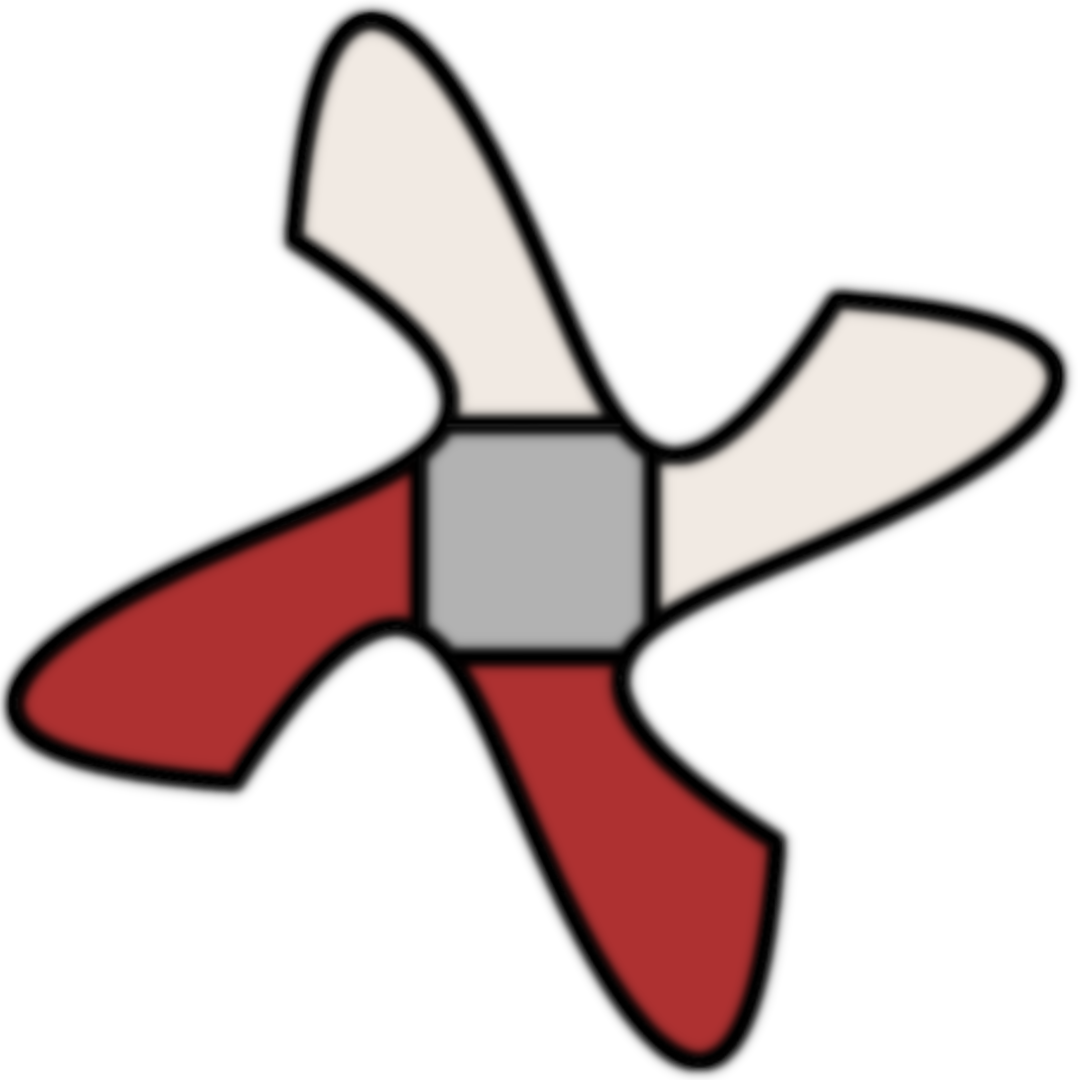}} \kern-.5em $\pmb{\rightarrow}$ \kern-.3em \raisebox{-.3\totalheight}{\includegraphics[width=.6cm]{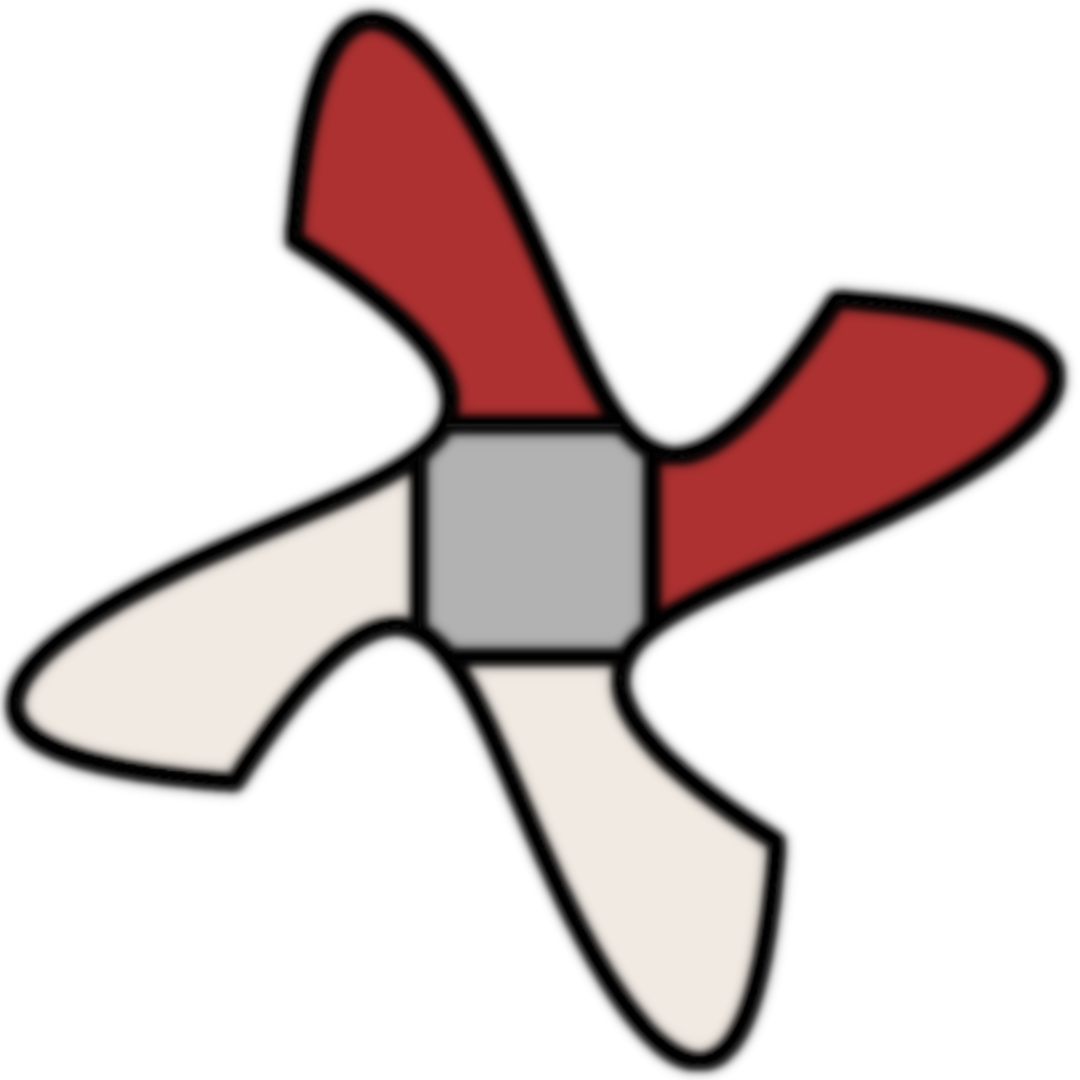}} \kern-.5em $\pmb{\rightarrow}$ \kern-.3em  \raisebox{-.3\totalheight}{\includegraphics[width=.6cm]{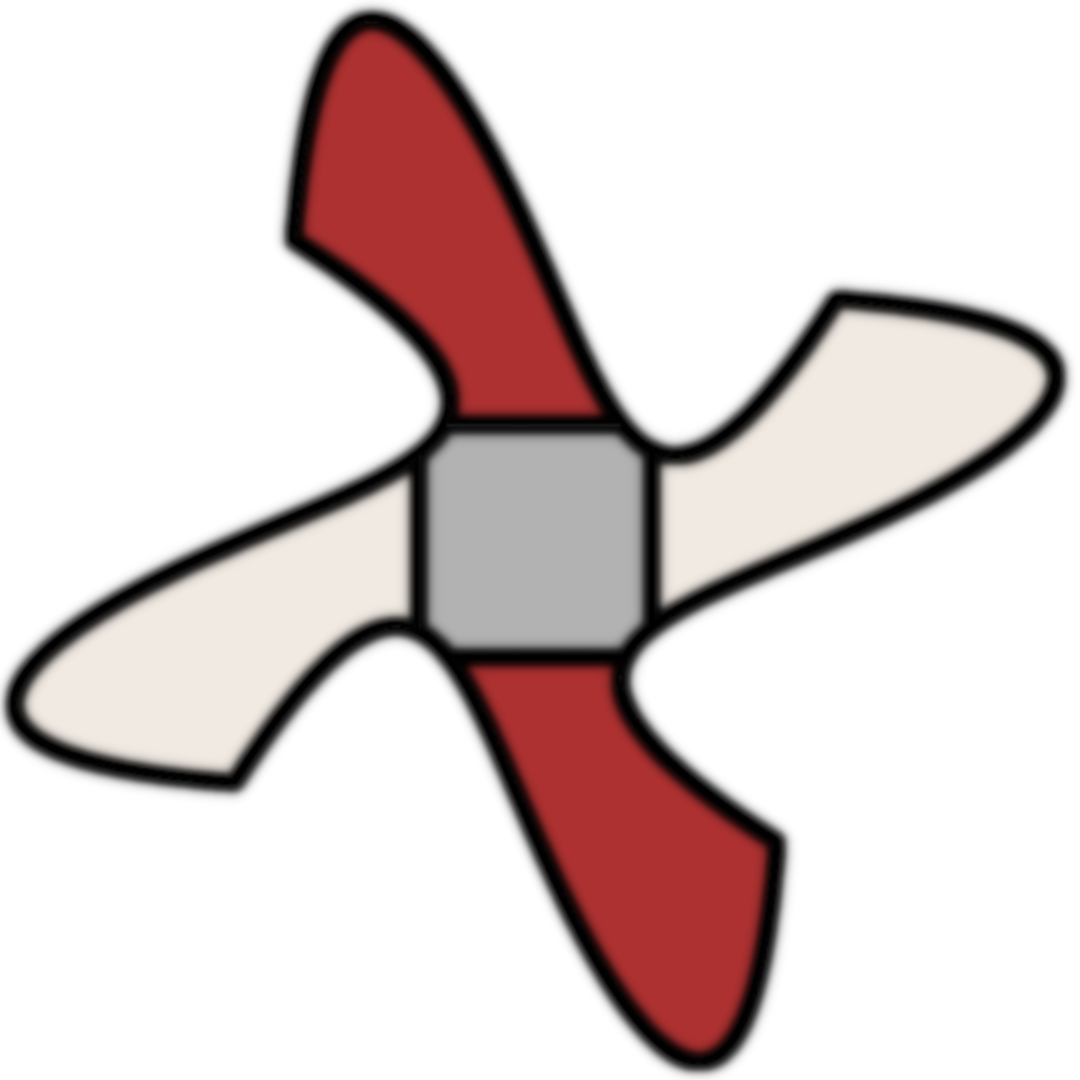}} \kern-.5em $\pmb{\rightarrow}$ \kern-.3em \raisebox{-.3\totalheight}{\includegraphics[width=.6cm]{figures/MTA4_State_2.png}}\\            
        
$^1L_{\theta 2}$ &  $[12, 13, 8, 10, 7]$ & \raisebox{-.3\totalheight}{\includegraphics[width=.6cm]{figures/MTA4_State_12}} \kern-.5em $\pmb{\rightarrow}$ \kern-.3em \raisebox{-.3\totalheight}{\includegraphics[width=.6cm]{figures/MTA4_State_13.png}} \kern-.5em $\pmb{\rightarrow}$ \kern-.3em  \raisebox{-.3\totalheight}{\includegraphics[width=.6cm]{figures/MTA4_State_8.png}} \kern-.5em $\pmb{\rightarrow}$  \kern-.3em  \raisebox{-.3\totalheight}{\includegraphics[width=.6cm]{figures/MTA4_State_10.png}} \kern-.5em $\pmb{\rightarrow}$  \kern-.3em  \raisebox{-.3\totalheight}{\includegraphics[width=.6cm]{figures/MTA4_State_7.png}}\\            
            
$^1L_{\theta 3}$ & \scriptsize{ $[16, 8, 10, 7, 12, 6, 1]$ }& \raisebox{-.3\totalheight}{\includegraphics[width=.5cm]{figures/MTA4_State_16}} \kern-.5em \tiny{$\pmb{\rightarrow}$} \kern-.3em \raisebox{-.3\totalheight}{\includegraphics[width=.5cm]{figures/MTA4_State_8.png}} \kern-.5em \tiny{$\pmb{\rightarrow}$} \kern-.3em  \raisebox{-.3\totalheight}{\includegraphics[width=.5cm]{figures/MTA4_State_10.png}} \kern-.5em \tiny{$\pmb{\rightarrow}$} \kern-.3em \raisebox{-.3\totalheight}{\includegraphics[width=.6cm]{figures/MTA4_State_7.png}} \kern-.5em \tiny{$\pmb{\rightarrow}$} \kern-.3em  \raisebox{-.3\totalheight}{\includegraphics[width=.5cm]{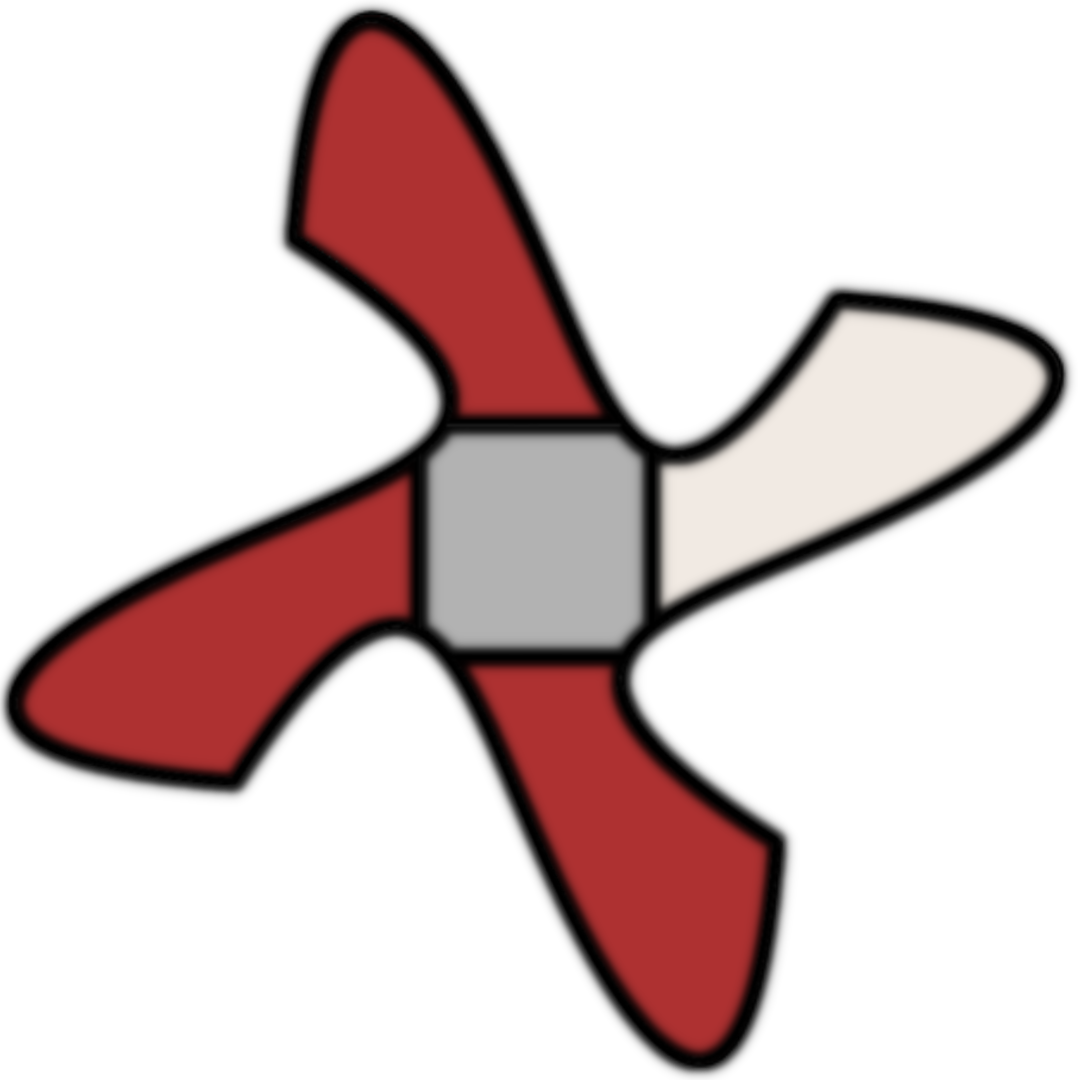}} \kern-.5em \tiny{$\pmb{\rightarrow}$} \kern-.3em \raisebox{-.3\totalheight}{\includegraphics[width=.5cm]{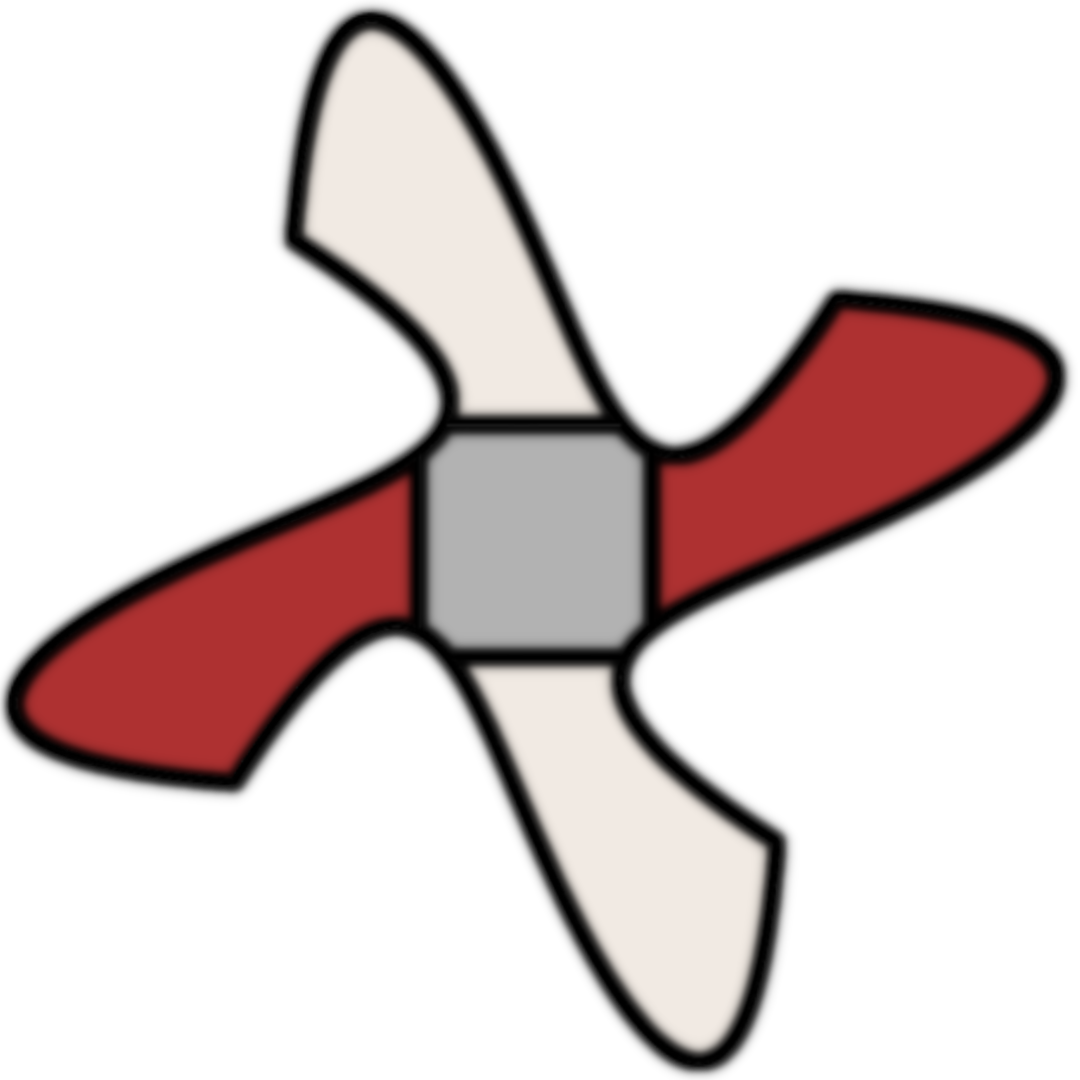}}  \kern-.5em \tiny{$\pmb{\rightarrow}$} \kern-.3em  \raisebox{-.3\totalheight}{\includegraphics[width=.5cm]{figures/MTA4_State_1.png}}\\            
\hline    
\end{tabular*}
\end{center}
\end{table}

One of the major motivations for data-driven control of soft robots is the difficulty in eliminating or minimizing motion artifacts due to small manufacturing inconsistencies. Many soft robots are designed to be symmetric but have significant biases in motion that are not captured in simulation. This further complicates motion planning, as symmetries are often exploited to simplify the control space. The presented four-limb robot is designed to be rotationally symmetric but is observed to exhibit biases in motion. To explore this, we tested symmetric permutations (i.e., 90 degree rotations of the actuation sequence) of the same translation-dominant gait $^1L_{t1}$ and plotted the results in Fig.~\ref{Fig:4perm}. Despite being expected to have identical translation and rotation at 90 degree rotations, the gaits, despite being individually fairly stable, possess vastly different magnitudes of translation, with one gait nearly doubling the rotation magnitude of the others. While simulation is a valuable tool for robotic control, these data emphasize a growing understanding in the soft robot community that data-driven methods (which can also augment existing simulation tools) are needed to address the unique challenges of soft robots, including that of manufacturing inconsistencies.

\begin{figure}[ht]
\centering
    \includegraphics[width=.7\columnwidth,trim=4cm 17.8cm 5cm 9.1cm, clip=true]{figures/1t_legend.pdf} \\[-1.5ex] 
    \subfloat[][]{\includegraphics[width = .43\columnwidth,trim=2cm 0.2cm 2.2cm .8cm, clip=true]{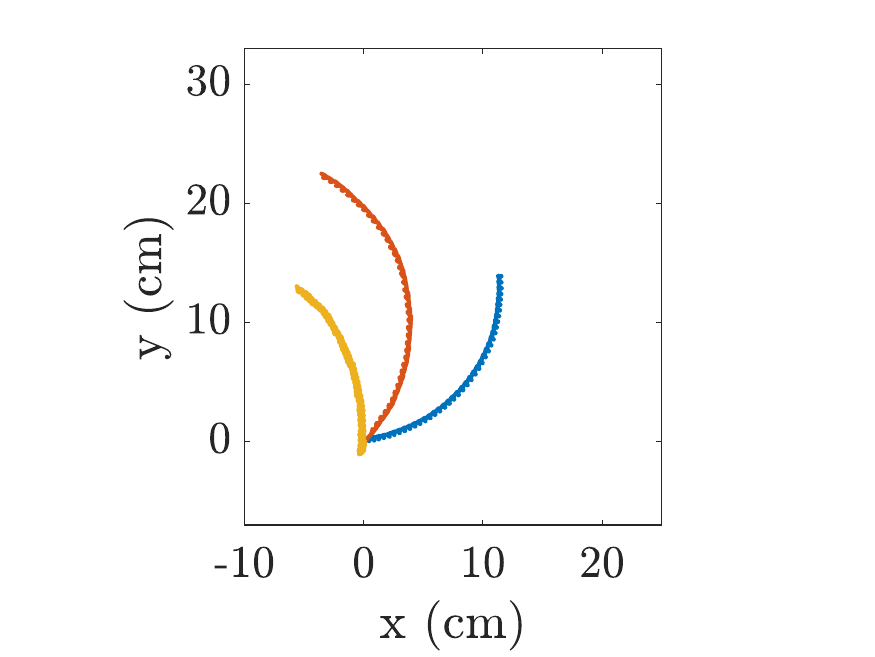}}\hfill
    \subfloat[][]{\includegraphics[width = .54\columnwidth,trim= 0cm 0.1cm 1.3cm .4cm, clip=true]{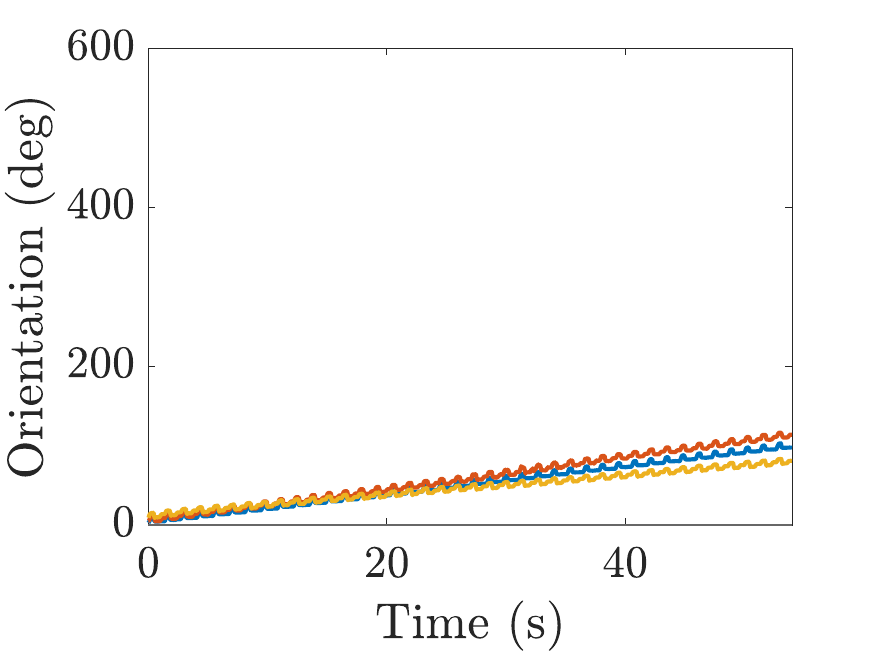}}\par
    
    \includegraphics[width=.6\columnwidth,trim=6cm 19.7cm 5cm 7cm, clip=true]{figures/1theta_legend.pdf} \\[-1.8ex] 
    \subfloat[][]{\includegraphics[width = .43\columnwidth,trim=2cm 0.2cm 2.2cm .8cm, clip=true]{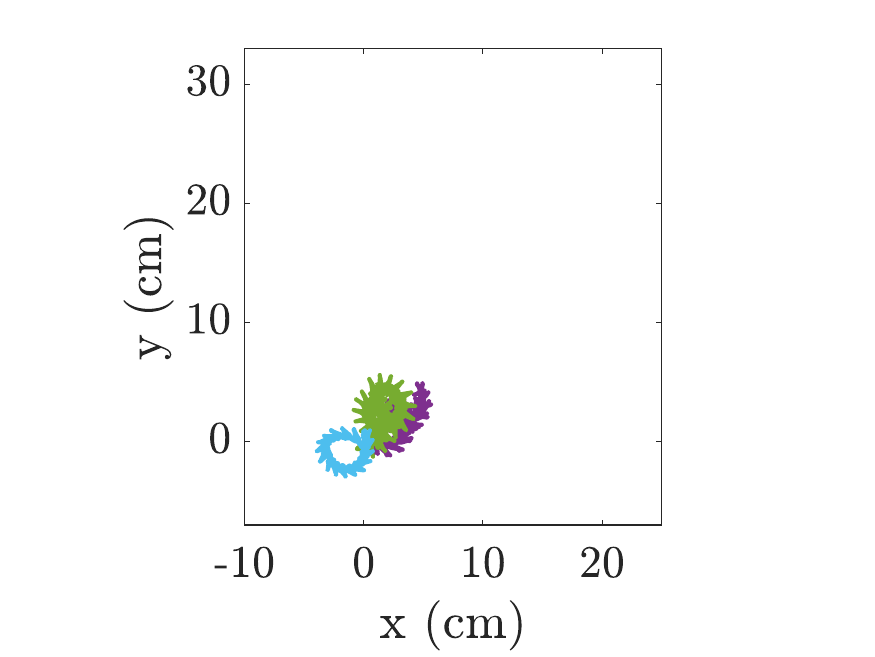}}\hfill
    \subfloat[][]{\includegraphics[width = .54\columnwidth,trim= 0cm 0.1cm 1.3cm .4cm, clip=true]{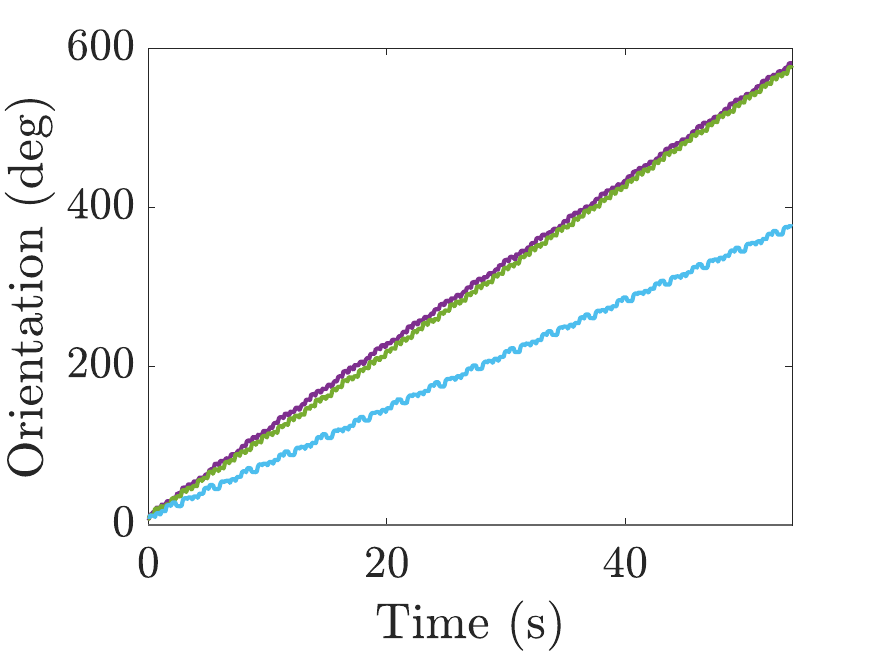}}\par

    \caption{\TetraSoRo~gaits synthesized for the rubber mat (substrate 1).  Experimental (a) trajectory and (b) rotation plots for the translation-dominant gaits ($^1L_{t1}, ^1L_{t2}, ^1L_{t3}$) are shown in addition to the (c) trajectory and (d) orientation plots for the rotation-dominant gaits ($^1L_{\theta 1}, ^1L_{\theta 2}, ^1L_{\theta 3}$). }
    \label{Fig:4synth}
\end{figure}

\begin{figure}[ht]
\centering
    \hspace{5mm}\fbox{
\parbox[c]{.75\columnwidth}{
\centering
\includegraphics[width=.07\columnwidth,trim=6.3cm 14.2cm 14.15cm 13.3cm, clip=true]{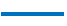} \raisebox{-.3\totalheight}{\includegraphics[width=.6cm]{figures/MTA4_State_3}} \kern-.5em $\pmb{\rightarrow}$ \kern-.3em \raisebox{-.3\totalheight}{\includegraphics[width=.6cm]{figures/MTA4_State_15.png}} \kern-.5em $\pmb{\rightarrow}$ \kern-.3em  \raisebox{-.3\totalheight}{\includegraphics[width=.6cm]{figures/MTA4_State_1.png}} 
\includegraphics[width=.07\columnwidth,trim=8.52cm 14.2cm 11.93cm 13.3cm, clip=true]{figures/perm_legend.pdf} \raisebox{-.3\totalheight}{\includegraphics[width=.6cm]{figures/MTA4_State_2}} \kern-.5em $\pmb{\rightarrow}$ \kern-.3em \raisebox{-.3\totalheight}{\includegraphics[width=.6cm]{figures/MTA4_State_8.png}} \kern-.5em $\pmb{\rightarrow}$ \kern-.3em  \raisebox{-.3\totalheight}{\includegraphics[width=.6cm]{figures/MTA4_State_1.png}} \\
\includegraphics[width=.07\columnwidth,trim=10.82cm 14.2cm 9.71cm 13.3cm, clip=true]{figures/perm_legend.pdf} \raisebox{-.3\totalheight}{\includegraphics[width=.6cm]{figures/MTA4_State_9}} \kern-.5em $\pmb{\rightarrow}$ \kern-.3em \raisebox{-.3\totalheight}{\includegraphics[width=.6cm]{figures/MTA4_State_12.png}} \kern-.5em $\pmb{\rightarrow}$ \kern-.3em  \raisebox{-.3\totalheight}{\includegraphics[width=.6cm]{figures/MTA4_State_1.png}} 
\includegraphics[width=.07\columnwidth,trim=12.95cm 14.2cm 7.5cm 13.3cm, clip=true]{figures/perm_legend.pdf} \raisebox{-.3\totalheight}{\includegraphics[width=.6cm]{figures/MTA4_State_5}} \kern-.5em $\pmb{\rightarrow}$ \kern-.3em \raisebox{-.3\totalheight}{\includegraphics[width=.6cm]{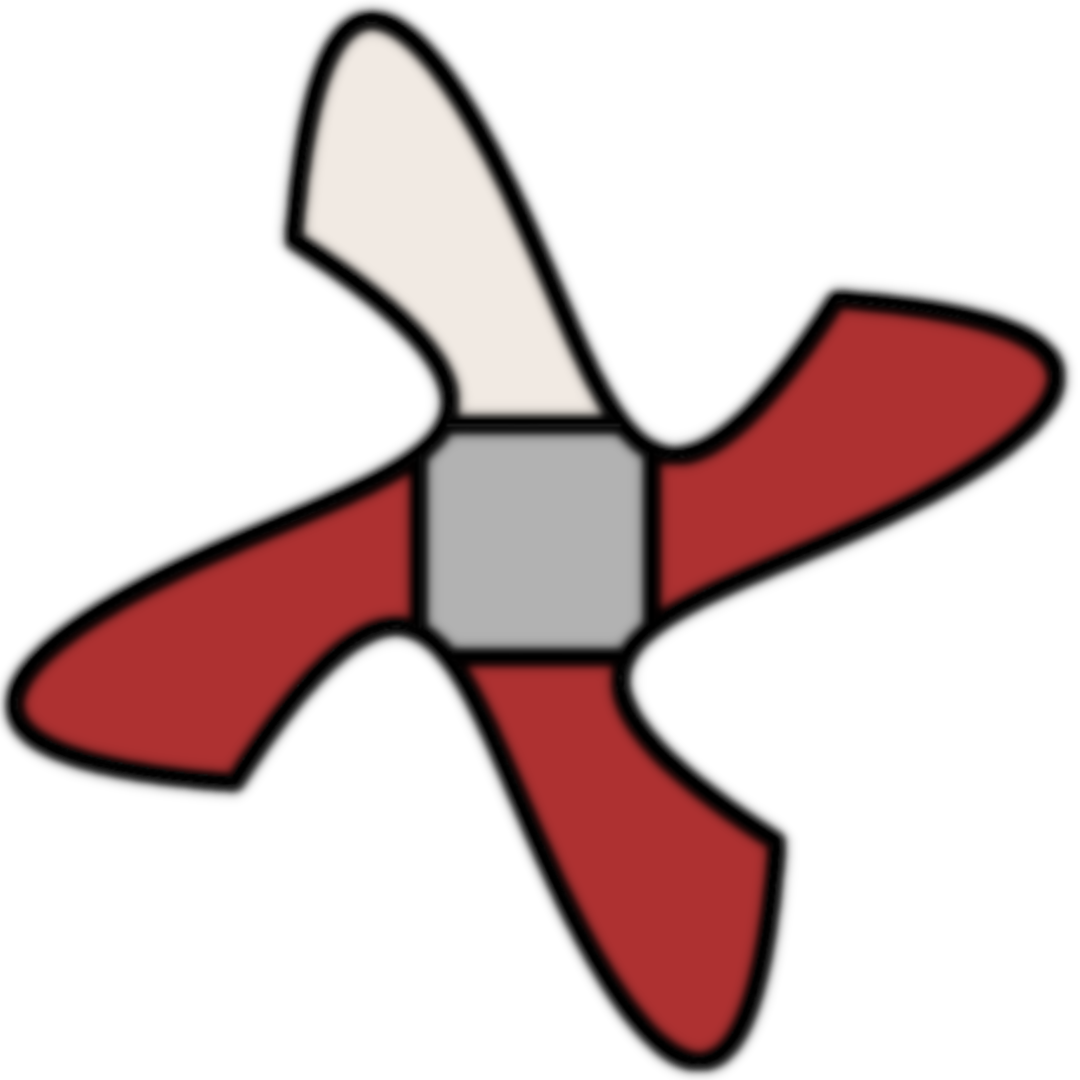}} \kern-.5em $\pmb{\rightarrow}$ \kern-.3em  \raisebox{-.3\totalheight}{\includegraphics[width=.6cm]{figures/MTA4_State_1.png}} 
}}\vspace*{-2mm}
     \subfloat[][]{\includegraphics[width = .43\columnwidth,trim=2cm 0.2cm 2.2cm .5cm, clip=true]{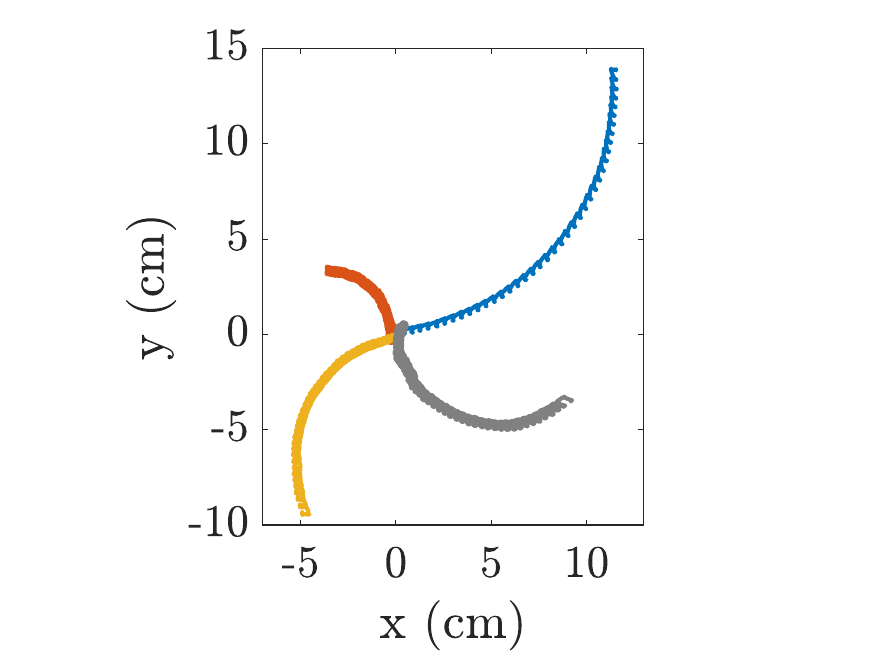}}\hfill
    \subfloat[][]{\includegraphics[width = .54\columnwidth,trim= 0cm 0.1cm 1.3cm .5cm, clip=true]{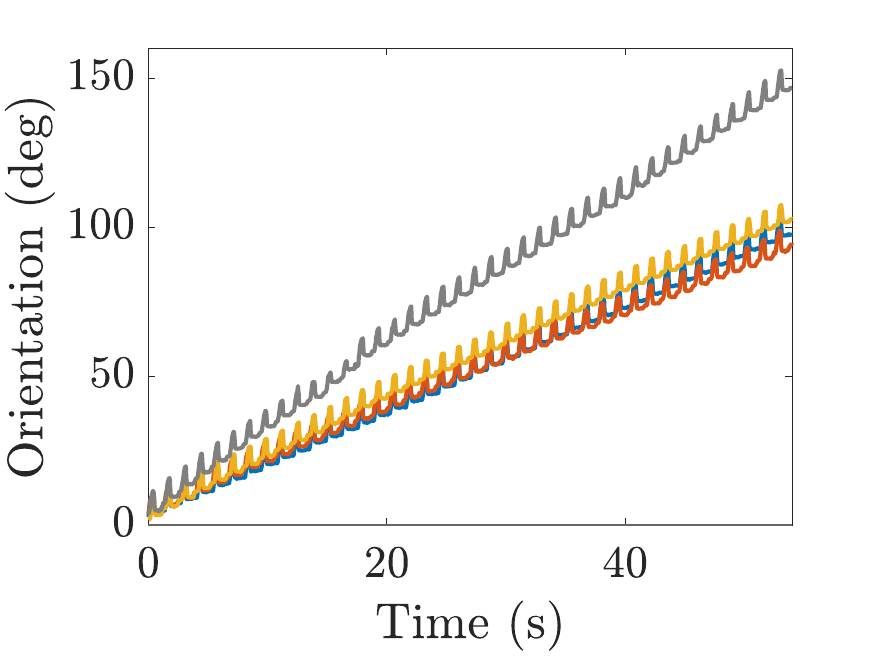}}\par

    \caption{\TetraSoRo~translation-dominant gait $^1L_{t1}$ is compared with its three rotationally symmetric permutation gaits in terms of experimental (a) trajectory and (b) orientation. }
    \label{Fig:4perm}
\end{figure}

\subsubsection{Loss-of-limb Scenario Experimental Results}


While the soft robotics field continues to mature, robustness remains a problem. Therefore, control algorithms should be designed to be resilient in the case of actuator failure (e.g., a tendon snapping). The proposed gait synthesis algorithm contains an input variable of functioning actuators; in the case of one or more failures, the gait synthesis can be easily and quickly recalculated by first pruning the unreachable states and edges of the digraph, as shown in Fig.~\ref{Fig:lol_graph}. \Tab \ref{Tab:4lol} and \Fig \ref{Fig:4lol} detail the results of the gait synthesis for loss-of-limb scenario. The ``lost" limb was varied for gait synthesis and the translation-dominant and rotation-dominant gaits that exhibited the best function evaluations (i.e., best predicted trajectories and rotation values) were chosen. One limb was excluded from the limb variation in translation-dominant synthesis to avoid repeated gaits in \Tab \ref{Tab:4synth}. We were able to successfully synthesize translation-dominant and rotation-dominant gaits in the case of actuator failure. One translation-dominant gait, $^1L_{t4}$, even exhibited higher translation magnitude with similar rotation, as compared to the previously synthesized gaits. This is expected to be due to variation between predicted and expected behavior.

\begin{figure}
    \centering
    \subfloat[][]{\includegraphics[page=2,width=.49\columnwidth,trim= 1.2cm 6.5cm 18.9cm 4.9cm, clip=true]{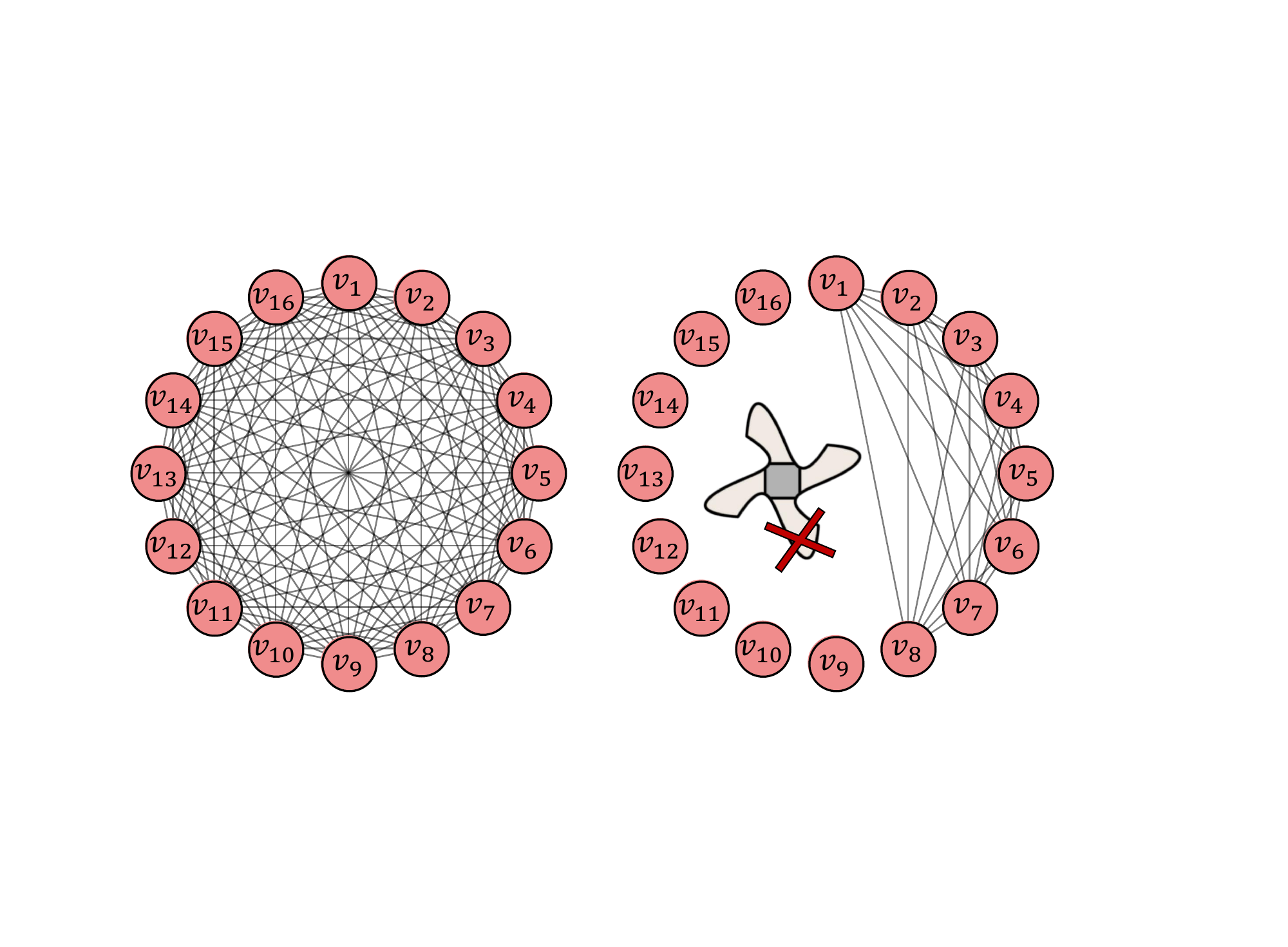}} \hfill
    \subfloat[][]{\includegraphics[page=2,width=.49\columnwidth,trim= 15.8cm 6.5cm 4cm 4.9cm, clip=true]{figures/loss_of_limb.pdf}} \par
    \caption{Single actuator failure (loss of limb) resulting in reformulating from (a) original digraph to (b) pruned digraph. All lines connecting nodes represent bidirectional edges (i.e., two distinct robot state transitions) for ease of visualization.}
\label{Fig:lol_graph}
\end{figure}
\begin{table}
\renewcommand\arraystretch{2}
\begin{center}
\caption{\TetraSoRo~Loss-of-Limb Synthesized Gaits on Rubber Mat. }
\label{Tab:4lol}
\noindent\begin{tabular*}{\columnwidth}{@{\extracolsep{\fill}}ccc@{}}
\hline        
Gait & $V(L)$ & Robot States \\           \hline \hline  
$^1L_{t3}$ &  $[3,8,1]$ & \raisebox{-.3\totalheight}{\includegraphics[width=.6cm]{figures/MTA4_State_3}} \kern-.5em $\pmb{\rightarrow}$ \kern-.3em \raisebox{-.3\totalheight}{\includegraphics[width=.6cm]{figures/MTA4_State_8.png}} \kern-.5em $\pmb{\rightarrow}$ \kern-.3em  \raisebox{-.3\totalheight}{\includegraphics[width=.6cm]{figures/MTA4_State_1.png}}\\            
        
$^1L_{t4}$ &  $[5,3,8,1]$ & \raisebox{-.3\totalheight}{\includegraphics[width=.6cm]{figures/MTA4_State_5}} \kern-.5em $\pmb{\rightarrow}$ \kern-.3em \raisebox{-.3\totalheight}{\includegraphics[width=.6cm]{figures/MTA4_State_3.png}} \kern-.5em $\pmb{\rightarrow}$ \kern-.3em  \raisebox{-.3\totalheight}{\includegraphics[width=.6cm]{figures/MTA4_State_8.png}} \kern-.5em $\pmb{\rightarrow}$  \kern-.3em  \raisebox{-.3\totalheight}{\includegraphics[width=.6cm]{figures/MTA4_State_1.png}}\\            
                   
\hline    \hline
$^1L_{\theta 3}$ &  $[4, 5, 6, 3, 2]$ & \raisebox{-.3\totalheight}{\includegraphics[width=.6cm]{figures/MTA4_State_4}} \kern-.5em $\pmb{\rightarrow}$ \kern-.3em \raisebox{-.3\totalheight}{\includegraphics[width=.6cm]{figures/MTA4_State_5.png}} \kern-.5em $\pmb{\rightarrow}$ \kern-.3em  \raisebox{-.3\totalheight}{\includegraphics[width=.6cm]{figures/MTA4_State_6.png}} \kern-.5em $\pmb{\rightarrow}$ \kern-.3em \raisebox{-.3\totalheight}{\includegraphics[width=.6cm]{figures/MTA4_State_3.png}} \kern-.5em $\pmb{\rightarrow}$ \kern-.3em  \raisebox{-.3\totalheight}{\includegraphics[width=.6cm]{figures/MTA4_State_2.png}} \kern-.5em $\pmb{\rightarrow}$ \kern-.3em \raisebox{-.3\totalheight}{\includegraphics[width=.6cm]{figures/MTA4_State_2.png}}\\            
        
$^1L_{\theta 4}$ &  $[11, 13, 3, 1]$ & \raisebox{-.3\totalheight}{\includegraphics[width=.6cm]{figures/MTA4_State_11}} \kern-.5em $\pmb{\rightarrow}$ \kern-.3em \raisebox{-.3\totalheight}{\includegraphics[width=.6cm]{figures/MTA4_State_13.png}} \kern-.5em $\pmb{\rightarrow}$ \kern-.3em  \raisebox{-.3\totalheight}{\includegraphics[width=.6cm]{figures/MTA4_State_3.png}} \kern-.5em $\pmb{\rightarrow}$  \kern-.3em  \raisebox{-.3\totalheight}{\includegraphics[width=.6cm]{figures/MTA4_State_1.png}}\\                    
\hline    
\end{tabular*}
\end{center}
\end{table}

\begin{figure}[ht]
\centering
    \includegraphics[width=.6\columnwidth,trim=6cm 17.8cm 5cm 9.1cm, clip=true]{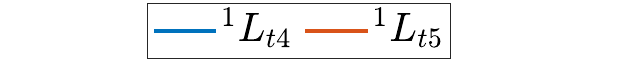} \\[-1.5ex] 
    \subfloat[][]{\includegraphics[width = .43\columnwidth,trim=2.7cm 0.2cm 2.8cm 2cm, clip=true]{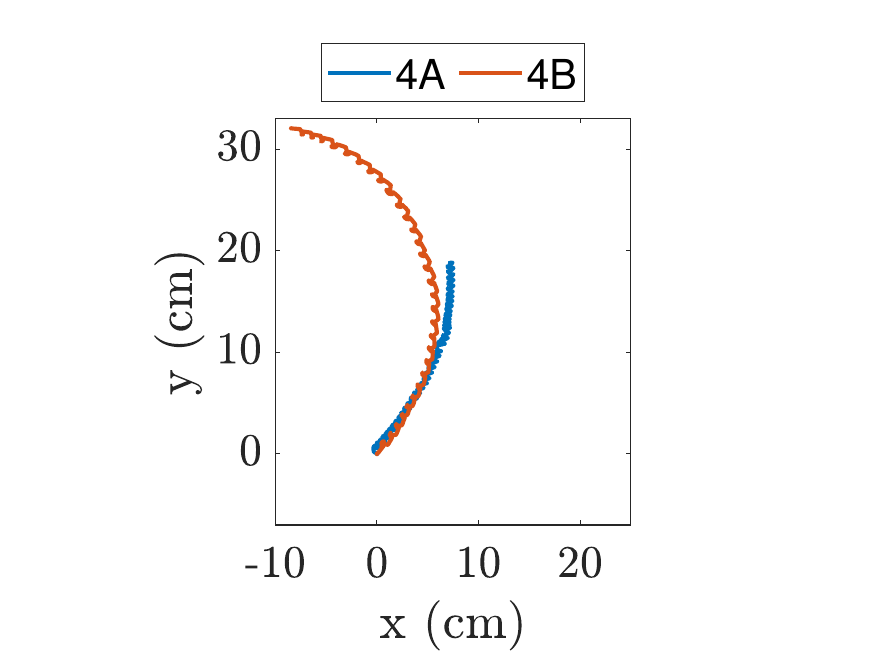}}\hfill
    \subfloat[][]{\includegraphics[width = .54\columnwidth,trim= 0cm 0.1cm 1.3cm .4cm, clip=true]{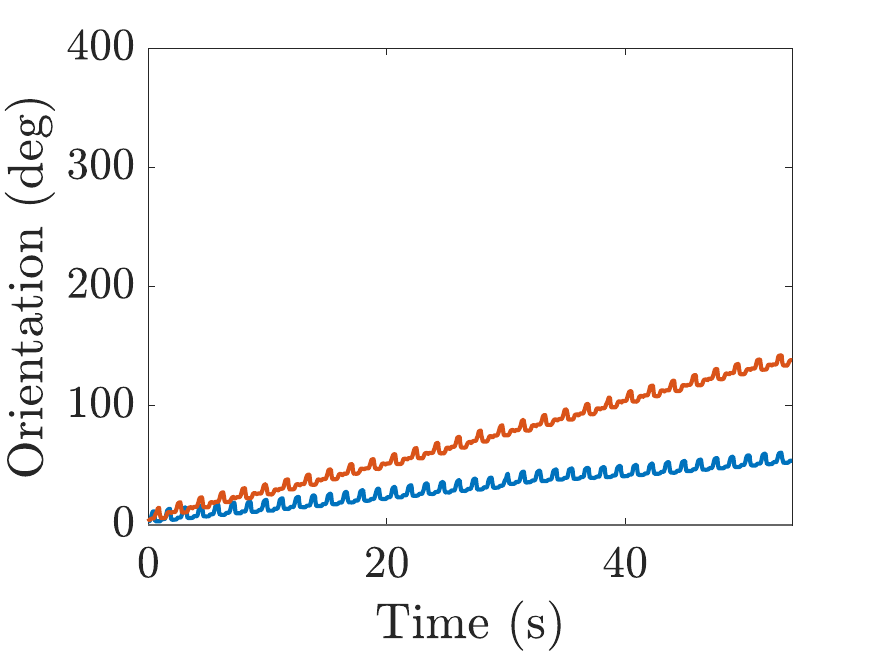}}\par
    
    \includegraphics[width=.62\columnwidth,trim=6cm 17.8cm 5cm 9.1cm, clip=true]{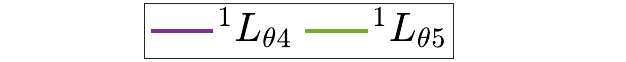} \\[-1.5ex] 
    \subfloat[][]{\includegraphics[width = .43\columnwidth,trim=1.6cm 0.2cm 2.9cm .4cm, clip=true]{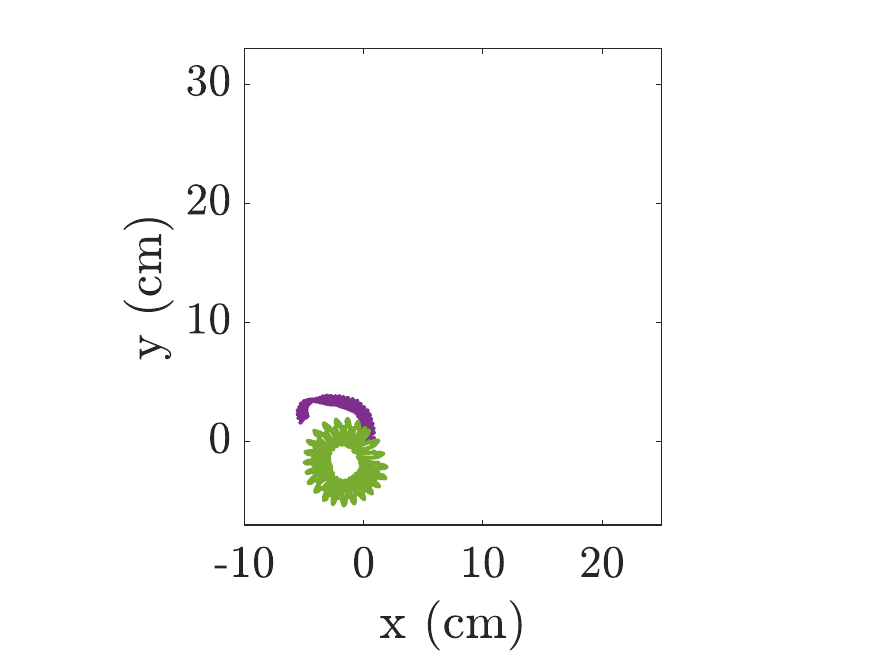}}\hfill
    \subfloat[][]{\includegraphics[width = .54\columnwidth,trim= 0cm 0.1cm 1.3cm .4cm, clip=true]{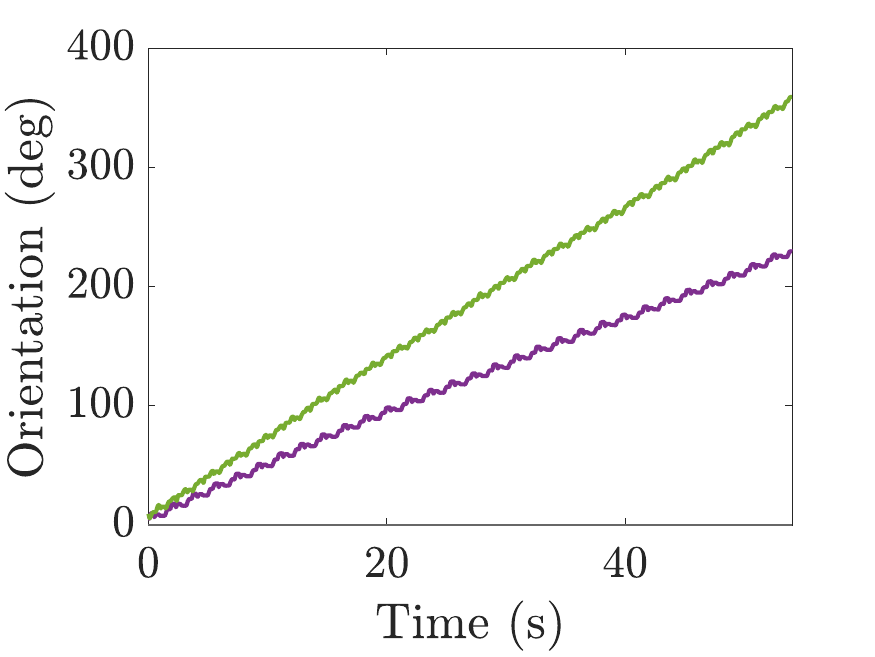}}\par

    \caption{\TetraSoRo~gaits synthesized for the rubber mat (substrate 1) in the case of single-actuator failure (loss of limb).  Experimental (a) trajectory and (b) orientation plots for the translation-dominant gaits ($^1L_{t4}, ^1L_{t5}$) are shown in addition to the (c) trajectory and (d) orientation plots for the rotation-dominant gaits ($^1L_{\theta 4}, ^1L_{\theta 5}$). }
    \label{Fig:4lol}
\end{figure}

\subsection{Gait Characterization}
To summarize the experimental results and analyze the success and applicability of the proposed gait synthesis method, we can construct \textit{gait characterization} plots. It can be difficult to define optimality of the gaits as different factors (e.g., translation magnitude, rotation magnitude, and variance) can impact their usefulness in practice. To attempt to visualize these factors, \Fig \ref{Fig:gaitchar} plots the experimental average translational speed versus the average rotational speed for every gait tested (both intuitive and synthesized). Bubbles (ellipses) are used to represent the data, where the centroid is the average of the data and the ellipse axes represent $\pm1$ standard deviation. Finally, a line that originates at $(0,0)$ and is proportionally scaled to the data is used to divide each plot into two regions: rotation-dominance (upper left) and translation-dominance (bottom right). Data points with the smallest bubbles (i.e., smallest variance) that lie furthest from these lines are considered optimal as they achieve the best separation between rotation and translation. 

There are a few observations of interest. 
\begin{enumerate}
    \item The synthesized rotation-dominant and translation-dominant gaits are generally well-separated, as indicated by the dotted line.
    \item The intuitive gaits tend to lie closer to line, suggesting a greater degree of translation/rotation coupling.
    \item The four-limb robot achieves the greatest speeds (both translational and rotational) with the lowest percentage of variance, but struggles to achieve near-zero translation or rotation, most likely due to its stronger rotational inclination.
    \item The three-limb robot and carpet experiments produce the lowest degree of separation with the highest degree of variance.
\end{enumerate}

 These plots provide a concise overview of the experiments, including the degree to which the robot-environment pairs conform to the model assumptions; the three-limb carpet experiments violated methodology assumptions (e.g., lack of repeatability of quasi-static states, inconsistent curling/uncurling, uneven and heterogeneous substrate), the effect of which is clearly reflected in the gait characterization plot. As can be seen in \Tab \ref{Tab:speeds}, the methodology achieves an average speed improvement over intuitive gaits of 82\% and 97\% for translation and rotation, respectively, achieving maximum translation speeds of 3.68 mm/s (0.017 BL/s)  and 7.15 mm/s (0.033 BL/s) as well as maximum rotation speeds of 4.23$^{\circ}$/s and 10.61$^{\circ}$/s for the \TriSoRo~and \TetraSoRo, respectively.  Nevertheless, these data show that the proposed gait method strategy is successful and resilient, capable of learning gaits for soft robots with unintuitive behavior, unexpected asymmetrical motion, coupled translational/rotational movements, and complex interaction with the environment even when testing for different robot morphologies, different surfaces, assumption violations, and simulated actuator failure. These experiments also emphasize the importance of considering robot-environment interactions rather than just the robot in free space, as even varying the frictional interaction between the soft robot and flat substrate can significantly alter the robot behavior and optimal locomotion gaits.
\begin{figure*}[ht]
\newcommand{\subfigwidth}{0.48\textwidth}
\newcommand{\figwidth}{0.6\textwidth}
\centering
    \includegraphics[width=\figwidth,trim=1cm 25.3cm 1cm 1.6cm, clip=true]{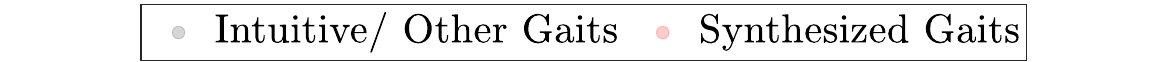} \\[-2.2ex] 
    \subfloat[][Three-limb substrate 1 (rubber mat)]{\includegraphics[width = \subfigwidth,trim= 0.2cm 0cm 1cm .3cm, clip=true]{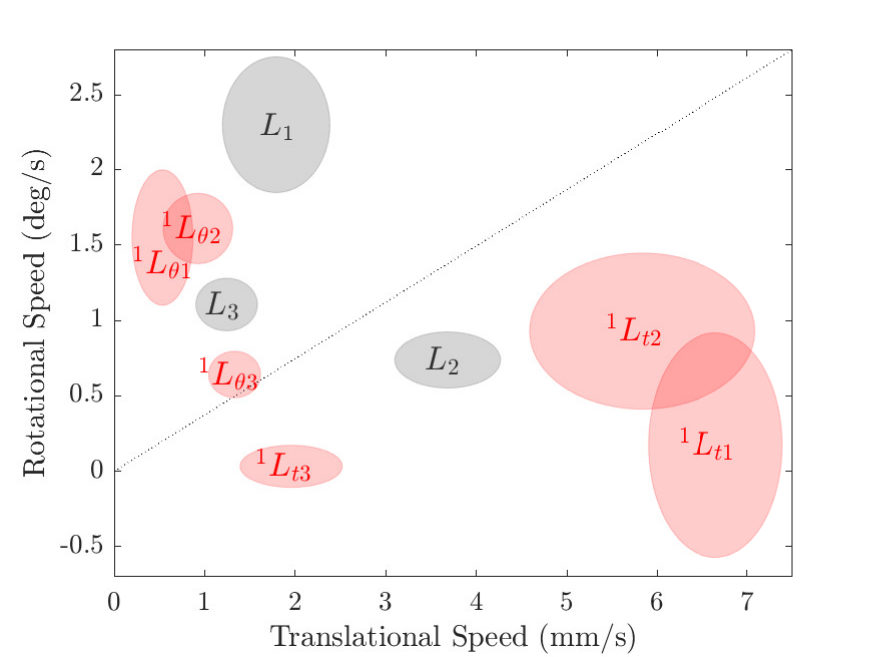}}\hfill
    \subfloat[][Three-limb substrate 2 (whiteboard)]{\includegraphics[width = \subfigwidth,trim= 0.2cm 0cm 1cm .3cm, clip=true]{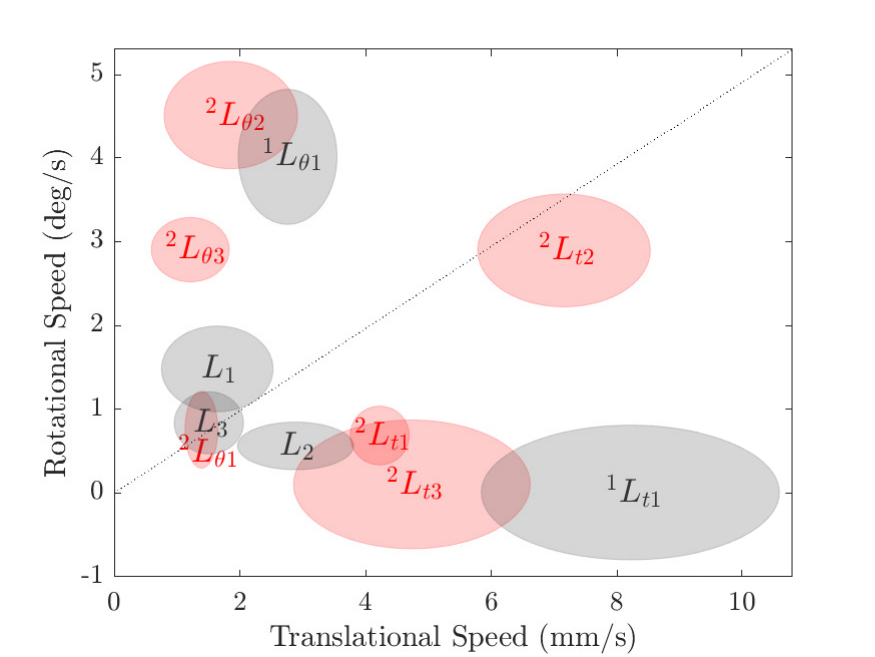}}\\[-2ex] 
    \subfloat[][Three-limb substrate 3 (carpet)]{\includegraphics[width = \subfigwidth,trim= 0.2cm 0cm 1cm .3cm, clip=true]{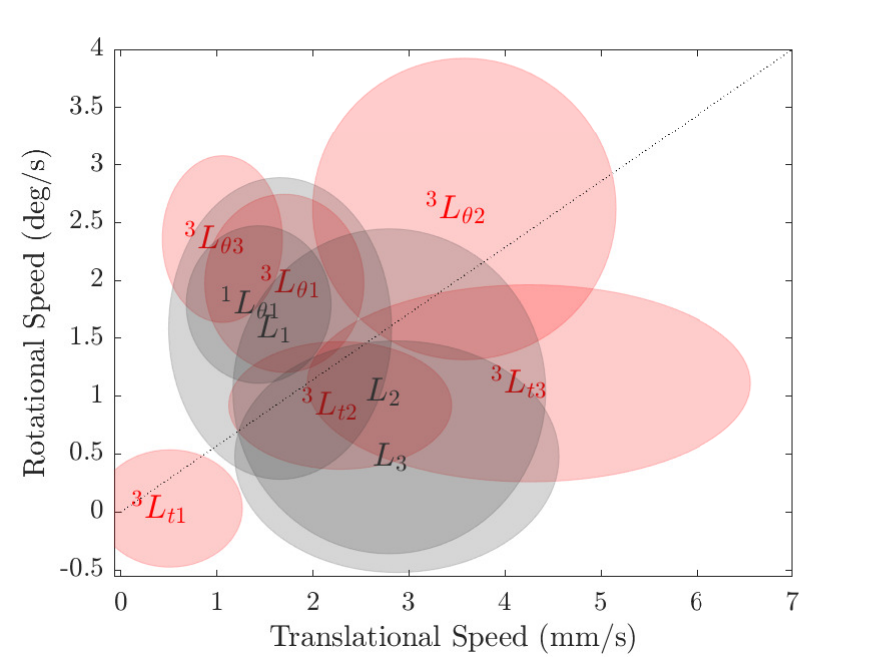}}\hfill
    \subfloat[][Four-limb substrate 1 (rubber mat)]{\includegraphics[width = \subfigwidth,trim= 0.18cm 0cm 1cm .3cm, clip=true]{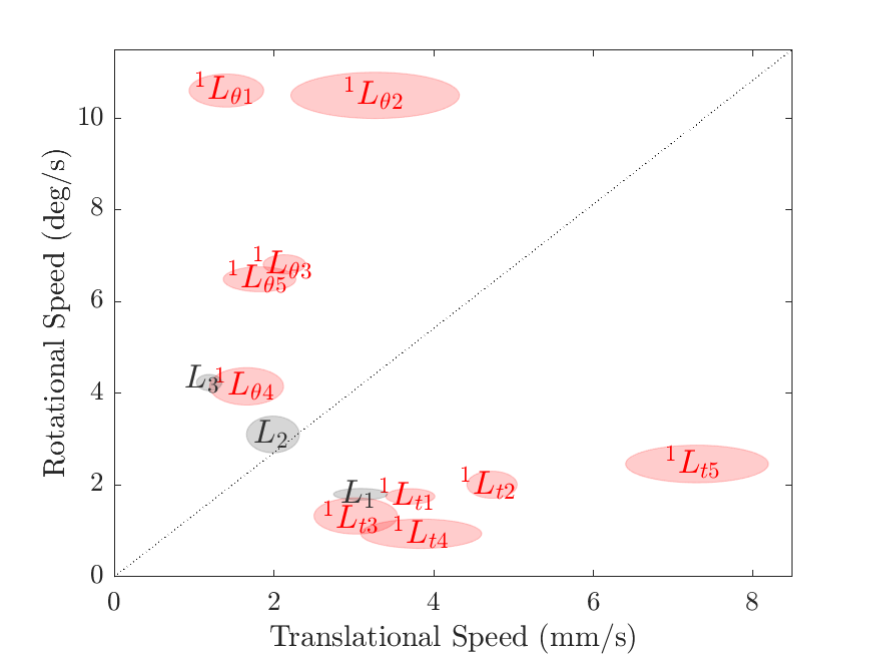}}\par  
    \caption{Gait characterization where the height and width of the ellipsoid bubbles represent $\pm1$ standard deviation of rotational speed and translation speed, respectively, of a gait cycle. Red bubbles correspond to gaits synthesized for the given robot-environment pair, and grey bubbles correspond to the remaining gaits.}
    \label{Fig:gaitchar}
\end{figure*}

\begin{table*}[ht]
\captionsetup{justification=centering, labelsep=newline,textfont=sc}
    \caption{Speed and Performance of Gaits. }
        \label{Tab:speeds}
\centering
\renewcommand{\arraystretch}{1.3}
    \noindent\begin{tabular*}{\textwidth}{@{\extracolsep{\fill}}*{9}{c}@{}}
    \hline
            \multirow{2}{*}{\hfil Robot} & \multirow{2}{*}{\hfil Substrate}  & \multicolumn{3}{c}{Intuitive Gaits} &  \multicolumn{3}{c}{Synthesized Gaits}& \multirow{2}{*}{\hfil \% Improvement}\\
            \cline{3-5} \cline{6-8}
           & & $\|\bm{\nu}\|_{\text{max}}$ (mm/s) & $\|\bm{\nu}\|_{\text{max}}$ (BL/s) & $\|\omega\|_{\text{max}}$ ($^{\circ}$/s)   & $\|\bm{\nu}\|_{\text{max}}$ (mm/s) & $\|\bm{\nu}\|_{\text{max}}$ (BL/s) & $\|\omega\|_{\text{max}}$ ($^{\circ}$/s)  &\\
            \hline \hline
            1 & 1 & 3.68 & 0.017 & - & 6.64 & 0.030 & - & 80.4\\

            1 & 2&  2.89 & 0.013 & - & 7.15 & 0.033 & - &148.0\\
            
            1 & 3 & 2.87 & 0.013 & - & 4.25 & 0.019 & - & 47.9\\
            
            2 & 1 & 3.08 & 0.009 & - & 4.73 & 0.013 & - & 53.4\\
            \hline

            1 & 1 & - & - & 2.30 & - & - & 1.61 & -30.1 \\

            1 & 2 & - & - & 1.48 & - & - & 4.51 & 204.8 \\

            1 & 3 & - & - & 1.58 & - & - & 2.62 & 65.2 \\
            
            2 & 1 & - & - & 4.23 & - & - & 10.61 & 150.6 \\
            \hline
            \multicolumn{9}{p{.98\textwidth}}{Note: The maximum translation speed $\|\bm{\nu}\|_{\text{max}}$ and maximum rotation speed $\|\omega\|_{\text{max}}$ for each table entry are chosen among the three intuitive gaits and the three synthesized gaits (translation-dominant and rotation-dominant gaits, respectively) for each robot-substrate pair without loss of limb. The body length (BL) is calculated as the maximum dimension of the robot in the flat (unactuated) state. BL = 220 mm for robot 1 and BL = 350 mm for robot 2.}\\
            \hline
    \end{tabular*}

\end{table*}
\section{Conclusion} \label{Sec:Conclusion}
This research demonstrates that statistical graph theory has the ability to play an instrumental role in locomotion control of soft robots in unknown environments. The probabilistic Model-Free Control (pMFC) framework is an improved environment-centric data-driven approach that affords a graphical representation to {locomotion control}. The structure of the digraph is robot-specific where the vertices and edges correspond to the discretized robot states and motion primitives. The probabilistic edge weights describe the robot-environment interactions and are optimally learned using stochastic Hierholzer's algorithm. The framework also affords mathematical definition of fixed gaits as binary column vectors. This facilitates definition of locomotion gaits as simple cycles that are transformation invariant, i.e., the translation and rotation of a simple cycle are preserved irrespective of the starting vertex. 

Thereafter, gaits are synthesized by optimizing a linearized cost function formulated as a BILP with linear constraints. This formulation provides a quick and tractable solution despite the problem being NP-hard and balances exploration of gaits (through LHS) with exploitation through the exact and optimal solutions obtained using \Matlab ~\verb+intlinprog+. Multiple translation and rotation gaits are synthesized using the experimentally learned graph edge weights. The resulting synthesized gaits are experimentally validated to demonstrate that the pMFC approach can synthesize locomotion gaits on unknown surfaces through optimal learning. In summary, the method focuses on synthesizing gaits and is effective for robots lacking biological analogues for biomimetic gait selection and/or robust dynamic models that facilitate simulation for use in traditional learning methods. Therefore, a great application for this method is in soft robot locomotion control due to the significant simulation-to-reality gaps that currently exist with model-based methods.


The advantages to this approach include its versatility (as it is not dependent on any model or actuator type), explainability of the tunable parameters, resilience (in the case of assumption violations and actuator failure), and the optimality and diversity of the synthesized gaits. Limiting assumptions of the model include quasi-static motion primitives, flat ground (a common condition for fixed gaits), static stability of robot states following all transitions, and binary actuation states. These assumptions also exclude dynamic and more complex gaits which could potentially provide faster speed and more diverse motions, including through actuation patterns of varying voltage/time constants and continuous input functions.  Regarding the generalizability of this approach, the pMFC framework is particularly well suited for multi-limb crawling soft robots. These robots tend to use binary actuation control, have stable robot configurations throughout movement, and involve locomotion that can be modeled (or easily adapted to be modeled) as quasi-static. The framework is also applicable to a variety of different robot constructions, morphology, actuators, and number of limbs, provided that the robot locomotion adheres to the model assumptions. 

Limitations of this methodology include the exponentially increasing complexity with added states and limbs; more complex solvers or non-exact solvers can be used to circumvent this complexity as more actuators are added, while patterns and probabilistic methods may need to be exploited to decrease learning time. For example, direct application of our method to robots with five or six actuators would result in graph edge learning times (i.e., continuous experimental robot operation) of 22.5 minutes and 47.3 minutes, respectively. 

The experimental results performed on a planar surface are extremely promising. They validate the idea of a data-driven probabilistic model-free control framework to optimally learn about an unknown surface and synthesize the locomotion gaits in real time. The synthesized gaits have the potential to serve as starting points for other locomotion control techniques, including Reinforcement Learning (RL) and Central Pattern Generators (CPG). Furthermore, the proposed framework is sufficiently generic to be extended and applied to a diverse family of multi-limb soft robots without requiring significant modifications. In summary, it is particularly relevant for soft robots where prior knowledge of their behavior is limited. Both the assumption of planar uniform locomotion surface and temporally constant edge weights of the graph limit the applicability of this method to more challenging terrains. Subsequently, the evolution of the framework for complex and dynamic environments will involve combining this methodology with feedback control. Future work will focus on applying the synthesized gaits to soft robot locomotion control and path planning; utilizing learning methods to further tune and optimize actuation and gait parameters; developing modifications to reduce learning time (e.g., through Gaussian Processes or reinforcement learning); evolving the methodology for application in more complex environments and dynamic systems; and investigating the factors contributing to increased variance in soft robot locomotion behavior, such as complex frictional interactions and the influence of the tether. 
%
\section*{Acknowledgment}
The authors would like to thank Patricio Vela and Alexander Chang for invaluable discussions.


\begin{thebibliography}{10}

\bibitem{kim_soft_2013}
S.~Kim, C.~Laschi, and B.~Trimmer, ``Soft robotics: a bioinspired evolution in robotics,'' {\em Trends in Biotechnology}, vol.~31, pp.~287--294, May 2013.

\bibitem{tolley_resilient_2014}
M.~T. Tolley, R.~F. Shepherd, B.~Mosadegh, K.~C. Galloway, M.~Wehner, M.~Karpelson, R.~J. Wood, and G.~M. Whitesides, ``A {Resilient}, {Untethered} {Soft} {Robot},'' {\em Soft Robotics}, vol.~1, pp.~213--223, Sept. 2014.
\newblock Publisher: Mary Ann Liebert, Inc., publishers.

\bibitem{vikas_design_2016}
V.~Vikas, E.~Cohen, R.~Grassi, C.~Sözer, and B.~Trimmer, ``Design and {Locomotion} {Control} of a {Soft} {Robot} {Using} {Friction} {Manipulation} and {Motor}-{Tendon} {Actuation},'' {\em IEEE Transactions on Robotics}, vol.~32, pp.~949--959, Aug. 2016.

\bibitem{environment}
C.~Majidi, R.~F. Shepherd, R.~K. Kramer, G.~M. Whitesides, and R.~J. Wood, ``Influence of surface traction on soft robot undulation,'' {\em The International Journal of Robotics Research}, vol.~32, no.~13, pp.~1577--1584, 2013.

\bibitem{pinskier_bioinspiration_2022}
J.~Pinskier and D.~Howard, ``From {Bioinspiration} to {Computer} {Generation}: {Developments} in {Autonomous} {Soft} {Robot} {Design},'' {\em Advanced Intelligent Systems}, vol.~4, no.~1, p.~2100086, 2022.

\bibitem{faure_sofa:_2012}
F.~Faure, C.~Duriez, H.~Delingette, J.~Allard, B.~Gilles, S.~Marchesseau, H.~Talbot, H.~Courtecuisse, G.~Bousquet, I.~Peterlik, and S.~Cotin, ``{SOFA}: {A} {Multi}-{Model} {Framework} for {Interactive} {Physical} {Simulation},'' in {\em Soft {Tissue} {Biomechanical} {Modeling} for {Computer} {Assisted} {Surgery}} (Y.~Payan, ed.), no.~11 in Studies in {Mechanobiology}, {Tissue} {Engineering} and {Biomaterials}, pp.~283--321, Springer Berlin Heidelberg, 2012.

\bibitem{gamus_understanding_2020}
B.~Gamus, L.~Salem, A.~D. Gat, and Y.~Or, ``Understanding {Inchworm} {Crawling} for {Soft}-{Robotics},'' {\em IEEE Robotics and Automation Letters}, vol.~5, pp.~1397--1404, Apr. 2020.
\newblock Conference Name: IEEE Robotics and Automation Letters.

\bibitem{chang_shape-centric_2021}
A.~Chang, C.~Freeman, A.~Niddish, V.~Vikas, and P.~Vela, ``Shape-centric {Modeling} for {Soft} {Robot} {Inchworm} {Locomotion},'' in {\em {IEEE}/{RSJ} {International} {Conference} on {Intelligent} {Robots} and {Systems}}, Oct. 2021.

\bibitem{doi:10.1073/pnas.1116564108}
R.~F. Shepherd, F.~Ilievski, W.~Choi, S.~A. Morin, A.~A. Stokes, A.~D. Mazzeo, X.~Chen, M.~Wang, and G.~M. Whitesides, ``Multigait soft robot,'' {\em Proceedings of the National Academy of Sciences}, vol.~108, no.~51, pp.~20400--20403, 2011.

\bibitem{umedachi_softworms:_2016}
T.~Umedachi, V.~Vikas, and B.~A. Trimmer, ``Softworms: the design and control of non-pneumatic, {3D}-printed, deformable robots,'' {\em Bioinspiration \& Biomimetics}, vol.~11, no.~2, p.~025001, 2016.

\bibitem{Mahendran_IROS_2023}
A.~N. Mahendran, C.~Freeman, A.~H. Chang, M.~McDougall, P.~A. Vela, and V.~Vikas, ``Multi-gait locomotion planning and tracking for tendon-actuated terrestrial soft robot (terresoro),'' in {\em 2023 IEEE/RSJ International Conference on Intelligent Robots and Systems (IROS)}, pp.~2542--2549, 2023.

\bibitem{vikas_model-free_2015}
V.~Vikas, P.~Grover, and B.~Trimmer, ``Model-free control framework for multi-limb soft robots,'' in {\em {IEEE}/{RSJ} {International} {Conference} on {Intelligent} {Robots} and {Systems}}, pp.~1111--1116, Sept. 2015.

\bibitem{8593404}
C.~Branyan and Y.~Menğüç, ``Soft snake robots: Investigating the effects of gait parameters on locomotion in complex terrains,'' in {\em 2018 IEEE/RSJ International Conference on Intelligent Robots and Systems (IROS)}, pp.~1--9, 2018.

\bibitem{seok_meshworm:_2013}
S.~Seok, C.~D. Onal, K.-J. Cho, R.~J. Wood, D.~Rus, and S.~Kim, ``Meshworm: a peristaltic soft robot with antagonistic nickel titanium coil actuators,'' {\em IEEE/ASME Transactions on Mechatronics}, vol.~18, no.~5, pp.~1485--1497, 2013.

\bibitem{doi:10.1089/soro.2017.0042}
P.~Wu, W.~Jiangbei, and F.~Yanqiong, ``The structure, design, and closed-loop motion control of a differential drive soft robot,'' {\em Soft Robotics}, vol.~5, no.~1, pp.~71--80, 2018.
\newblock PMID: 29412082.

\bibitem{coevoet_soft_2019}
E.~Coevoet, A.~Escande, and C.~Duriez, ``Soft robots locomotion and manipulation control using {FEM} simulation and quadratic programming,'' in {\em 2019 2nd {IEEE} {International} {Conference} on {Soft} {Robotics} ({RoboSoft})}, pp.~739--745, Apr. 2019.

\bibitem{bern_trajectory_2019}
J.~M. Bern, P.~Banzet, R.~Poranne, and S.~Coros, ``Trajectory {Optimization} for {Cable}-{Driven} {Soft} {Robot} {Locomotion},'' {\em Robotics: Science and Systems XV}, 2019.

\bibitem{schaff_soft_2022}
C.~Schaff, A.~Sedal, and M.~R. Walter, ``Soft {Robots} {Learn} to {Crawl}: {Jointly} {Optimizing} {Design} and {Control} with {Sim}-to-{Real} {Transfer},'' {\em Robotics: Science and Systems XVIII}, June 2022.

\bibitem{saunders_experimental_2011}
F.~Saunders, E.~Golden, R.~D. White, and J.~Rife, ``Experimental verification of soft-robot gaits evolved using a lumped dynamic model,'' {\em Robotica}, vol.~29, pp.~823--830, Oct. 2011.

\bibitem{https://doi-org.libdata.lib.ua.edu/10.1002/aisy.201900186}
W.~L. Scott and D.~A. Paley, ``Geometric gait design for a starfish-inspired robot using a planar discrete elastic rod model,'' {\em Advanced Intelligent Systems}, vol.~2, no.~6, p.~1900186, 2020.

\bibitem{1241678}
A.~Kamimura, H.~Kurokawa, E.~Toshida, K.~Tomita, S.~Murata, and S.~Kokaji, ``Automatic locomotion pattern generation for modular robots,'' in {\em 2003 IEEE International Conference on Robotics and Automation (Cat. No.03CH37422)}, vol.~1, pp.~714--720 vol.1, 2003.

\bibitem{Mazzolai_2022}
B.~Mazzolai, A.~Mondini, E.~D. Dottore, L.~Margheri, F.~Carpi, K.~Suzumori, M.~Cianchetti, T.~Speck, S.~K. Smoukov, I.~Burgert, T.~Keplinger, G.~D.~F. Siqueira, F.~Vanneste, O.~Goury, C.~Duriez, T.~Nanayakkara, B.~Vanderborght, J.~Brancart, S.~Terryn, S.~I. Rich, R.~Liu, K.~Fukuda, T.~Someya, M.~Calisti, C.~Laschi, W.~Sun, G.~Wang, L.~Wen, R.~Baines, S.~K. Patiballa, R.~Kramer-Bottiglio, D.~Rus, P.~Fischer, F.~C. Simmel, and A.~Lendlein, ``Roadmap on soft robotics: multifunctionality, adaptability and growth without borders,'' {\em Multifunctional Materials}, vol.~5, p.~032001, aug 2022.

\bibitem{Della_2023_IEEECSM}
C.~Della~Santina, C.~Duriez, and D.~Rus, ``Model-based control of soft robots: A survey of the state of the art and open challenges,'' {\em IEEE Control Systems Magazine}, vol.~43, no.~3, pp.~30--65, 2023.

\bibitem{https://doi.org/10.1002/aisy.202100165}
J.~Wang and A.~Chortos, ``Control strategies for soft robot systems,'' {\em Advanced Intelligent Systems}, vol.~4, no.~5, p.~2100165, 2022.

\bibitem{10136428}
C.~Laschi, T.~G. Thuruthel, F.~Lida, R.~Merzouki, and E.~Falotico, ``Learning-based control strategies for soft robots: Theory, achievements, and future challenges,'' {\em IEEE Control Systems Magazine}, vol.~43, no.~3, pp.~100--113, 2023.

\bibitem{robotics8010004}
S.~Bhagat, H.~Banerjee, Z.~T. Ho~Tse, and H.~Ren, ``Deep reinforcement learning for soft, flexible robots: Brief review with impending challenges,'' {\em Robotics}, vol.~8, no.~1, 2019.

\bibitem{JI2022102382}
Q.~Ji, S.~Fu, K.~Tan, S.~{Thorapalli Muralidharan}, K.~Lagrelius, D.~Danelia, G.~Andrikopoulos, X.~V. Wang, L.~Wang, and L.~Feng, ``Synthesizing the optimal gait of a quadruped robot with soft actuators using deep reinforcement learning,'' {\em Robotics and Computer-Integrated Manufacturing}, vol.~78, p.~102382, 2022.

\bibitem{10.1109/ICRA.2017.7989079}
M.~Zhang, X.~Geng, J.~Bruce, K.~Caluwaerts, M.~Vespignani, V.~SunSpiral, P.~Abbeel, and S.~Levine, ``Deep reinforcement learning for tensegrity robot locomotion,'' in {\em 2017 IEEE International Conference on Robotics and Automation (ICRA)}, p.~634–641, IEEE Press, 2017.

\bibitem{9561145}
G.~Li, J.~Shintake, and M.~Hayashibe, ``Deep reinforcement learning framework for underwater locomotion of soft robot,'' in {\em 2021 IEEE International Conference on Robotics and Automation (ICRA)}, pp.~12033--12039, 2021.

\bibitem{sutton2018}
R.~S. Sutton and A.~G. Barto, {\em Reinforcement {{Learning}}, Second Edition: {{An Introduction}}}.
\newblock MIT Press, 2nd~ed., Nov. 2018.

\bibitem{szepesvari2010}
C.~Szepesvári, ``Algorithms for {Reinforcement} {Learning},'' Synthesis {Lectures} on {Artificial} {Intelligence} and {Machine} {Learning}, (Cham), Springer International Publishing, 2010.

\bibitem{IJSPEERT2008642}
A.~J. Ijspeert, ``Central pattern generators for locomotion control in animals and robots: A review,'' {\em Neural Networks}, vol.~21, no.~4, pp.~642--653, 2008.
\newblock Robotics and Neuroscience.

\bibitem{doi:10.1089/soro.2015.0012}
B.~T. Mirletz, P.~Bhandal, R.~D. Adams, A.~K. Agogino, R.~D. Quinn, and V.~SunSpiral, ``Goal-directed cpg-based control for tensegrity spines with many degrees of freedom traversing irregular terrain,'' {\em Soft Robotics}, vol.~2, no.~4, pp.~165--176, 2015.

\bibitem{7354134}
B.~T. Mirletz, I.-W. Park, R.~D. Quinn, and V.~SunSpiral, ``Towards bridging the reality gap between tensegrity simulation and robotic hardware,'' in {\em 2015 IEEE/RSJ International Conference on Intelligent Robots and Systems (IROS)}, pp.~5357--5363, 2015.

\bibitem{doi:10.1089/soro.2018.0126}
M.~Ishige, T.~Umedachi, T.~Taniguchi, and Y.~Kawahara, ``Exploring behaviors of caterpillar-like soft robots with a central pattern generator-based controller and reinforcement learning,'' {\em Soft Robotics}, vol.~6, no.~5, pp.~579--594, 2019.
\newblock PMID: 31107172.

\bibitem{10.1145/3205455.3205583}
F.~Veenstra, J.~J\o{}rgensen, and S.~Risi, ``Evolution of fin undulation on a physical knifefish-inspired soft robot,'' in {\em Proceedings of the Genetic and Evolutionary Computation Conference}, GECCO '18, (New York, NY, USA), p.~157–164, Association for Computing Machinery, 2018.

\bibitem{soft_modular}
S.~Hamill, B.~Peele, P.~Ferenz, M.~Westwater, R.~F. Shepherd, and H.~Kress-Gazit, ``Gait synthesis for modular soft robots,'' in {\em 2016 International Symposium on Experimental Robotics} (D.~Kuli{\'{c}}, Y.~Nakamura, O.~Khatib, and G.~Venture, eds.), (Cham), pp.~669--678, Springer International Publishing, 2017.

\bibitem{bollobas_modern_2013}
B.~Bollobás, {\em Modern graph theory}, vol.~184.
\newblock Springer Science \& Business Media, 2013.

\bibitem{hierholzer_ueber_1873}
C.~Hierholzer and C.~Wiener, ``Ueber die {Möglichkeit}, einen {Linienzug} ohne {Wiederholung} und ohne {Unterbrechung} zu umfahren,'' {\em Mathematische Annalen}, vol.~6, pp.~30--32, Mar. 1873.

\bibitem{wettergreen_gait_1992}
D.~Wettergreen and C.~Thorpe, ``Gait {Generation} {For} {Legged} {Robots},'' in {\em Proceedings of the {IEEE}/{RSJ} {International} {Conference} on {Intelligent} {Robots} and {Systems}}, vol.~2, pp.~1413--1420, July 1992.
\newblock ISSN: 1.

\bibitem{rotation_gait}
S.~Hirose, H.~Kikuchi, and Y.~Umetani, ``The standard circular gait of a quadruped walking vehicle,'' {\em Advanced Robotics}, vol.~1, no.~2, pp.~143--164, 1986.

\bibitem{wang_nonparametric_2008}
Y.~Wang and G.~S. Chirikjian, ``Nonparametric {Second}-order {Theory} of {Error} {Propagation} on {Motion} {Groups}:,'' {\em The International Journal of Robotics Research}, Nov. 2008.
\newblock Publisher: SAGE PublicationsSage UK: London, England.

\bibitem{johnson_finding_1975}
D.~Johnson, ``Finding {All} the {Elementary} {Circuits} of a {Directed} {Graph},'' {\em SIAM Journal on Computing}, vol.~4, pp.~77--84, Mar. 1975.

\bibitem{GOPALAKRISHNAN201551}
H.~Gopalakrishnan and D.~Kosanovic, ``Operational planning of combined heat and power plants through genetic algorithms for mixed 0–1 nonlinear programming,'' {\em Computers \& Operations Research}, vol.~56, pp.~51--67, 2015.

\bibitem{10.5555/3618408.3618810}
A.~Ferber, T.~Huang, D.~Zha, M.~Schubert, B.~Steiner, B.~Dilkina, and Y.~Tian, ``Surco: learning linear surrogates for combinatorial nonlinear optimization problems,'' in {\em Proceedings of the 40th International Conference on Machine Learning}, ICML'23, JMLR.org, 2023.

\bibitem{ARORA2017683}
J.~S. Arora, ``Chapter 15 - discrete variable optimum design concepts and methods,'' in {\em Introduction to Optimum Design (Fourth Edition)} (J.~S. Arora, ed.), pp.~683--706, Boston: Academic Press, fourth edition~ed., 2017.

\bibitem{freeman_topology_2022}
C.~Freeman, M.~Maynard, and V.~Vikas, ``Topology and morphology design of spherically reconfigurable homogeneous modular soft robots,'' {\em Soft Robotics}, vol.~10, no.~1, pp.~52--65, 2023.
\newblock PMID: 35796705.

\bibitem{arun_least-squares_1987}
K.~S. Arun, T.~S. Huang, and S.~D. Blostein, ``Least-{Squares} {Fitting} of {Two} 3-{D} {Point} {Sets},'' {\em IEEE Transactions on Pattern Analysis and Machine Intelligence}, vol.~PAMI-9, pp.~698--700, Sept. 1987.
\newblock Conference Name: IEEE Transactions on Pattern Analysis and Machine Intelligence.

\bibitem{Masuda_2017jrm}
Y.~Masuda and M.~Ishikawa, ``Simplified triped robot for analysis of three-dimensional gait generation,'' {\em Journal of Robotics and Mechatronics}, vol.~29, no.~3, pp.~528--535, 2017.

\bibitem{7989652}
D.~Drotman, S.~Jadhav, M.~Karimi, P.~de~Zonia, and M.~T. Tolley, ``3d printed soft actuators for a legged robot capable of navigating unstructured terrain,'' in {\em 2017 IEEE International Conference on Robotics and Automation (ICRA)}, pp.~5532--5538, 2017.

\bibitem{MAO2014400}
S.~Mao, E.~Dong, H.~Jin, M.~Xu, S.~Zhang, J.~Yang, and K.~H. Low, ``Gait study and pattern generation of a starfish-like soft robot with flexible rays actuated by smas,'' {\em Journal of Bionic Engineering}, vol.~11, no.~3, pp.~400--411, 2014.

\end{thebibliography}

\end{document}